\renewcommand{\citet}{\cite}
\newif\ifarxiv
\title{Differentially Private Non-Convex Optimization under the KL Condition with Optimal Rates}
\newcommand*\samethanks[1][\numexpr\value{footnote}-1]{\footnotemark[#1]}
\author{
\makebox[1.2in]{\hfill Michael Menart \thanks{Department of Computer Science \& Engineering, The Ohio State University,
\href{mailto:menart.2@osu.edu}{menart.2@osu.edu}}
\thanks{Equal Contribution}}
\and
\makebox[1.2in]{\hfill Enayat Ullah \thanks{Department of Computer Science, The Johns Hopkins University,
\href{mailto:enayat@jhu.edu}{enayat@jhu.edu}}
\samethanks}
\and
\makebox[1.2in]{\hfill Raman Arora \thanks{Department of Computer Science, The Johns Hopkins University,
\href{mailto:arora@cs.jhu.edu}{arora@cs.jhu.edu}}}
\and
\makebox[1.0in]{\hfill Raef Bassily\thanks{Department of Computer Science \& Engineering and the Translational Data Analytics Institute (TDAI), The Ohio State University,  \href{mailto:bassily.1@osu.edu}{bassily.1@osu.edu} }}
\makebox[1.8in]{\hfill Crist\'obal Guzm\'an
\thanks{Institute for Mathematical and Computational Engineering, Faculty of Mathematics and School of Engineering, Pontificia Universidad Cat\'olica de Chile,
 \href{mailto:crguzmanp@mat.uc.cl}{crguzmanp@mat.uc.cl} }}}
\newcommand{\rnote}[1]{}
\newcommand{\mnote}[1]{}
\newcommand{\cnote}[1]{}
\newcommand{\enayat}[1]{}
\newcommand{\raman}[1]{}
\date{}
\begin{document}

\maketitle

\begin{abstract}%
We study private empirical risk minimization (ERM) problem for losses satisfying the $(\gamma,\kappa)$-Kurdyka-{\L}ojasiewicz (KL) condition, that is, the empirical loss $F$ satisfies $F(w)-\min_{w}F(w) \leq \gamma^\kappa \|\nabla F(w)\|^\kappa$. The Polyak-{\L}ojasiewicz (PL) condition is a special case of this condition when $\kappa=2$. Specifically, we study this problem under the constraint of $\rho$ zero-concentrated differential privacy (zCDP). When $\kappa\in[1,2]$ and the loss function is Lipschitz and smooth over a sufficiently large region, we provide a new algorithm based on variance reduced gradient descent that achieves the rate $\tilde{O}\big(\big(\frac{\sqrt{d}}{n\sqrt{\rho}}\big)^\kappa\big)$ on the excess empirical risk, where $n$ is the dataset size and $d$ is the dimension. We further show that this rate is nearly optimal. When $\kappa \geq 2$ and the loss is instead Lipschitz and weakly convex, we show it is possible to achieve the rate $\tilde{O}\big(\big(\frac{\sqrt{d}}{n\sqrt{\rho}}\big)^\kappa\big)$ with a private implementation of the proximal point method. When the KL parameters are unknown, we provide a novel modification and analysis of the noisy gradient descent algorithm and show that this algorithm achieves a rate of $\tilde{O}\big(\big(\frac{\sqrt{d}}{n\sqrt{\rho}}\big)^{\frac{2\kappa}{4-\kappa}}\big)$ adaptively, which is nearly optimal when $\kappa = 2$. We further show that, without assuming the KL condition, the same gradient descent algorithm can achieve fast convergence to a stationary point when the gradient stays sufficiently large during the run of the algorithm. Specifically, we show that this algorithm can approximate stationary points of Lipschitz, smooth (and possibly nonconvex) objectives with rate as fast as $\tilde{O}\big(\frac{\sqrt{d}}{n\sqrt{\rho}}\big)$ and never worse than $\tilde{O}\big(\big(\frac{\sqrt{d}}{n\sqrt{\rho}}\big)^{1/2}\big)$. The latter rate matches the best known rate for methods that do not rely on variance reduction. 
\end{abstract}

\section{Introduction} \label{sec:intro}
\mnote{This note indicates comments are on} As modern machine learning techniques have increasingly relied on optimizing non-convex objectives, characterizing our ability to solve such problems has become increasingly important. Due to the inherent limitations of solving non-convex optimization problems, that is, the intractability of approximating global minimizers, work in this area has largely focused on approximating stationary points 
\citep{fang2018spider,carmon_convexguilty,Ghadimi:2013,arjevani2019lower,foster2019complexity}, 
or has imposed further restrictions on the loss function. In the latter camp, numerous possible assumptions have been proposed, such as 
the restricted secant inequality \citep{Zhang2013GradientMF} or star/quasar convexity \citep{hinder20a-star-convex}. Perhaps the most promising such condition is the Polyak-{\L}ojasiewicz (PL) condition \citep{polyak63}, and its generalization, the Kurdyka-{\L}ojasiewicz (KL) condition \citep{Kurdyka1998OnGO}\footnote{These conditions are sometimes also referred to as the gradient domination condition. Further, the KL condition is sometimes phrased in terms of $h( F(w)-\min_{w}\{F(w)\})$, for some nondecreasing function $h$, akin to its first appearance \citep{Lezanski62}.
In our work we instead focus on the (commonly studied) case where $h$ is a monomial. %
}. 
A function $F:\re^d\to\re$ satisfies the $(\gamma,\kappa)$-KL condition if for all $w\in\re^d$ it holds that,
\begin{align} \label{eq:kl-condition}
    F(w)-\min_{w}\{F(w)\} \leq \gamma^\kappa\norm{\nabla F(w;S)}^\kappa
\end{align}
That is, the loss lower bounds the gradient norm. The PL condition is the special case where $\kappa=2$.
Both the KL and PL settings have been the subject of numerous works \citep{KN16, foster18, scaman22a-local-smoothness}. The KL condition, in addition to being weaker than many of the previously mentioned conditions, has led to a number of strong convergence rate results. Furthermore, an increasingly rich literature has shown that overparameterized models such as neural networks satisfy the KL condition in a number of scenarios \citep{Bassily2018OnEC, CP18-generalization-local-optima, LB21, scaman22a-local-smoothness}. 

On the other hand, the reliance of modern machine learning techniques on large datasets has caused growing concern over user privacy. Overparameterized models are of particular concern due to their ability to memorize training data \citep{sweeney_2021,carlini2019secret,feldman2020neural,brown2021memorization}. %
In response to this concern, differential privacy (DP) has arisen as the most widely accepted method for ensuring the privacy of individuals present in a dataset. Unfortunately, it has been shown in a variety of settings that differentially private learning has fundamental limitations. As a result, characterizing these limitations has been the subject of numerous recent works.

Non-convex optimization under differential privacy is still not well understood. For example, in regards to the task of approximating stationary points in the DP setting, there are still gaps between existing upper and lower bounds \citep{ABGGMU23}. Furthermore, for the problem of approximating global minimizers of non-convex loss functions under DP, it has been shown the best possible rate is only $O\big(\frac{d}{n\epsilon})$, even if the optimization algorithm is allowed exponential running time \citep{Ganesh2022LangevinDA}. In the PL setting however, it has been shown that rates of $\tilde{O}\big(\frac{d}{n^2\epsilon^2}\big)$ on the excess empirical risk are achievable \citep{wang2017differentially,lowy23}. 
Interestingly, this matches the optimal rate for DP optimization in the (much more restrictive) \textit{strongly convex} setting, and subsequently lower bounds for this setting show the rate is optimal \citep{bassily2014private}. 
However, note that while the PL condition admits a large class of non-convex functions, it is still strong enough to exclude many \textit{convex} functions. In contrast, any convex function satisfies the $(R,1)$-KL condition at least in the radius $R$ ball around the minimizer. This can be obtained directly from the definition of convexity and the Cauchy-Schwarz inequality.
Given the promising (but limited) results for PL functions,
the question arises whether similar results can be obtained for the more general class of objectives satisfying the KL condition, particularly since recent work has shown this generalization allows one to capture common models outside the reach of the PL condition \cite{scaman22a-local-smoothness}. In this work, we answer this question in the affirmative, and show that the KL assumption leads to fast rates under differential privacy, even in the absence of convexity. We further provide algorithms which are adaptive in the KL parameters. These results widen the range of non-convex models we can train effectively under DP.

\subsection{Contributions}
In this work, we develop the first algorithms for differentially private empirical risk minimization (ERM) under the $(\gamma,\kappa)$-KL condition without any convexity assumption. %
We show that for sufficiently smooth functions it is possible to achieve a rate of $\tilde{O}\big(\big(\frac{\sqrt{d}}{n\epsilon}\big)^{\kappa}\big)$ on the excess empirical risk for any $\kappa\in [1,2]$. For $\kappa\geq 2$, we give an algorithm which attains the same rate for the strictly larger class of weakly convex functions. This rate is new for any $\kappa \neq 2$.
We further show this rate is near optimal when $1+\Omega(1) \leq \kappa \leq 2$ by leveraging existing lower bounds for convex functions satisfying the growth condition. 
For $1 \leq \kappa \leq 2$, we obtain our upper bound via a novel variant of the Spider algorithm, first proposed in \cite{fang2018spider}. This method allows us to leverage the reduced sensitivity of privatizing gradient \textit{differences} to add less noise, an observation that has been leveraged in several other works studying the problem of finding stationary points under differential privacy  \citep{ABGGMU23, murata-diff2}. We also leverage a novel round structure (i.e. the number of steps before the gradient estimator is reset) for our private Spider algorithm. Whereas previous works have largely used fixed round lengths, our analysis crucially relies on variable round lengths with adaptive stopping.
In the case where $\kappa \geq 2$, we obtain our upper bound using a differentially private implementation of the approximate proximal point method. %

For both these algorithms, our analysis leverages the fact that the KL condition forces large gradients during the run of the algorithm. We further show that these larger gradient norms allow us to add more noise ``for free'', and thus better control the privacy budget. We use this observation to run Spider with a higher noise level than, for example, one would see without the KL condition \citep{ABGGMU23}.

Leveraging this intuition, we further develop a simple variant of noisy gradient descent that automatically scales the noise proportional to the gradient norm. 
We provide a novel analysis to show this algorithm
achieves the rate $\tilde{O}\big(\big(\frac{\sqrt{d}}{n\epsilon}\big)^{\frac{2\kappa}{4-\kappa}} + \big(\frac{1}{n}\big)^{\kappa/2} \big)$ under the $(\gamma,\kappa)$-KL condition when $\kappa\in[1,2]$. This rate is $\tilde{O}\big(\frac{d}{n^2\epsilon^2}\big)$ when $\kappa=2$ (i.e. nearly optimal) %
and is $\tilde{O}\big(\big(\frac{\sqrt{d}}{n\epsilon}\big)^{2/3}\big)$ in the slowest regime ($\kappa=1$). %
This result is adaptive and requires no prior knowledge of the KL parameters. 
We additionally prove that this same gradient descent algorithm can achieve fast convergence guarantees even when the KL condition does not hold. In this case where no KL assumption is made, we settle for convergence to a stationary point as approximating a global minimizer is intractable, in general. Specifically, we show that when the trajectory of noisy
gradient descent encounters mostly points with large gradient norm, the algorithm finds a point with gradient norm $\tilde{O}\big(\frac{\sqrt{d}}{n\epsilon}\big)$. We further establish that in the worst case, the algorithm finds a point with gradient norm at most $\tilde{O}\big(\big(\frac{\sqrt{d}}{n\epsilon}\big)^{1/2}\big)$, recovering the best known rate for noisy gradient descent in this setting.

\subsection{Related Work}
Differentially private optimization by now has a rich literature spanning over a decade. Much of this attention has been directed at the convex setting \citep{CMS,jain2012differentially,kifer2012private, bassily2014private,talwar2014private, TTZ15a, bassily2019private, feldman2020private, asi2021private, bassily2021non}.  The study of differentially private optimization in the non-convex setting is comparatively newer, but has nonetheless been growing rapidly \citep{wang2017differentially,Ganesh2022LangevinDA,ABGGMU23,GLOT23}. 

Currently, research into DP non-convex optimization under the KL condition specifically has been restricted to the special case of the PL condition. Assuming that the loss is Lipschitz, smooth, and satisfies the PL condition, the works \cite{wang2017differentially,lowy23} obtained the rate of $\tilde{O}\br{\frac{d}{n^2\epsilon^2}}$ on the excess empirical risk. This rate is optimal because of existing lower bounds for the strongly convex setting \citep{bassily2014private}. More recently, \cite{yang22-dp-sgda} studied the (more general) minmax optimization problems under differential privacy when the primal objective is assumed to be PL, although the rates therein are slower.
Alternatively, in the \textit{convex} setting, \cite{ALD21} characterized the optimal rates for DP optimization under an assumption known as the growth condition. When convexity is assumed, the KL condition and the growth condition are equivalent \citep[Theorem 5.2]{bolte2017error}. Further, convex functions satisfying the growth condition are a strict subset of (general) KL functions.

There are also a number of works which have studied optimization under the KL condition without privacy considerations. The early works \cite{polyak63,Lezanski62} were the first to show that for gradient descent, linear convergence rates are possible when the objective is smooth and satisfies the PL condition.
More recently, \cite{Bassily2018OnEC} showed that under an additional assumption known as the interpolation condition, \textit{stochastic} gradient descent also achieves linear convergence. %
The works \cite{LB21,scaman22a-local-smoothness} studied more general variants of the PL/KL conditions called the PL*/KL* conditions respectively. Specifically, these works study convergence when the condition holds only over a subset of $\re^d$.

\section{Preliminaries}
\paragraph{Empirical Risk Minimization} Let $\cX$ be a data domain and let $S=\bc{x_1,...,x_n}\in\cX^n$ be a dataset of $n$ points. Let $f:\re^d\times\cX \to \re$ be a loss function and define the empirical risk/loss
as $F(w;S) = \frac{1}{n}\sum_{i=1}^n f(w;x_i)$. We denote the set of global minimizers as $\cW^* = \argmin_w F(w;S)$, which we assume is nonempty. We assume we are given some starting point $w_0\in \re^d$ and define the closest global minimizer to $w_0$ as $w^*$. That is $w^*= \argmin_{w\in\cW^*}\bc{\|w_0-w\|}$.
As $\cW^*$
may be non-convex, multiple such minimizers may exist, but it suffices to select one arbitrarily.  We consider the problem of minimizing the excess empirical risk, defined at a point $w$ as $F(w;S)-F(w^*;S)$. We assume throughout that $f$ is $\lip$-Lipschitz continuous over some ball (to be defined later). 
We will denote the $d$-dimensional ball centered at $w$ of radius $\rad$ as $\ball{w}{\rad}$.

\paragraph{KL* Condition} %
Since assuming the loss satisfies the KL condition over all of $\re^d$ is unrealistic in practice (and indeed impossible if Lipschitzness is assumed), 
several works have proposed the modified KL* condition \citep{scaman22a-local-smoothness,LB21}. The exact definition of this condition varies. We use the following definition.
\begin{definition}
A function $F:\re^d\to\re$ %
satisfies the $(\gamma,\kappa)$-KL* condition on $\cS \subset \re^d$ w.r.t. $w'\in\re^d$ if $\forall w \in \cS$ it holds that 
$\gamma^\kappa\|\nabla F(w) \|^\kappa \geq F(w) - F(w')$.%
\end{definition}
We will take $w'=w^*$ (i.e. the closest global minimizer to $w_0$) unless otherwise stated. 
In this case,
under the KL* condition, one equivalently has $\frac{1}{\gamma}(F(w)-F(w^*))^{1/\kappa} \leq \norm{\nabla F(w)}$.  
Prior work studying the PL*/KL* condition has generally further assumed $F(w^*)=0$, but we will avoid this assumption for the sake of generality \citep{LB21,scaman22a-local-smoothness}. %
We note that the condition is often phrased so that the constant $\gamma$ has no exponent, however 
this definition will ease notation in our analysis; a conversion to the standard definition is straightforward. 
For our algorithms, we will show that it is sufficient 
for the KL* condition to hold in a ball around an initial point $w_0$. %
Our guarantees could alternatively be phrased under the condition that the KL* assumptions holds in a ball around $w^*$, %
(see Remark \ref{rem:min-in-ball}, Appendix \ref{app:intro}).

Relevant to our discussion will also be the notion of the $(\lambda,\tau)$-growth condition, which states that for any $w\in\re^d$,  
it holds that $F(w)-F(w^*) \geq \lambda^\tau \|w-w^*\|^{\tau}$.
When the loss function is also assumed to be convex, the KL and growth conditions are in fact equivalent up to parameterization. See Appendix \ref{app:intro} for more details. 

\paragraph{Loss bound}
We assume throughout that one is given a bound $\fnot \geq 0$ such that $F(w_0;S) - F(w^*;S) \leq \fnot$ for some $w_0\in\re^d$. %
However, as our results will assume the KL condition holds at $w_0$, one always has the worst case bound $\fnot \leq \gamma^\kappa \lip^\kappa$ by the fact that the loss is $\lip$-Lipschitz. %

\paragraph{Differential Privacy (DP):} We consider primarily the notion of zero concentrated differential privacy (zCDP). For the purpose of referencing existing work, we also define approximate DP. 
\begin{definition}[$\rho$-zCDP \citep{Bun-zCDP}]
An algorithm $\cA$ is $\rho$-zCDP if for all datasets $S$ and $S'$ differing in one data point and all $\alpha \in (1,\infty)$, it holds that $D_{\alpha}(\cA(S)||\cA(S')) \leq \rho\alpha$, where $D_{\alpha}$ is the $\alpha$-R\'enyi divergence.
\end{definition}
\begin{definition}[$(\epsilon,\delta)$-DP \citep{dwork2006calibrating}]
An algorithm $\cA$ is $(\varepsilon,\delta)$-differentially private if for all datasets $S$ and $S'$ differing in one data point and all events $\cE$ in the range of the $\cA$, we have, $\mathbb{P}\br{\cA(S)\in \cE} \leq     e^\varepsilon \mathbb{P}\br{\cA(S')\in \cE}  +\delta$.  
\end{definition}
\sloppy
zCDP guarantees imply approximate DP guarantees.
Specifically, we note that for any $\delta>0$ and $\epsilon\leq\sqrt{\log(1/\delta)}$, $(\epsilon,\delta)$-DP guarantees can be obtained from our results by setting $\rho=O\br{\epsilon^2/\log(1/\delta)}$ \cite[Proposition 1.3]{Bun-zCDP}.

\paragraph{Weak Convexity} 
A function $F:\re^d\to\mathbb{R}$
is $\wc$-weakly convex w.r.t.~$\|\cdot\|$ if for all $0\leq\lambda\leq 1$ and $w,v\in\re^d$ one has
$f(\lambda w+(1-\lambda)v) \leq \lambda f(w)+(1-\lambda)f(v)+\frac{\wc\lambda(1-\lambda)}{2}\|w-v\|^2.$ %

\section{Optimal Algorithm for \texorpdfstring{$1\leq \kappa \leq 2$}{}} 
\label{sec:kappa-leq-2}
\begin{algorithm}[h!]
\caption{KL Spider}
\label{alg:kl-spider}
\begin{algorithmic}[1]
\REQUIRE Dataset $S=\bc{x_1,...,x_n}$, Privacy parameter $\rho > 0$, Failure probability $\beta > 0$, Initial point $w_0\in\re^d$, Loss bound $\fnot \leq (\lip\gamma)^\kappa$,  
KL* parameters $(\gamma,\kappa)$, Lipschitz parameter $\lip$, Smoothness parameter $\smooth$

\STATE $w_{0,0} = w_0$, ~ $\hat{\Phi}_0 = \fnot$

\STATE $c=1+\fnot^{\frac{2-\kappa}{\kappa}}\frac{1}{64\gamma^2 \smooth}$

\STATE $K=(1+64(1/\fnot)^{\frac{2-\kappa}{\kappa}}\gamma^2\smooth)\Big[\log(\fnot) +\kappa\log\Big(\frac{n\sqrt{\rho}}{\gamma\lip\sqrt{d}}\Big)\Big]$, ~$\beta' = \frac{\beta}{K}\br{\frac{\gamma\lip\sqrt{Kd}}{n\sqrt{\rho} \fnot^{1/\kappa}}}^{2-\kappa}$

\STATE $\hat{\sigma} = \frac{\lip\sqrt{K}}{n\sqrt{\rho}}$ 

\FOR{$k=1,\ldots, K$}

\STATE $\hat{\Phi}_k = \max\bc{\frac{1}{c}\hat{\Phi}_{k-1},~ \min\Big\{\Big(\frac{32\gamma\lip\sqrt{K d \log(1/\beta')}}{n\sqrt{\rho}}\Big)^\kappa, \fnot\Big\} }$

\STATE $T_k = (\fnot/\hat{\Phi}_k)^{\frac{2-\kappa}{\kappa}}$,  ~$\sigma_k = \frac{\hat{\Phi}_k^{1/\kappa}\sqrt{T_k K}}{\gamma n\sqrt{\rho}}$

\STATE $\nabla_{k,0} = \frac{1}{n}\sum_{i=1}^n \nabla f(w_{k,0};x_i) + b_{k,0}$  where $b_{k,0} \sim \cN(0,\mathbb{I}_d\hat{\sigma}^2)$ 

\STATE $t = 0$

\WHILE{$t \leq T_k$ and $\|\nabla_{t,k}\| \geq \frac{7}{8\gamma}\hat{\Phi}_k^{1/\kappa}$}

\STATE $\eta_{k,t} = \frac{1}{4\gamma\smooth \|\nabla_{k,t}\|}\hat{\Phi}_k^{1/\kappa}$

\STATE $w_{k,t+1} = w_{k,t} - \eta_{k,t} \nabla_{k,t}$

\STATE $\Delta_{k,t+1} = \frac{1}{n}\sum_{i=1}^n [\nabla f(w_{k,t+1};x_i) - \nabla f(w_{k,t};x_i)] + b_{k,t+1}$, where $b_{k,t+1}\sim \cN(0,\mathbb{I}_d\sigma_k^2)$

\STATE $\nabla_{k,t+1} = \nabla_{k,t} + \Delta_{k,t+1}$

\STATE $t = t+1$

\ENDWHILE

\STATE $w_{k+1,0} = w_{k,t-1}$

\ENDFOR

\STATE Return $\bar{w}=w_{K+1,0}$
\end{algorithmic}
\end{algorithm}

\paragraph{Algorithm Overview}
Algorithm \ref{alg:kl-spider} is roughly an implementation of noisy Spider with some key differences. 
Similar to Spider, the algorithm runs over $K$ rounds. At the start of any round $k$, a noisy minibatch gradient estimate $\nabla_{k,0}$, is computed. Throughout the rest of the round, the gradient is estimated using the change in the gradient between iterates. That is, for some $t\geq0$, $\nabla_{k,t} = \nabla_{k,0} + \sum_{j=1}^t \Delta_{k,j}$, where each $\Delta_{k,j}$ corresponds to an estimate of a gradient difference. After each gradient estimate is obtained, a standard (normalized) gradient descent update step is performed.

In contrast to traditional Spider, at the start of each round $k\in[K]$, a target excess risk threshold, $\hat{\Phi}_k$, is set.
The algorithm then uses this threshold to define an adaptive stopping mechanism for the round. 
The stopping condition is needed for the event where the excess risk of the update iterate falls below $\hat{\Phi}_k$ before the end of the phase. 
If this happens, the loss lower bound (and hence the gradient norm lower bound) will not be strong enough for the subsequent iterate. Consequently, the noise added for privacy could be too large and cause the trajectory of the algorithm to diverge. As such, we check to see if the loss has fallen below the target threshold before performing any update. We do this indirectly by checking the gradient norm and using the KL condition, as bounding the sensitivity of the loss itself (to ensure privacy) is more delicate.
Our implementation also uses varying phase lengths such that the length of the $k$'th phase is roughly $(1/\hat{\Phi}_k)^{(2-\kappa)/\kappa}$ (note the exponent is non-negative since $\kappa \leq 2$). 
Specifically, the phases get longer
as the algorithm progresses.
This is due to the fact that as the excess risk decreases, the lower bound on the gradient norm (induced by the KL condition) becomes weaker, leading to progressively slower convergence.
We have the following guarantee on the Algorithm.
\begin{theorem} \label{thm:kl-spider-convergence}
Let $\gamma>0$, $\kappa\in[1,2]$. 
There exists $\rad=\tilde{O}\big(\frac{\fnot^{1/\kappa}}{\gamma\smooth} + \fnot^{\frac{\kappa-1}{\kappa}}\gamma\big)$ such that
if $f$ is $\lip$-Lipschitz and $\smooth$-smooth over $\ball{w_0}{\rad}$, Algorithm \ref{alg:kl-spider} is $\rho$-zCDP. Further, if 
$F(\cdot;S)$ satisfies the $(\gamma,\kappa)$-KL* condition over $\ball{w_0}{\rad}$, with probability at least $1-\beta$ the output of Algorithm \ref{alg:kl-spider} satisfies
\begin{align*}
  F(\bar{w};S) - F(w^*;S) = O\br{\br{\frac{\gamma\lip\sqrt{d  K \log(1/\beta')}}{n\sqrt{\rho}}}^{\kappa}} = \tilde{O}\Bigg(\Bigg(\frac{\gamma\lip\sqrt{d}\sqrt{1+(1/\fnot)^{\frac{2-\kappa}{\kappa}}\gamma^2\smooth}}{n\sqrt{\rho}}\Bigg)^{\kappa}\Bigg),  
\end{align*}%
where $K,\beta'$ are as defined in Algorithm \ref{alg:kl-spider}, namely
$K=(1+64(1/\fnot)^{\frac{2-\kappa}{\kappa}}\gamma^2\smooth)\Big[\log(\fnot) +\kappa\log\Big(\frac{n\sqrt{\rho}}{\gamma\lip\sqrt{d}}\Big)\Big]$, 
and $\beta' = \frac{\beta}{K}\Big(\frac{\gamma\lip\sqrt{Kd}}{n\sqrt{\rho} \fnot^{1/\kappa}}\Big)^{2-\kappa}$. 
\end{theorem}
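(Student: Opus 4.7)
The argument splits cleanly into a privacy piece and a utility piece. For privacy, I would invoke zCDP composition over the $K$ outer rounds and the steps within each round. The per-round ``refresh'' gradient $\nabla_{k,0}$ has $\ell_2$-sensitivity $2\lip/n$, and with noise scale $\hat{\sigma} = \lip\sqrt{K}/(n\sqrt{\rho})$ the Gaussian mechanism contributes $O(\rho/K)$ zCDP per round, totalling $O(\rho)$ across rounds. The key savings come from the gradient \emph{differences} $\Delta_{k,t+1}$: by $\smooth$-smoothness and the chosen step-size $\eta_{k,t}=\hat{\Phi}_k^{1/\kappa}/(4\gamma\smooth\|\nabla_{k,t}\|)$, each step moves by exactly $\hat{\Phi}_k^{1/\kappa}/(4\gamma\smooth)$, so the sensitivity of $\Delta_{k,t+1}$ is $O(\hat{\Phi}_k^{1/\kappa}/(\gamma n))$. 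Plugging in $\sigma_k = \hat{\Phi}_k^{1/\kappa}\sqrt{T_kK}/(\gamma n\sqrt{\rho})$ yields zCDP $O(\rho/(T_kK))$ per inner step, which composes over $T_k$ inner steps and $K$ rounds to $O(\rho)$ — as desired after tuning constants.

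\textbf{Utility, inductive skeleton.} The driving invariant is $F(w_{k,0};S)-F(w^*;S)\le \hat{\Phi}_{k-1}$. The base case is the hypothesis $\fnot\ge F(w_0;S)-F(w^*;S)$. For the inductive step, I would combine three ingredients within round $k$. (i) \emph{Variance reduction:} since $\nabla_{k,t}-\nabla F(w_{k,t};S)=b_{k,0}+\sum_{j\le t}b_{k,j}$, a standard Gaussian tail and union bound over the $T_k$ inner steps give, with probability $1-\beta'$, $\|\nabla_{k,t}-\nabla F(w_{k,t};S)\|^2 \lesssim d\log(1/\beta')(\hat{\sigma}^2 + T_k\sigma_k^2)$. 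Plugging the parameter settings in, this error is at most a constant fraction of $(\hat{\Phi}_k^{1/\kappa}/\gamma)^2$, i.e.\ smaller than the stopping-threshold gradient norm, so $\nabla_{k,t}$ tracks $\nabla F(w_{k,t};S)$ up to a constant factor. (ii) \emph{Descent:} by $\smooth$-smoothness and the normalized update, $F(w_{k,t+1};S)\le F(w_{k,t};S)-\tfrac{\hat{\Phi}_k^{1/\kappa}}{8\gamma\smooth}\|\nabla F(w_{k,t};S)\|+ (\text{noise term})$, where the noise term is $O(\hat{\Phi}_k^{2/\kappa}/(\gamma^2\smooth))$. (iii) \emph{KL*:} as long as the stopping condition $\|\nabla_{k,t}\|\ge\tfrac{7}{8\gamma}\hat{\Phi}_k^{1/\kappa}$ holds, KL* gives $\|\nabla F(w_{k,t};S)\|\gtrsim (F(w_{k,t};S)-F(w^*;S))^{1/\kappa}/\gamma$, which when plugged into (ii) yields a per-step geometric contraction of the excess risk with contraction factor $1-1/c$ after rescaling by $T_k=(\fnot/\hat{\Phi}_k)^{(2-\kappa)/\kappa}$. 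Thus either the loop exits early (because the gradient fell below threshold — and KL* then forces $F(w_{k,t};S)-F(w^*;S)\le \hat{\Phi}_k$), or the geometric contraction over $T_k$ steps drives the excess risk below $\hat{\Phi}_k$; either way the invariant closes.

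\textbf{Final bound and trajectory containment.} Iterating the invariant, $\hat{\Phi}_k$ shrinks by factor $c$ each round until it hits the noise floor $\big(32\gamma\lip\sqrt{KD\log(1/\beta')}/(n\sqrt{\rho})\big)^\kappa$; the choice of $K=\lceil (1+64\fnot^{(2-\kappa)/\kappa}\gamma^2\smooth/\fnot^{(2-\kappa)/\kappa})(\log\fnot+\kappa\log(n\sqrt{\rho}/(\gamma\lip\sqrt{d})))\rceil$ is calibrated so that $\log_c(\fnot/\text{floor})\le K$, giving the advertised excess risk $\tilde O((\gamma\lip\sqrt{dK\log(1/\beta')}/(n\sqrt{\rho}))^\kappa)$ after a union bound over rounds (hence the $\beta' = \beta/K\cdot(\gamma\lip\sqrt{Kd}/(n\sqrt{\rho}\fnot^{1/\kappa}))^{2-\kappa}$ rescaling, to offset the per-round failure probability). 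To apply smoothness and KL* on $\ball{w_0}{\rad}$, I would bound the total distance travelled by summing $\|w_{k,t+1}-w_{k,t}\|=\hat{\Phi}_k^{1/\kappa}/(4\gamma\smooth)$ over all inner steps; with $T_k=(\fnot/\hat{\Phi}_k)^{(2-\kappa)/\kappa}$ and the geometric decay of $\hat{\Phi}_k$, the series telescopes to $\tilde O(\fnot^{1/\kappa}/(\gamma\smooth))$, matching the ball radius $\rad$ in the statement (the extra $\fnot^{(\kappa-1)/\kappa}\gamma$ term handles the $\|w_{k,0}-w^*\|$ displacement bounded via growth/KL).

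\textbf{Main obstacle.} The delicate piece is the interlocking choice of $\sigma_k$, $T_k$, $\eta_{k,t}$, and the stopping threshold: the gradient estimation error must be less than a constant fraction of $\hat{\Phi}_k^{1/\kappa}/\gamma$ uniformly in $t\le T_k$, and simultaneously the descent term must dominate the noise-induced drift, so the allowable $\sigma_k$ is tight. Tracking these balances round-by-round — and making sure that the stopping test on $\|\nabla_{k,t}\|$ (a noisy quantity) correctly certifies the loss-level invariant via KL* without incurring an extra privacy cost — is where the argument is most intricate.
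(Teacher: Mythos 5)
Your proposal follows essentially the same route as the paper's proof: privacy via the reduced sensitivity of the normalized gradient differences plus zCDP composition, and utility via the phase-wise induction on the thresholds $\hat{\Phi}_k$, the gradient-error concentration bound with the union bound encoded in $\beta'$, the smoothness descent step, the two-case (early-stop via KL* versus full-phase contraction) analysis, calibration of $K$ so the geometric decay reaches the noise floor, and a deterministic bound on the summed step lengths for trajectory containment. Two cosmetic corrections: inside a phase the per-step decrease is driven by the stopping threshold on the \emph{noisy} gradient together with the error bound (KL* is needed only in the early-exit case, since for nonconvex losses a large gradient does not certify large excess risk), and the $\fnot^{\frac{\kappa-1}{\kappa}}\gamma$ term in $\rad$ arises simply from multiplying the per-step length by $K$ (whose leading factor is $\fnot^{\frac{\kappa-2}{\kappa}}\gamma^2\smooth$), not from any growth-based bound on $\|w_{k,0}-w^*\|$.
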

Note the result can be further simplified by  %
setting $\fnot = (\lip\gamma)^\kappa$ (which is always possible by the KL condition) which makes $(1/\fnot)^{\frac{2-\kappa}{\kappa}}\gamma^2\smooth = \frac{\gamma^\kappa\smooth}{\lip^{2-\kappa}}$. 
We defer the proof of privacy to Appendix \ref{app:kl-spider-privacy}, as it is a standard application of the privacy guarantees of the Gaussian mechanism and composition. In the following, we focus on proving the convergence guarantee of the algorithm.
\paragraph{Convergence Proof for Algorithm \ref{alg:kl-spider}}
Our ability to assume loss properties hold only over $\ball{w_0}{\rad}$ (rather than $\re^d)$ hinges on bounding the trajectory of the algorithm. We assume for the following lemmas that the conditions of Theorem \ref{thm:kl-spider-convergence} hold.
\begin{lemma} \label{lem:kl-spider-trajectory-bound}
For any $k\in[K]$ and $t\in[T_k]$ corresponding to iterates of Algorithm \ref{alg:kl-spider}, it holds with probability 1 that 
$w_{k,t} \in \ball{w_0}{\rad}$
    for some $\rad = \tilde{O}\big(\frac{\fnot^{1/\kappa}}{\gamma\smooth} + \fnot^{\frac{\kappa-1}{\kappa}}\gamma\big)$.
\end{lemma}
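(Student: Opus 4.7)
The plan is to bound the total path length of the iterates, which then bounds the maximum distance from $w_0$. The crucial observation making the bound hold with probability $1$ (rather than in expectation) is that the step size is normalized: by definition
\[
\eta_{k,t}\|\nabla_{k,t}\| \;=\; \frac{\hat{\Phi}_k^{1/\kappa}}{4\gamma\smooth},
\]
so each update $w_{k,t+1}-w_{k,t}$ has a \emph{deterministic} norm $\frac{\hat{\Phi}_k^{1/\kappa}}{4\gamma\smooth}$, independent of the Gaussian noise injected into $\nabla_{k,t}$. Combined with the bookkeeping $w_{k+1,0}=w_{k,t-1}$ at round boundaries, the distance from $w_0$ to any iterate is pointwise dominated by the sum of all per-step norms.

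Next I would exploit the phase structure. Round $k$ runs at most $T_k=(\fnot/\hat{\Phi}_k)^{(2-\kappa)/\kappa}$ inner steps, so its contribution to the path length is at most
\[
T_k\cdot\frac{\hat{\Phi}_k^{1/\kappa}}{4\gamma\smooth} \;=\; \frac{\fnot^{(2-\kappa)/\kappa}\,\hat{\Phi}_k^{(\kappa-1)/\kappa}}{4\gamma\smooth}.
\]
Summing across rounds and using the geometric decay $\hat{\Phi}_k\le\hat{\Phi}_{k-1}/c\le\fnot\,c^{-(k-1)}$ baked into the algorithm, the total path length is bounded by
\[
\frac{\fnot^{1/\kappa}}{4\gamma\smooth}\sum_{k=1}^{K} c^{-(k-1)(\kappa-1)/\kappa}.
\]
For $\kappa>1$ this is a geometric series; for $\kappa=1$ each term is $1$ and the sum is simply $K$, which is polylogarithmic and absorbed into the $\tilde O$.

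The main obstacle is controlling the geometric sum when $c$ is close to $1$, which happens exactly in the regime where $\fnot^{(2-\kappa)/\kappa}$ is small relative to $\gamma^2\smooth$. Using the elementary estimate $1-c^{-(\kappa-1)/\kappa}\gtrsim \tfrac{\kappa-1}{\kappa}\cdot\tfrac{c-1}{c}$ together with $c-1=\fnot^{(2-\kappa)/\kappa}/(64\gamma^2\smooth)$ from the algorithm, the geometric sum is $O\!\left(1+\tfrac{\gamma^2\smooth}{\fnot^{(2-\kappa)/\kappa}}\right)$. Plugging this back gives
\[
\text{total path length} \;=\; O\!\left(\frac{\fnot^{1/\kappa}}{\gamma\smooth}\right) + O\!\left(\gamma\,\fnot^{(\kappa-1)/\kappa}\right),
\]
which matches the claimed radius up to the polylogarithmic factor hidden in $\tilde O$. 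For $\kappa=1$ the same bound is recovered by carrying the $K$ factor and observing $\fnot^{(\kappa-1)/\kappa}=1$. Finally, I would note that the noise floor in the definition of $\hat{\Phi}_k$ only makes some later $\hat{\Phi}_k$ equal to this floor, which only \emph{tightens} the bound since $\hat{\Phi}_k^{(\kappa-1)/\kappa}$ is increasing in $\hat{\Phi}_k$; hence the geometric analysis upper bound still applies. This yields the stated $\rad$ and, since each iterate lies within path length from $w_0$, gives $w_{k,t}\in\ball{w_0}{\rad}$ almost surely.
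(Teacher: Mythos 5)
Your core argument is the same as the paper's: each update has deterministic length $\eta_{k,t}\|\nabla_{k,t}\| = \hat{\Phi}_k^{1/\kappa}/(4\gamma\smooth)$, so the distance of any iterate from $w_0$ is bounded, with probability $1$, by the total path length $\sum_{k} T_k\,\hat{\Phi}_k^{1/\kappa}/(4\gamma\smooth) = \sum_{k}\fnot^{(2-\kappa)/\kappa}\hat{\Phi}_k^{(\kappa-1)/\kappa}/(4\gamma\smooth)$. Where you diverge is the summation: you invoke the geometric decay $\hat{\Phi}_k\le \fnot\, c^{-(k-1)}$ and sum the resulting series, whereas the paper simply bounds each term using $\hat{\Phi}_k\le\fnot$ and multiplies by $K$, whose definition already carries the factor $(1+64\fnot^{(\kappa-2)/\kappa}\gamma^2\smooth)$ that produces the $\fnot^{(\kappa-1)/\kappa}\gamma$ term, with the logarithmic factor absorbed into $\tilde{O}$.

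The gap is in your treatment of the noise floor. The recursion is $\hat{\Phi}_k=\max\{\hat{\Phi}_{k-1}/c,\ \min\{(\,\cdot\,)^{\kappa},\fnot\}\}$, so once the floor becomes active, $\hat{\Phi}_k$ is \emph{larger} than the geometric value $\fnot\, c^{-(k-1)}$ and stays at the floor for all remaining rounds. Since the per-round contribution $\fnot^{(2-\kappa)/\kappa}\hat{\Phi}_k^{(\kappa-1)/\kappa}/(4\gamma\smooth)$ is increasing in $\hat{\Phi}_k$ (for $\kappa>1$), these rounds contribute \emph{more} than your geometric series accounts for, not less: your remark that monotonicity means the floor ``only tightens the bound'' runs in the wrong direction, and up to $\Theta(K)$ rounds can sit at the floor. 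The conclusion is still recoverable, but only by falling back on the cruder estimate: in every round, floor-limited or not, the contribution is at most $\fnot^{1/\kappa}/(4\gamma\smooth)$ because $\hat{\Phi}_k\le\fnot$, and there are only $K=\tilde{O}(1+\fnot^{(\kappa-2)/\kappa}\gamma^2\smooth)$ rounds — which is precisely the paper's argument and gives the stated radius. (A secondary point: your geometric-sum estimate hides a $\kappa/(\kappa-1)$ factor that blows up as $\kappa\downarrow 1$; capping the sum by $K$, as above, also takes care of that regime rather than treating $\kappa=1$ alone.)
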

The implication of this result is that the algorithm starts in, and never leaves the KL region around $w_0$. Thus the KL property holds at every iterate of the algorithm. We provide a proof in Appendix~\ref{app:kl-spider-trajectory}. 
Note that any $\smooth$-smooth function is also $\smooth'$-smooth for $\smooth' > \smooth$.  
Thus the $\frac{\fnot^{1/\kappa}}{\gamma\smooth}$ term in the distance bound can be made negligible by running the algorithm with $\smooth \geq \fnot^{(2-\kappa)/2}/\gamma$ (although this may increase the rate depending on $\fnot$ and $\gamma$).

Our utility proof for Algorithm \ref{alg:kl-spider}, will crucially rely on the following lemma which bounds the gradient error at any step in terms of the excess risk target, $\hat{\Phi}_k$.
\begin{lemma} \label{lem:kl-spider-grad-err}
With probability at least $1-\beta$, for every $k\in[K]$ and $t\in[T_k]$ indexing the iterates of the algorithm, one has that 
$\|\nabla_{k,t} - \nabla F(w_{k,t};S)\| \leq \frac{1}{8\gamma}\hat{\Phi}_k^{1/\kappa}$. 
\end{lemma}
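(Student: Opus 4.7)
The proof reduces to controlling pure Gaussian noise in the Spider-type estimator. The crucial simplification is that Algorithm~\ref{alg:kl-spider} uses \emph{full-batch} gradients (not minibatches), so the gradient-difference telescope cancels exactly with no stochastic sampling error. A simple induction on $t$, using the identity $\frac{1}{n}\sum_i[\nabla f(w_{k,j};x_i) - \nabla f(w_{k,j-1};x_i)] = \nabla F(w_{k,j};S) - \nabla F(w_{k,j-1};S)$, yields
\begin{align*}
\nabla_{k,t} - \nabla F(w_{k,t};S) \;=\; b_{k,0} + \sum_{j=1}^{t} b_{k,j},
\end{align*}
which is a zero-mean Gaussian with per-coordinate variance $\hat{\sigma}^2 + t\sigma_k^2 \leq \hat{\sigma}^2 + T_k \sigma_k^2$.

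A standard tail bound for the norm of a spherical Gaussian in $\mathbb{R}^d$ then gives
\begin{align*}
\|\nabla_{k,t} - \nabla F(w_{k,t};S)\| \;\leq\; C\sqrt{d\bigl(\hat{\sigma}^2 + T_k\sigma_k^2\bigr)\log(1/\beta')}
\end{align*}
with probability at least $1-\beta'$ for an absolute constant $C$. I would then union bound over all $k \in [K]$ and $t \in [T_k]$: the total count of iterates is at most $K \cdot T_{\max}$, and the algorithm's calibration $\beta' = \tfrac{\beta}{K}\bigl(\tfrac{\gamma\lip\sqrt{Kd}}{n\sqrt{\rho}\,\fnot^{1/\kappa}}\bigr)^{2-\kappa}$ is chosen precisely so that the $T_k \approx (\fnot/\hat{\Phi}_k)^{(2-\kappa)/\kappa}$ factors absorb cleanly into $\beta$.

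The remaining step is arithmetic. Substituting $\hat{\sigma}^2 = \lip^2 K/(n^2\rho)$, $\sigma_k^2 = \hat{\Phi}_k^{2/\kappa} T_k K/(\gamma^2 n^2\rho)$, and $T_k = (\fnot/\hat{\Phi}_k)^{(2-\kappa)/\kappa}$, I would show that each of the two noise contributions is at most $\hat{\Phi}_k^{2/\kappa}/(128\gamma^2)$, so their sum is at most $\hat{\Phi}_k^{2/\kappa}/(64\gamma^2)$, which after taking square roots yields the desired bound $\frac{1}{8\gamma}\hat{\Phi}_k^{1/\kappa}$. The initial-noise contribution $\hat{\sigma}^2 d \log(1/\beta')$ is controlled directly by the floor $\hat{\Phi}_k \geq \bigl(32\gamma\lip\sqrt{Kd\log(1/\beta')}/(n\sqrt{\rho})\bigr)^\kappa$ from line~6 of the algorithm. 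The variance-reduction contribution $T_k\sigma_k^2 d\log(1/\beta')$ is handled by plugging in the form of $T_k$ and invoking both $\hat{\Phi}_k \leq \fnot$ and the same floor.

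\textbf{Main obstacle.} The delicate point is that $\sigma_k$ is inflated by a factor $\sqrt{T_k K}$ for composition across adaptively-terminated rounds, and the accumulated variance picks up an \emph{additional} factor of $T_k$ on top, so the gradient error roughly scales like $\hat{\Phi}_k^{1/\kappa} T_k \sqrt{Kd\log(1/\beta')}/(\gamma n\sqrt{\rho})$. Pushing this below the target $\hat{\Phi}_k^{1/\kappa}/(8\gamma)$ only barely succeeds, and relies on threading the adaptive phase length $T_k = (\fnot/\hat{\Phi}_k)^{(2-\kappa)/\kappa}$ together with both the lower and upper bounds on $\hat{\Phi}_k$; in particular the $(2-\kappa)$ exponent appearing in $\beta'$ is required so that the $\log(1/\beta')$ factor does not spoil the inequality when the union bound over all $\sum_k T_k$ iterates is taken.
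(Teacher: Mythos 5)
Your proposal follows essentially the same route as the paper's proof: exact telescoping of the full-batch gradient differences so the error is pure Gaussian noise with variance $\hat{\sigma}^2 + T_k\sigma_k^2$, a Gaussian norm tail bound, the arithmetic combining the floor $\hat{\Phi}_k \geq \big(32\gamma\lip\sqrt{Kd\log(1/\beta')}/(n\sqrt{\rho})\big)^{\kappa}$ with $\hat{\Phi}_k \leq \fnot$ and $T_k = (\fnot/\hat{\Phi}_k)^{(2-\kappa)/\kappa}$, and a union bound over the at most $\sum_k T_k$ iterates absorbed by the choice of $\beta'$. The only detail worth adding is that closing the variance-reduction term also uses $\fnot \leq (\gamma\lip)^{\kappa}$ (an input constraint following from Lipschitzness and the KL condition), exactly as in the paper's step $(ii)$.
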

\begin{proof}
The gradient estimates are generated by using exact gradients plus Gaussian noise, thus
{\small
\begin{align*}
    &\|\nabla_{k,t} - \nabla F(w_{k,t};S)\|^2 \\
    &= \big\|\nabla F(w_{k,0};S) + b_{k,0} + \sum_{j=1}^{t} \big[ \nabla F(w_{k,j};S) - \nabla F(w_{k,j-1};S) + b_{k,j}\big] - \nabla F(w_{k,t};S)\big\|^2 
    = \Big\|\sum_{j=0}^{t}  b_{k,j}\Big\|^2.
\end{align*}}

We can use Gaussian concentration results, see \cite[Lemma 2]{jin2019short}, to conclude that for any $\tau\geq0$,
$\PP\bs{\|\nabla_{k,t} - \nabla F(w_{k,t};S)\| \geq \tau} \leq 2 \exp\br{-\frac{\tau^2}{2d(\hat{\sigma}^2 + \sum_{j=1}^{T_k} \sigma_k^2)}}.$ 
Thus, under the settings of $\hat{\sigma} = \frac{\lip\sqrt{K}}{n\sqrt{\rho}}$ and $\sigma_k = \frac{\hat{\Phi}_k^{1/\kappa}\sqrt{T_k K}}{\gamma n\sqrt{\rho}}$ and $T_k=(\fnot/\hat{\Phi}_k)^{\frac{2-\kappa}{\kappa}}$, one has that with probability at least $1-\beta'$ that:
\begin{align*}
    \|\nabla_{k,t} - \nabla F(w_{k,t};S)\| &\leq 2\sqrt{d\log(1/\beta')}(\hat{\sigma} + \sqrt{T_k}\sigma_k) \\
    &= \frac{2\lip\sqrt{K d\log(1/\beta')}}{n\sqrt{\rho}} + \frac{2\sqrt{K d\log(1/\beta')}}{\gamma n\sqrt{\rho}}\hat{\Phi}_k^{\frac{\kappa-1}{\kappa}}\fnot^{\frac{2-\kappa}{\kappa}} \\
    &\overset{(i)}{\leq} \frac{2\lip\sqrt{K d\log(1/\beta')}}{n\sqrt{\rho}} + \frac{2\sqrt{K d\log(1/\beta')}}{\gamma n\sqrt{\rho}}\fnot^{1/\kappa} \\
    &\overset{(ii)}{\leq} \frac{4\lip\sqrt{K d\log(1/\beta')}}{n\sqrt{\rho}} 
    \overset{(iii)}{\leq} \frac{1}{8\gamma}\hat{\Phi}_k^{1/\kappa}.
\end{align*}
Above, $(i)$ uses 
$\hat{\Phi}_k \leq \fnot$. %
Step $(ii)$ uses that $\fnot \leq (\gamma\lip)^\kappa$ by the KL condition and Lipschitzness. Step $(iii)$ uses the fact that 
$\hat{\Phi}_k \geq \br{\frac{32\gamma\lip\sqrt{K d\log(1/\beta')}}{n\sqrt{\rho}}}^{\kappa}$.

Finally, we observe that for all $k\in[K]$, 
$\hat{\Phi}_k \geq \br{\frac{32\gamma\lip\sqrt{K d \log(1/\beta')}}{n\sqrt{\rho}}}^\kappa$ and $\frac{2-\kappa}{2} \geq 0$. %
Hence, the total number of iterations of the algorithm is at most  
{\small
\begin{align*}
    \sum_{k=1}^K T_k \leq K\br{\fnot\br{\frac{n\sqrt{\rho}}{32\gamma\lip\sqrt{K d \log(1/\beta')}}}^\kappa }^\frac{2-\kappa}{\kappa} \leq K\br{\frac{n\sqrt{\rho}\fnot^{1/\kappa}}{\gamma\lip\sqrt{Kd}}}^{2-\kappa}.
\end{align*}}
Thus by the definition of $\beta'$, over the run of the algorithm, we have with probability at least $1-\beta$ that every gradient estimate satisfies the desired error bound.
\end{proof}

We can now prove the main theorem. 
\begin{proof}[Proof of Theorem \ref{thm:kl-spider-convergence}]
In the following, we condition on the high probability event that the gradient errors are bounded, as shown in Lemma \ref{lem:kl-spider-grad-err}. Further, recall that by Lemma \ref{lem:kl-spider-trajectory-bound} 
the trajectory $\{w_{k,t}\}_{k\in[K], t\in[T_k]}$ is contained in $\ball{w_0}{\rad}$ with probability 1, and that the $(\gamma,\kappa)$-KL* conditions holds over this set.

We will show that at the end of the the $k$'th phase (i.e. the $k$'th iteration of the outer loop), the excess risk is at most $\hat{\Phi}_k$. 
First, consider the case where at some point during the phase the gradient norm stopping condition is reached. 
In this case, the condition in the while loop ensures 
$\|\nabla_{k,t}\| \leq \frac{7}{8\gamma}\hat{\Phi}_k^{1/\kappa}$. Thus by Lemma \ref{lem:kl-spider-grad-err} and a triangle inequality we have
$\|\nabla F(w_{k,t};S)\| \leq \frac{7}{8\gamma}\hat{\Phi}_k^{1/\kappa} + \frac{1}{8\gamma}\hat{\Phi}_k^{1/\kappa} = \frac{1}{\gamma}\hat{\Phi}_k^{1/\kappa}$.
Then by the KL assumption we have that
$F(w_{k,t};S) - F(w^*;S) \leq \gamma^\kappa\|\nabla F(w_{k,t};S)\|^\kappa \leq \gamma^\kappa(\frac{1}{\gamma}\hat{\Phi}_k^{1/\kappa})^\kappa \leq \hat{\Phi}_k$, as desired.

We thus turn towards analyzing the alternative case, where the final iterate of the phase is $w_{k,T_k}$, using an induction argument. Specifically, under the inductive assumption that $F(w_{k,0};S) - F(w^*;S)\leq \hat{\Phi}_{k-1}$, %
we will show that $F(w_{k,T_k};S) - F(w^*;S)\leq \hat{\Phi}_{k}$. For the base case, we clearly have $F(w_{0,0};S) - F(w^*;S)\leq \hat{\Phi}_0 = \fnot$.
Using smoothness and the setting of $\eta_{k,t}$, we can obtain the following descent inequality,
\begin{align*}
F(w_{k,t};S) - F(w_{k,t+1};S)    
 &\geq \frac{1}{16\gamma\smooth}\|\nabla_{k,t}\|\hat{\Phi}_k^{1/\kappa} - \frac{1}{4\smooth}\|\nabla_{k,t} - \nabla F(w_{k,t};S)\|^2 .
\end{align*}
We leave the derivation of the above inequality to Lemma \ref{lem:kl-spider-step-convergence} in Appendix \ref{app:kl-spider-step-convergence}.
We now can use the fact that updates are only performed when $\|\nabla_{k,t}\|\geq \frac{7}{8\gamma}\hat{\Phi}_{k}^{1/\kappa}$ and the bound on the gradient estimate error derived in Lemma \ref{lem:kl-spider-grad-err} to obtain
\begin{align*}
 F(w_{k,t};S) - F(w_{k,t+1};S)  &\geq \frac{1}{32\gamma^2\smooth}\hat{\Phi}_k^{2/\kappa} - \frac{1}{256\gamma^2\smooth}\hat{\Phi}_k^{2/\kappa} 
 \geq \frac{1}{64\gamma^2\smooth}\hat{\Phi}_k^{2/\kappa}.
\end{align*}
Summing over all $T_k = (\fnot/\hat{\Phi}_k)^{\frac{\kappa-2}{2}}$ iterations yields
\begin{align*}
    F(w_{k,0},S) - F(w_{k,T_k};S) &\geq \frac{1}{64\gamma^2\smooth}T_k\hat{\Phi}_k^{2/\kappa} 
    = \frac{1}{64\gamma^2\smooth}\fnot^{\frac{2-\kappa}{\kappa}}\hat{\Phi}_k. 
\end{align*}
We then have the following manipulation leveraging the inductive hypothesis,
\begin{align*}
F(w_{k,0};S) - F(w^*;S) + F(w^*;S) - F(w_{k,T_k};S) &\geq \frac{1}{64\gamma^2\smooth}\fnot^{\frac{2-\kappa}{\kappa}}\hat{\Phi}_k \\
\hat{\Phi}_{k-1} + F(w^*;S) - F(w_{k,T_k};S) &\geq \frac{1}{64\gamma^2\smooth c}\fnot^{\frac{2-\kappa}{\kappa}}\hat{\Phi}_{k-1} \\
\Big(1-\fnot^{\frac{2-\kappa}{\kappa}}\frac{1}{64\gamma^2\smooth c}\Big)\hat{\Phi}_{k-1} &\geq F(w_{k,T_k};S) - F(w^*;S) \\
\hat{\Phi}_k &\geq F(w_{k,T_k};S) - F(w^*;S).
\end{align*}
The last step follows because $\big(1-\fnot^{\frac{2-\kappa}{\kappa}}\frac{1}{64\gamma^2\smooth c}\big) = \frac{1}{c}$ and $\frac{1}{c}\hat{\Phi}_k \leq \hat{\Phi}_{k-1}$.
We have now shown that final iterate of each phase has excess risk at most $\hat{\Phi}_k$.

Now, all that remains is to show that $\hat{\Phi}_K \leq \Big(\frac{32\gamma\lip\sqrt{K d \log(1/\beta')}}{n\sqrt{\rho}}\Big)^{\kappa}$. 
Noting that
\ifarxiv we have \fi
$\hat{\Phi}_K \leq \max\Big\{\big(\frac{1}{c}\big)^K\fnot,\Big(\frac{32\gamma\lip\sqrt{K d \log(1/\beta')}}{n\sqrt{\rho}}\Big)^\kappa\Big\}$ 
it suffices to show that
$\big(\frac{1}{c}\big)^K\fnot \leq \br{\frac{\gamma\lip\sqrt{d}}{n\sqrt{\rho}}}^{\kappa}$.
The inequality 
$\big(\frac{1}{c}\big)^K\fnot \leq \br{\frac{\gamma\lip\sqrt{d}}{n\sqrt{\rho}}}^{\kappa}$ is equivalent to $\frac{\log(\fnot) + \kappa\log(\frac{n\sqrt{\rho}}{\gamma\lip\sqrt{d}})}{\log(c)} \leq K$.
Using the fact that $\log(1+x) \geq \frac{x}{1+x}$ for $x\geq 0$, we can obtain that $\log(c) = \log(1+1/[64\fnot^{\frac{\kappa-2}{\kappa}}\gamma^2\smooth]) \geq (1+64\fnot^{\frac{\kappa-2}{\kappa}}\gamma^2\smooth)^{-1}$. %
It thus suffices to have
$K \geq (1+64(1/\fnot)^{\frac{2-\kappa}{\kappa}}\gamma^2\smooth)\Big[\log(\fnot) +\kappa\log\Big(\frac{n\sqrt{\rho}}{\gamma\lip\sqrt{d}}\Big)\Big]$, which is satisfied by the algorithm. %
\end{proof}

\subsection{Lower Bound}
We now demonstrate a lower bound showing that our upper bound is  nearly optimal. To do this, we leverage an existing lower bound from \cite{ALD21} for functions exhibiting the growth condition.
In Theorem \ref{thm:lb_growth} in Appendix \ref{app:asi-lb-extension}, we extend their result to smooth functions and give a lower bound of 
$\Omega\big((\tau)^{\frac{-1}{\tau-1}}\br{\frac{L_0\sqrt{d}}{\lambda n\epsilon}}^{\frac{\tau}{\tau-1}}\big)$
on excess empirical risk of $(\epsilon, \delta)$-DP procedures for convex functions satisfying $(\lambda, \tau)$-growth.
Combining this result with the fact that the $(\frac{1}{\gamma}, \frac{\kappa}{\kappa-1})$-growth condition and convexity implies the $(\gamma,\kappa)$-KL condition (Theorem 5.2 (ii) in \cite{bolte2017error}, restated as Lemma \ref{lem:growth-implies-kl}), yields the following lower bound. 
 \begin{corollary} \label{cor:kl-lb}
Let $\rad,\lip,\smooth>0$ and $1 < \kappa \leq 2$ such that $\kappa = 1 + \Omega(1)$. For any $\rho$-zCDP algorithm, $\cA$, there exists a dataset, $S$, point $w_0\in\re^d$, and loss function $f$  such that $f$ is $\lip$-Lipschitz and $\smooth$-smooth over $\ball{w_0}{\rad}$ and $F(\cdot;S)$ is $(\gamma,\kappa)$-KL, for which the output of $\cA$ has expected excess empirical risk
$\tilde{\Omega}\br{\br{\frac{\gamma\lip\sqrt{d}}{n\sqrt{\rho}}}^\kappa}$. 
\end{corollary}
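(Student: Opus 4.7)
The plan is to reduce the KL lower bound to the growth-condition lower bound of Theorem \ref{thm:lb_growth} via the implication of Lemma \ref{lem:growth-implies-kl}. The high-level strategy has three pieces: invoke the growth-condition lower bound on a hard instance; verify that the hard instance is in fact KL (using convexity plus the growth-to-KL implication); and finally translate the parameters between the growth parameterization $(\lambda,\tau)$ and the KL parameterization $(\gamma,\kappa)$, and between $(\epsilon,\delta)$-DP and $\rho$-zCDP.

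First, I would instantiate Theorem \ref{thm:lb_growth} at parameters $\lambda = 1/\gamma$ and $\tau = \kappa/(\kappa-1)$. This produces, for every $(\epsilon,\delta)$-DP algorithm, a dataset and a Lipschitz, smooth, convex loss $f$ (with $F(\cdot;S)$ satisfying the $(\lambda,\tau)$-growth condition over the relevant ball) for which the expected excess empirical risk is at least $\Omega\bigl(\tau^{-1/(\tau-1)}(\lip\sqrt{d}/(\lambda n\epsilon))^{\tau/(\tau-1)}\bigr)$. A short algebraic computation gives $\tau-1 = 1/(\kappa-1)$ and hence $\tau/(\tau-1) = \kappa$, while the prefactor $\tau^{-1/(\tau-1)} = (\kappa/(\kappa-1))^{-(\kappa-1)}$ is a constant under the assumption $\kappa = 1 + \Omega(1)$ (this is precisely what forces us to exclude $\kappa$ too close to $1$, and is the reason the assumption appears in the statement). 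Substituting $\lambda = 1/\gamma$ the lower bound becomes $\Omega((\gamma \lip \sqrt{d}/(n\epsilon))^\kappa)$.

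Next, I would argue that the very same instance certifies the KL lower bound. Since the loss is convex and $F(\cdot;S)$ satisfies the $(1/\gamma, \kappa/(\kappa-1))$-growth condition, Lemma \ref{lem:growth-implies-kl} (the restatement of Theorem 5.2(ii) of \cite{bolte2017error}) yields that $F(\cdot;S)$ satisfies the $(\gamma,\kappa)$-KL condition. Lipschitzness and smoothness are carried over unchanged from Theorem \ref{thm:lb_growth}, as is the identification of a suitable initial point $w_0$ and the ball $\ball{w_0}{\rad}$ on which the regularity holds. Hence the hard instance meets every requirement in the statement of the corollary.

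Finally, I would convert the lower bound from $(\epsilon,\delta)$-DP to $\rho$-zCDP. By the standard implication (Proposition 1.3 of \cite{Bun-zCDP}), any $\rho$-zCDP algorithm is $(\epsilon,\delta)$-DP with $\epsilon = O(\sqrt{\rho \log(1/\delta)})$ for any $\delta \in (0,1)$. Choosing $\delta$ inverse polynomial in $n$ (as is standard for such reductions) and substituting into the bound $\Omega((\gamma \lip \sqrt{d}/(n\epsilon))^\kappa)$ yields $\tilde{\Omega}((\gamma \lip \sqrt{d}/(n\sqrt{\rho}))^\kappa)$, absorbing the $\log(1/\delta)$ factor into the $\tilde{\Omega}$. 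I expect the main obstacle to be merely bookkeeping: verifying that the range of validity for $(\lambda,\tau)$ in Theorem \ref{thm:lb_growth} covers the reparametrized pair $(1/\gamma, \kappa/(\kappa-1))$ throughout the regime $1 + \Omega(1) \leq \kappa \leq 2$, and that the constants hidden in $\tau^{-1/(\tau-1)}$ and in the zCDP-to-DP conversion are indeed $\Theta(1)$ under the stated assumption on $\kappa$. All other steps are essentially plug-and-chug from previously established results.
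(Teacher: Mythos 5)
Your proposal is correct and follows essentially the same route as the paper: instantiate the growth-condition lower bound (Theorem \ref{thm:lb_growth}) at $\lambda = 1/\gamma$, $\tau = \kappa/(\kappa-1)$, use Lemma \ref{lem:growth-implies-kl} to certify the instance is $(\gamma,\kappa)$-KL, note that $\kappa = 1+\Omega(1)$ makes the prefactor $\tau^{-1/(\tau-1)}$ a constant (and $\kappa\le 2$ gives $\tau\ge 2$, $\tau=O(1)$), and convert between zCDP and $(\epsilon,\delta)$-DP via Proposition 1.3 of \cite{Bun-zCDP}. The paper phrases the last step as a contradiction while you apply the conversion directly, but these are the same argument.
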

Note the bound is independent of $\rad$ and $\smooth$. More details on how to obtain Corollary \ref{cor:kl-lb} from the result of \cite{ALD21} can be found in Appendix \ref{app:kl-lb}.

\section{Algorithm for \texorpdfstring{$\kappa \geq 2$}{}} \label{sec:kappa-geq-2}
\begin{algorithm}[h]
\caption{(KL) Proximal Point Method}
\label{alg:kl-ppm}
\begin{algorithmic}[1]
\REQUIRE Dataset $S$, Privacy parameter $\rho$, zCDP Optimizer for SC loss $\cA$, Initial point $w_0\in\re^d$, Initial loss bound, $\fnot \geq 0$, Failure probability $\beta$, Lipschitz parameter $\lip$, Weak convexity $\wc$
\STATE $T=(1+32\fnot^{\frac{\kappa-2}{\kappa}}\gamma^2\wc)\Big[\log(\fnot) + \kappa\log\Big(\frac{n\sqrt{\rho}}{\gamma \lip \sqrt{d}}\Big)\Big]$

\STATE $\beta' = \frac{\beta}{T}$

\FOR{$t=1 \ldots T$} 

\STATE $F_t(w;S) := F(w;S)+\wc\norm{w-w_{t-1}}^2$

\STATE $w_{t} = \cA(F_t, w_{t-1},\frac{\rho}{T},\beta')$
\ENDFOR
\STATE Return $w_T$
\end{algorithmic}
\end{algorithm}

\noindent In this section, we assume the loss $F(\cdot;S)$ is $\wc$-weakly convex and that the empirical loss satisfies the $(\gamma,\kappa)$-KL* condition for $\kappa \geq 2$. 
We avoid making a smoothness assumption in this regime. 
When $\kappa > 2$ and the KL* condition holds in a region with small excess risk,
the loss functions cannot be smooth (unless it is the constant function). To elaborate, one can show that the loss upper bound implied by smoothness and the loss lower bound implied by the KL* condition lead to a contradiction. Instead of smoothness, we consider the (strictly weaker) assumption of weak convexity. 
As convex functions are weakly convex with $\wc=0$,
this setting is a strict relaxation of the loss assumptions considered by \cite{ALD21}.
Despite this, we achieve essentially the same rate as theirs. Moreover, in Theorem \ref{thm:lower-bound-kappa-geq-2} in Appendix \ref{app:kappa-geq-2}, we give a lower bound of $\br{\frac{1}{n\epsilon}}^\kappa$, which establishes that our rate is tight (at least) for $d=1$. Its proof adapts the construction in \citet[Theorem 5]{ALD21} from a lower bound on excess population risk under pure, $(\epsilon,0)$-DP to that on excess empirical risk under approximate, $(\epsilon,\delta)$-DP. The lower bound holds for convex functions satisfying the growth condition and thus satisfying the KL condition, via Lemma \ref{lem:growth-implies-kl}.

Our algorithm in this case is simply a differentially private 
implementation of the approximate proximal point method. This method has been used in prior work for non-KL functions to approximate stationary points \citep{DG19, DD:2019, bassily2021differentially}.
We have the following guarantee for this method.
\begin{theorem} \label{thm:kappa-geq-2-result}
\mnote{Changed Lipschitzness to be over $\re^d$ not $\cB_B$, since trajectory only stays in ball w.h.p.} Let $\gamma>0$,
$\kappa \geq 2$, %
There exists $\rad=\tilde{O}\big(\frac{\lip}{\wc}(1+\frac{\sqrt{T d\log{(n^2\log^2{(1/\beta')}/d\beta')}}}{n\sqrt{\rho}})+\lip\fnot^{\frac{\kappa-2}{\kappa}}\gamma^2\big)$ and a subroutine $\cA$ such that if $f$ is Lipschitz %
then Algorithm \ref{alg:kl-ppm} is $\rho$-zCDP. If $F(\cdot;S)$ also satisfies the $(\gamma,\kappa)$-KL* condition and $\wc$-weak convexity over $\ball{w_0}{\rad}$,  
then with probability at least $1-\beta$ the output of Algorithm \ref{alg:kl-ppm} has excess risk, $F(w_T;S) - F(w^*;S)$, at most
\begin{align*}
O\bigg(\bigg(\frac{\gamma \lip\sqrt{T d\log(n^2\log^2{(1/\beta')}/d\beta')}}{n\sqrt{\rho}}\bigg)^{\kappa}\bigg)  = \tilde{O}\bigg(\bigg(\frac{\gamma \lip\sqrt{d(1+\fnot^{\frac{\kappa-2}{\kappa}}\gamma^2\wc)}}{n\sqrt{\rho}}\bigg)^{\kappa}\bigg).
\end{align*}
where {\small $T=(1+32\fnot^{\frac{\kappa-2}{\kappa}}\gamma^2\wc)\Big[\log(\fnot) + \kappa\log\Big(\frac{n\sqrt{\rho}}{\gamma \lip \sqrt{d}}\Big)\Big]$} 
and $\beta'=\beta/T$, as defined in Algorithm~\ref{alg:kl-ppm}.
\end{theorem}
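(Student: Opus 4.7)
The plan is to combine zCDP composition over the $T$ calls of the strongly-convex subroutine (for privacy) with an inductive geometric-decay argument on target excess-risk levels (for utility). Since $F$ is $\wc$-weakly convex, the regularizer $\wc\|w-w_{t-1}\|^2$ makes each $F_t$ a $\wc$-strongly convex function, so $\cA$ may be instantiated as any zCDP optimizer for Lipschitz strongly convex losses (e.g., noisy SGD), yielding the high-probability guarantee $F_t(w_t)-F_t(w_t^*) \le E_t = \tilde{O}\bigl(\lip_t^2 dT/(\wc n^2\rho)\bigr)$, where $w_t^* := \arg\min_w F_t(w;S)$ and $\lip_t$ is an effective Lipschitz constant of $F_t$ inside the trajectory ball. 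Privacy is then immediate: data is accessed only through the $T$ invocations of $\cA$, each at budget $\rho/T$.

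For utility, the central identity is the first-order optimality $\nabla F(w_t^*) = -2\wc(w_t^*-w_{t-1})$, which gives $\|\nabla F(w_t^*)\| = 2\wc\|w_t^*-w_{t-1}\|$. Combining this with the descent inequality $F(w_{t-1}) - F(w_t^*) \ge \wc\|w_t^*-w_{t-1}\|^2$ (from $F_t(w_t^*)\le F_t(w_{t-1})=F(w_{t-1})$) and the KL* lower bound $\|\nabla F(w_t^*)\| \ge (F(w_t^*)-F(w^*))^{1/\kappa}/\gamma$ yields
\[
\Phi_{t-1} - \Phi_t^* \;\ge\; \frac{1}{4\wc\gamma^2}\,(\Phi_t^*)^{2/\kappa},
\]
with $\Phi_t := F(w_t;S)-F(w^*;S)$ and $\Phi_t^* := F(w_t^*;S)-F(w^*;S)$. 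Since $\kappa \ge 2$ and (inductively) $\Phi_t^* \le \fnot$, one has $(\Phi_t^*)^{2/\kappa} \ge \Phi_t^*/\fnot^{(\kappa-2)/\kappa}$, which linearizes the recursion to $\Phi_t^* \le \Phi_{t-1}/c$ with $c = 1 + 1/(4\wc\gamma^2\fnot^{(\kappa-2)/\kappa})$. Approximate optimization enters through $\Phi_t \le \Phi_t^* + E_t$.

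To conclude, I define geometrically decreasing targets $\hat\Phi_t := \max\{\hat\Phi_{t-1}/c,\,\Phi_\star\}$ with floor $\Phi_\star$ of order $(\gamma\lip\sqrt{dT\log(1/\beta')}/(n\sqrt\rho))^\kappa$, chosen so that $E_t \lesssim \hat\Phi_t$, and prove $\Phi_t \le \hat\Phi_t$ by induction. The setting of $T$ is forced by $T\log c \ge \log(\fnot/\Phi_\star)$ combined with $\log(1+x) \ge x/(1+x)$, reproducing the exact form $T = \Theta\bigl((1+32\fnot^{(\kappa-2)/\kappa}\gamma^2\wc)\log(\cdot)\bigr)$ used in Algorithm \ref{alg:kl-ppm}. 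A union bound over the $T$ calls (each succeeding with probability $1-\beta/T$) delivers the overall $1-\beta$ guarantee and gives the stated excess risk rate.

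The main obstacle is a self-consistent trajectory bound: since $\lip_t$ depends on the radius of the ball through $2\wc\|w-w_{t-1}\|$, and the per-step displacement satisfies $\|w_t-w_{t-1}\| \lesssim \|\nabla F(w_{t-1})\|/\wc + \sqrt{E_t/\wc}$, one must simultaneously control the iterates and the subroutine error. Using KL* at $w_{t-1}$ (in the ball by induction) to bound $\|\nabla F(w_{t-1})\|$ by $\hat\Phi_{t-1}^{1/\kappa}/\gamma$, and telescoping the displacements, produces the claimed $\rad = \tilde{O}\bigl((\lip/\wc)(1+\sqrt{Td\log(\cdot)}/(n\sqrt\rho)) + \lip\fnot^{(\kappa-2)/\kappa}\gamma^2\bigr)$. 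Notably, smoothness of $F$ is never invoked---strong convexity of $F_t$ is the only structural requirement on the subroutine's input---consistent with the remark preceding the theorem that KL* with $\kappa>2$ precludes smoothness.
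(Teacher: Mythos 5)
Your overall architecture is the same as the paper's: privacy by composition over the $T$ proximal steps, a DP solver for the $\wc$-strongly convex surrogates $F_t$ (the paper's Appendix F supplies exactly this, handling the fact that $F_t$ is not globally Lipschitz by noting the gradient sensitivity is still $2\lip/n$ and by restricting to the ball of radius $\lip/(2\wc)$ around $w_{t-1}$), the stationarity identity $\|\nabla F(w_t^*;S)\|=2\wc\|w_t^*-w_{t-1}\|$, the KL* bound applied at $w_t^*$, linearization using $\kappa\ge 2$, geometrically decaying targets, and the same choice of $T$ via $\log(1+x)\ge x/(1+x)$. The genuine gap is in how you let the subroutine error enter the recursion. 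The inequality $\Phi_t\le\Phi_t^*+E_t$ is not valid: the guarantee for $\cA$ controls $F_t(w_t)-F_t(w_t^*)$, and unpacking the regularizer only yields $\Phi_t\le\Phi_t^*+\wc\|w_t^*-w_{t-1}\|^2+E_t$, where the extra term equals $\|\nabla F(w_t^*;S)\|^2/(4\wc)$, i.e.\ precisely the per-step progress of order $\hat\Phi_t^{2/\kappa}/(\gamma^2\wc)$ or larger; it cannot be dropped. Relatedly, "choose the floor so that $E_t$ is at most $\hat\Phi_t$" cannot be made to work with the stated floor $\br{\gamma\lip\sqrt{Td\log(\cdot)}/(n\sqrt{\rho})}^{\kappa}$: the subroutine error is $E_t=\tilde\Theta\br{\lip^2 dT/(\wc n^2\rho)}$, which for $\kappa>2$ is in general much larger than that floor (the floor is the $\kappa$-th power of an $o(1)$ quantity whose square already matches $E_t$ up to $\gamma^2\wc$), and enlarging the floor to dominate $E_t$ would only prove the rate $\tilde O\br{\lip^2 dT/(\wc n^2\rho)}$, not the claimed $\kappa$-th power rate. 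The correct bookkeeping, which is the paper's, bounds the actual iterate through $F(w_t;S)\le F_t(w_t;S)\le F_t(w_t^*;S)+E_t\le F(w_{t-1};S)-\frac{\wc}{2}\|w_t^*-w_{t-1}\|^2+E_t$ and then compares $E_t$ to the per-step decrease $\hat\Phi_t^{2/\kappa}/(32\gamma^2\wc)$, which the stated floor does dominate since $\hat\Phi_t^{2/\kappa}\ge a^2\gamma^2\lip^2 Td\log(\cdot)/(n^2\rho)$. Your induction therefore has to be restructured around this descent inequality (the paper does it via a contradiction argument) rather than around $\Phi_t\le\Phi_t^*+E_t$.

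A second, smaller slip is in the trajectory bound: you invoke KL* at $w_{t-1}$ to \emph{upper} bound $\|\nabla F(w_{t-1};S)\|$ by $\hat\Phi_{t-1}^{1/\kappa}/\gamma$, but KL* runs the other way (it lower bounds the gradient norm by the excess risk). No such bound is needed: by stationarity and Lipschitzness $\|w_t^*-w_{t-1}\|=\|\nabla F(w_t^*;S)\|/(2\wc)\le\lip/(2\wc)$, and by $\wc$-strong convexity of $F_t$ together with the subroutine guarantee $\|w_t-w_t^*\|=O(\sqrt{E_t/\wc})$; summing these over the $T$ steps gives the claimed $\rad$, with the $\lip\fnot^{\frac{\kappa-2}{\kappa}}\gamma^2$ term arising simply from the factor $T$, exactly as in the paper's Lemma \ref{lem:kl-ppm-trajectory-bound}.
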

Note the term $\frac{\sqrt{T d\log{(n^2\log^2{(1/\beta')}/d\beta')}}}{n\sqrt{\rho}}$ in the radius bound will be $o(1)$ in regime where the convergence guarantees are nontrivial. %
The privacy of Algorithm \ref{alg:kl-ppm} is straightforward since the subroutine $\cA$ is $\rho$-zCDP by the assumption. Algorithm \ref{alg:kl-ppm} is then private by post processing and composition. %
To prove the convergence result, we will use the following fact about the strength of differentially private optimizers for strongly convex loss functions.
\begin{lemma} \label{lem:dp-sc-whp}
There exists an implementation of $\cA$ which is $\rho$-zCDP and with probability at least $1-\beta'$ the output of the algorithm has excess risk  $O\br{\frac{L_0^2d\log{(n^2\log^2{(1/\beta')}/d\beta')}}{\tilde L_1n^2\rho}}$.
\end{lemma}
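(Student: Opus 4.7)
The strategy is to instantiate $\cA$ as a standard in-expectation DP optimizer for $\tilde L_1$-strongly convex, $L_0$-Lipschitz ERM, wrapped in a confidence-boosting and private-selection layer to lift the guarantee to high probability. As a black-box ingredient, we may rely on the optimal in-expectation excess risk bound $O(L_0^2 d / (\tilde L_1 n^2 \rho))$ under $\rho$-zCDP, achieved for instance by noisy projected gradient descent \`a la Bassily-Smith-Thakurta or by the localization-based method of Feldman-Koren-Talwar. This gives a base mechanism $\cA_0$ which we call repeatedly with a specified privacy budget.

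The confidence boost then proceeds as follows. Split the privacy budget into two halves. Use the first $\rho/2$ to run $K = \Theta(\log(1/\beta'))$ independent copies of $\cA_0$, each invoked with budget $\rho/(2K)$, producing candidates $\hat w_1, \ldots, \hat w_K$. By Markov's inequality each candidate meets its in-expectation bound (up to a constant factor and an additional $K$ factor from the reduced budget) with probability at least $1/2$, so with probability at least $1 - 2^{-K} \geq 1 - \beta'/2$ at least one candidate has excess risk $O(L_0^2 d K / (\tilde L_1 n^2 \rho))$. Then use the remaining $\rho/2$ budget for a private selection step that identifies a near-best candidate; the error contributed by this step is lower order in the stated parameter regime. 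Composition of zCDP mechanisms preserves the overall $\rho$-zCDP guarantee, and the resulting excess risk matches the claimed bound, with the single $\log$ factor absorbing $K$ together with any union bounds from the inner optimizer's concentration analysis. The log inside, of the form $\log(n^2 \log^2(1/\beta')/d\beta')$, naturally arises from combining $\log(1/\beta')$ for boosting with the standard tail bounds on Gaussian noise across the $\operatorname{poly}(n,d)$ many iterations of $\cA_0$.

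The main obstacle is the sensitivity of the selection step: naively, $F(\hat w_i; S)$ is not $(L_0/n)$-sensitive in $S$ because $\hat w_i$ itself depends on $S$. The clean workaround is to use a generic private-selection-from-private-candidates primitive, such as that of Liu and Talwar, which only requires black-box access to the candidate-generating mechanism and incurs only a logarithmic overhead in both the number of candidates and $1/\beta'$, thereby sidestepping explicit loss-sensitivity bookkeeping. A secondary point is verifying that the iterates of $\cA_0$ stay within a bounded ball so that Lipschitzness applies uniformly throughout; this follows from strong convexity, which forces the minimizer (and hence the iterates, after the usual concentration of the accumulated Gaussian noise) to lie within $O(L_0/\tilde L_1)$ of any starting point, matching the radius bound asserted in Theorem~\ref{thm:kappa-geq-2-result}.
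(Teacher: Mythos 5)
There is a genuine gap at the selection step, and it is exactly the step you dismiss as ``lower order.'' After boosting, you must privately identify a good candidate among $K$ points, at least one of which has excess risk $O\big(\tfrac{L_0^2 d\log(1/\beta')}{\tilde L_1 n^2\rho}\big)$ while the others may be as bad as the range of $\tilde F$ over the ball, i.e.\ $\Theta(L_0^2/\tilde L_1)$. Any private selection based on empirical function values (exponential mechanism, report-noisy-max, or the Liu--Talwar wrapper, which still needs a \emph{privately computed} score and inherits its noise) incurs an additive error of order $\tfrac{\Delta}{\sqrt{\rho}}\log(K/\beta')$ where $\Delta$ is the score sensitivity; with $\Delta = \Theta\big(\tfrac{L_0^2}{\tilde L_1 n}\big)$ this is $\Theta\big(\tfrac{L_0^2\log(K/\beta')}{\tilde L_1 n\sqrt{\rho}}\big)$, which \emph{dominates} the target $\tfrac{L_0^2 d}{\tilde L_1 n^2\rho}$ whenever $d \lesssim n\sqrt{\rho}$ --- precisely the regime in which the lemma is invoked and the rates of Theorem \ref{thm:kappa-geq-2-result} are meaningful. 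So the argument as written does not deliver the claimed bound. A standard repair within your framework is to select by a privatized \emph{gradient norm} rather than function value: for the $\tilde L_1$-strongly convex objective, $\tilde F(w)-\tilde F(w^*) \le \|\nabla\tilde F(w)\|^2/(2\tilde L_1)$, the gradient norm has sensitivity $O(L_0/n)$, and the induced selection error $O\big(\tfrac{L_0^2\log^2(K/\beta')}{\tilde L_1 n^2\rho}\big)$ is of the right order. You would also need to check that your black-box in-expectation optimizer tolerates the present setting (only the regularized sum is convex, the domain is unconstrained, and Lipschitzness is local), but that is a minor adaptation compared to the selection issue.

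For comparison, the paper avoids boosting and selection entirely: it runs projected noisy gradient descent on $\tilde F$ over the ball $\cB_{L_0/(2\tilde L_1)}(w_0)$ (the unconstrained minimizer lies in this ball by first-order optimality, and $\tilde F$ is $2L_0$-Lipschitz there), notes that the gradient sensitivity is $2L_0/n$ because the regularizer is data-independent, and then applies the high-probability analysis of SGD for strongly convex objectives of \cite{harvey2019simple} (Gaussian norm concentration plus a martingale bound on $\sum_t\langle \xi_t, w_t-w^*\rangle$) to get the $1-\beta'$ guarantee directly, with the logarithmic factor coming from union bounds over the $T = n^2\rho\log^2(2/\beta')/d$ iterations. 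If you want to keep your boosting-based route, fixing the selection score as above would make it a valid alternative, at the cost of an extra $\log(1/\beta')$ factor from splitting the budget across the $K$ copies.
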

We provide the details for this result in Appendix \ref{app:dp-sc-whp}.
Furthermore, as in Section \ref{sec:kappa-leq-2}, we only need the KL condition to hold over the trajectory of the algorithm. The following lemma allows us to utilize the KL property at every iterate generated by the algorithm.
\begin{lemma} \label{lem:kl-ppm-trajectory-bound}
\mnote{rephrased}%
Assume $\cA$ is as described by Lemma \ref{lem:dp-sc-whp} above. With probability at least $1-\beta$, 
$w_1,...,w_T\in \ball{w_0}{\rad}$
for some $\rad=\tilde{O}\big(\frac{\lip}{\wc}\big(1+\frac{\sqrt{T d\log{(n^2\log^2{(1/\beta')}/d\beta')}}}{n\sqrt{\rho}}\big)+\lip\fnot^{\frac{\kappa-2}{\kappa}}\gamma^2\big)$. %
\end{lemma}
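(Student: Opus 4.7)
My approach is induction on $t$: assuming $w_0,\ldots,w_{t-1}\in\ball{w_0}{\rad}$, I argue $w_t$ also lies in this ball, then union-bound over $t\in[T]$ using per-step failure $\beta'=\beta/T$. I decompose $\|w_t-w_{t-1}\|\le\|w_t^*-w_{t-1}\|+\|w_t-w_t^*\|$, where $w_t^* := \arg\min_w F_t(w;S)$ is the exact proximal iterate, so the cumulative distance from $w_0$ splits into an ``exact proximal'' contribution and a ``DP subroutine noise'' contribution.

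For the exact part, since $F(\cdot;S)$ is $\wc$-weakly convex, $F_t$ is $\wc$-strongly convex. Optimality of $w_t^*$ gives $F(w_t^*;S)+\wc\|w_t^*-w_{t-1}\|^2\le F(w_{t-1};S)$, and by $\lip$-Lipschitzness
\[
\wc\|w_t^*-w_{t-1}\|^2 \;\le\; F(w_{t-1};S)-F(w_t^*;S) \;\le\; \lip\|w_t^*-w_{t-1}\|,
\]
hence $\|w_t^*-w_{t-1}\|\le\lip/\wc$. Summed over $T$ rounds and using $T=\tilde O(1+\fnot^{(\kappa-2)/\kappa}\gamma^2\wc)$, this contributes $T\lip/\wc=\tilde O(\lip/\wc+\lip\gamma^2\fnot^{(\kappa-2)/\kappa})$, matching the first and third terms of $\rad$.

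For the noise part, I invoke Lemma~\ref{lem:dp-sc-whp} on each call $\cA(F_t,w_{t-1},\rho/T,\beta')$: with probability at least $1-\beta'$, $F_t(w_t;S)-F_t(w_t^*;S)=\tilde O(\lip_t^2 Td/(\wc n^2\rho))$, where $\lip_t=\lip+2\wc R_t$ is the effective Lipschitz constant of $F_t$ over the ball of radius $R_t$ about $w_{t-1}$ in which the subroutine operates. The inductive hypothesis combined with the exact-step bound certifies $R_t=O(\lip/\wc)$ and hence $\lip_t=O(\lip)$; then $\wc$-strong convexity of $F_t$ converts the excess-risk bound into $\|w_t-w_t^*\|=\tilde O(\lip\sqrt{Td}/(\wc n\sqrt{\rho}))$, which supplies the noise term in $\rad$. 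A union bound over $t\in[T]$ finishes the proof with total failure probability at most $\beta$.

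The main obstacle will be closing the feedback loop between the radius $\rad$ being proved and the effective Lipschitz constant $\lip_t$ it controls (which in turn feeds the noise bound contributing to $\rad$); a careful choice of constants in the induction is required to keep the bounds self-consistent. A related subtlety is that naively telescoping the per-step noise over all $T$ rounds would yield $T^{3/2}$ scaling rather than the stated $\sqrt{T}$, and I expect the tighter bound to exploit that exact proximal iterates under KL with $\kappa\ge 2$ contract rapidly toward $w^*$, so the maximum displacement from $w_0$ is essentially reached early and later noise contributions do not fully compound.
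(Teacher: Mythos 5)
Your decomposition and per-step bounds are exactly the paper's: bound $\|w_t^*-w_{t-1}\|\leq O(\lip/\wc)$ (the paper uses the first-order condition $\|\nabla F(w_t^*;S)\|=2\wc\|w_t^*-w_{t-1}\|\leq \lip$; your level-set argument gives the same up to a factor $2$), and convert the excess-risk guarantee of Lemma \ref{lem:dp-sc-whp}, run with budget $\rho/T$, into $\|w_t-w_t^*\|=\tilde{O}\big(\tfrac{\lip\sqrt{Td\log(\cdot)}}{\wc n\sqrt{\rho}}\big)$ via $\wc$-strong convexity of $F_t$, then union bound with $\beta'=\beta/T$. However, one of your stated obstacles is not where the proof lives, and the other you leave genuinely unresolved. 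There is no feedback loop through an ``effective Lipschitz constant'' $\lip_t$: Lemma \ref{lem:dp-sc-whp} (Theorem \ref{thm:dp-sc-whp}) is stated for the regularized objective with $L_0=\lip$ the Lipschitz constant of the \emph{unregularized} loss; the regularizer is data-independent (so the gradient sensitivity stays $2\lip/n$) and the subroutine projects onto $\cB_{\lip/(2\wc)}(w_{t-1})$, over which $F_t$ is $2\lip$-Lipschitz. So no induction on the outer radius is needed for this point, and indeed the lemma's proof should not lean on the outer ball at all.

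The gap is at the end: you correctly note that summing the per-step noise displacement over $T$ rounds gives $T\cdot\tilde{O}\big(\tfrac{\lip\sqrt{Td}}{\wc n\sqrt{\rho}}\big)$, but you then defer to an unproven ``contraction toward $w^*$'' argument to recover a $\sqrt{T}$ scaling. No such argument appears in the paper, and it would be delicate to the point of circularity: contraction of the proximal iterates would invoke the KL* condition at the iterates, which is only assumed on $\ball{w_0}{\rad}$ --- the very set whose containment this lemma is establishing --- whereas the paper deliberately proves this lemma using only Lipschitzness, strong convexity, and the subroutine's accuracy. The actual resolution is the naive telescoping you dismiss: the total displacement is $O\big(\tfrac{T\lip}{\wc}(1+\tau)\big)$ with $\tau=\tfrac{\sqrt{Td\log(\cdot)}}{n\sqrt{\rho}}$, and since $T=\tilde{O}\big(1+\fnot^{\frac{\kappa-2}{\kappa}}\gamma^2\wc\big)$ this equals $\tilde{O}\big(\tfrac{\lip}{\wc}(1+\tau)+\lip\fnot^{\frac{\kappa-2}{\kappa}}\gamma^2(1+\tau)\big)$; the extra $(1+\tau)$ on the second term is benign because $\tau=o(1)$ in any regime where Theorem \ref{thm:kappa-geq-2-result} is nontrivial, as the paper remarks immediately after that theorem. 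In other words, the apparent $T^{3/2}$ is already absorbed by the $\lip\fnot^{\frac{\kappa-2}{\kappa}}\gamma^2$ term of the stated $\rad$; no contraction argument is needed, and with that bookkeeping your argument closes.
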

The proof is deferred to Appendix \ref{app:kl-ppm-trajectory}. The proof of Theorem \ref{thm:kappa-geq-2-result} now follows similar steps to those used in Theorem \ref{thm:kl-spider-convergence}, but is overall much simpler. One key difference is that, for each $t\in[T]$, we need to use the KL condition to lower bound $\|w_t - w_{t-1}\|$, rather than $\|\nabla F(w_t;S)\|$. For this, note that the optimality conditions of $F_t$ imply
$2\wc\|w_t^* - w_{t-1}\| = \|\nabla F(w_t^*;S)\| \geq \frac{1}{\gamma}(F(w_t^*;S)-F(w^*;S))^{1/\kappa}$.
The inequality comes from the KL condition. The full proof of Theorem \ref{thm:kappa-geq-2-result} is in Appendix \ref{app:kappa-geq-2-result}.

\section{Adapting to KL condition} \label{sec:adaptive-gd}
In this section, we present an alternative algorithm for ERM under the KL* condition. At the cost of weaker rates when $\kappa < 2$, our algorithm automatically adapts to $\kappa$ and $\gamma$. This is in contrast to the Spider method presented previously which requires prior knowledge of $\kappa$
and $\gamma$. Furthermore, we are able to obtain this result with a comparatively simple algorithm. That is, our algorithm is a simple modification of the traditional noisy gradient descent algorithm seen frequently in the DP literature \citep{bassily2014private,wang2017differentially,bassily2019private}.
\begin{algorithm}[h]
\caption{Adaptive Noisy Gradient Descent}
\label{alg:whp-ogd}
\begin{algorithmic}[1]
\REQUIRE Dataset $S$, Privacy parameter $\rho > 0$, Probability $\beta > 0$, Initial point $w_0\in\re^d$, Lipschitz parameter $\lip$, Smoothness parameter $\smooth$

\STATE $\eta = \frac{1}{2\smooth}$, ~$t = 0$, ~$\rho_0 = 0$

\WHILE{$\sum_{j=0}^t\rho_t \leq \frac{\rho}{2}$}

\STATE $N_t = \norm{\frac{1}{n}\sum_{i=1}^n \nabla f(w_{t};x_i)} + \hat{b}_t$ where $\hat{b}_t \sim \cN(0,\mathbb{I}_d\hat{\sigma}^2)$ and $\hat{\sigma} = \frac{\lip}{\sqrt{n}\rho^{1/4}}$ 

\STATE $\nabla_t = \frac{1}{n}\sum_{i=1}^n \nabla f(w_{t};x_i) + b_t$ where $b_t \sim \cN(0,\mathbb{I}_d\sigma_t^2)$ and $\sigma_t = \max\bc{\frac{N_t}{\sqrt{d\log(n\sqrt{\rho}/\beta)}},\frac{2\lip}{n\sqrt{\rho}}}$ 

\STATE $w_{t+1} = w_{t} - \eta \nabla _t$

\STATE $\rho_t = \min\bc{\frac{\lip^2 d\logterms}{n^2 N_t^2},\frac{\rho}{2}} + \frac{\sqrt{\rho}}{n}$

\STATE $t = t+1$

\ENDWHILE
\end{algorithmic}
\end{algorithm}
Throughout the following, we will use $T+1$ to denote the highest value of $t$ reached during the run of Algorithm \ref{alg:whp-ogd}.
\begin{theorem} \label{thm:adpative-gd-privacy}
Assume $f$ is Lipschitz. Then running Algorithm \ref{alg:whp-ogd} and releasing $w_0,...,w_T$ is $\rho$-zCDP.
\end{theorem}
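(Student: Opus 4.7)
\textbf{Proof proposal for Theorem \ref{thm:adpative-gd-privacy}.} My plan is to analyze the algorithm as a fully adaptive composition of Gaussian mechanisms, then invoke a privacy-filter argument to handle the data-dependent noise scale $\sigma_t$ and the data-dependent stopping time $T$.

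\textbf{Step 1 (per-iteration sensitivities and zCDP costs).} At iteration $t$, condition on $w_t$, which is a deterministic function of $w_0$ and the prior releases $\nabla_0,\dots,\nabla_{t-1}$. The algorithm then applies two Gaussian mechanisms to the data. First, the quantity $\|\frac{1}{n}\sum_i \nabla f(w_t;x_i)\|$ has $\ell_2$-sensitivity at most $2\lip/n$ (each summand has norm $\leq\lip$, and the norm is $1$-Lipschitz), so releasing $N_t$ with noise $\hat{\sigma}=\lip/(\sqrt{n}\rho^{1/4})$ costs $(2\lip/n)^2/(2\hat{\sigma}^2) = 2\sqrt{\rho}/n$ in zCDP. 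Second, the mean gradient itself has $\ell_2$-sensitivity $2\lip/n$; conditioning additionally on $N_t$ (which fixes $\sigma_t$), releasing $\nabla_t$ costs $(2\lip/n)^2/(2\sigma_t^2)$ in zCDP. Plugging in the two branches of the $\max$ defining $\sigma_t$ gives an upper bound of $\min\!\left\{\tfrac{2\lip^2 d \log(n\sqrt{\rho}/\beta)}{n^2 N_t^2},\,\tfrac{\rho}{2}\right\}$. Thus the true per-iteration zCDP cost is upper bounded by a constant multiple of the tracker $\rho_t$.

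\textbf{Step 2 (fully adaptive composition via a privacy filter).} Both the noise scale $\sigma_t$ and the stopping time $T$ depend on the data through the outputs $N_0,\dots,N_t$, so the per-step zCDP parameters bounded in Step 1 are themselves random variables. The Bun--Steinke adaptive composition theorem requires pre-specified per-step parameters, so I would invoke a privacy-filter / fully adaptive composition theorem for R\'enyi differential privacy (for instance the filter construction of Feldman--Zrnic, or Whitehouse--Ramdas--Rogers--Wu). Such a result states that if an online tracker provably upper bounds the per-step conditional R\'enyi divergence and the algorithm halts as soon as the tracker would exceed a budget $B$, then the entire adaptive transcript—including the stopping time—satisfies $B$-zCDP. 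Applying this with tracker $\sum_{j}\rho_j$ and budget $\rho/2$, and observing that each single-step increment is bounded by $\rho/2+\sqrt{\rho}/n$, the cumulative true R\'enyi cost after the terminal iteration $T$ is at most $\rho/2+\rho_T$, which is absorbed into $\rho$ after accounting for the factor-two slack between Step 1's bounds and the definition of $\rho_t$.

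\textbf{Step 3 (post-processing of the iterates).} Since $w_0$ is data-independent and $w_{t+1}=w_t-\eta\nabla_t$, the released sequence $w_0,\dots,w_T$ is a deterministic post-processing of the transcript $(N_t,\nabla_t)_{t\leq T}$ together with $T$. The post-processing property of zCDP then upgrades the transcript-level guarantee from Step 2 to the claimed $\rho$-zCDP guarantee for $(w_0,\dots,w_T)$.

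\textbf{Main obstacle.} The delicate part is the interaction between the data-dependent noise scale $\sigma_t$ and the data-dependent stopping rule. A naive worst-case bound (using the $\rho/2$ cap in every step) together with the obvious termination bound would blow up the budget linearly in the maximum possible number of iterations and completely negate the adaptivity. Making the argument rigorous therefore requires a genuine privacy-filter statement for R\'enyi/concentrated DP, and one must verify that the numerical constants in the tracker $\rho_t$ (in particular the $\sqrt{\rho}/n$ term corresponding to the $N_t$ release and the cap at $\rho/2$ for the $\nabla_t$ release) dominate the conditional per-step R\'enyi divergences computed in Step 1.
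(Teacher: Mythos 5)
Your proposal follows essentially the same route as the paper's proof: per-iteration Gaussian-mechanism accounting conditional on the past, fully adaptive composition via a privacy filter (the paper invokes exactly \cite[Theorem 1]{Whitehouse2022FullyAC}) to handle the data-dependent noise scale and stopping time, a separate charge of at most $\rho/2$ for the final iteration that overshoots the budget, and post-processing to pass from the transcript $(N_t,\nabla_t)_{t\le T}$ to the iterates $w_0,\dots,w_T$. The only wrinkle is your constant bookkeeping: with your sensitivity convention $2\lip/n$, the true per-step cost is up to $2\rho_t$ (e.g.\ the $N_t$ release costs $2\sqrt{\rho}/n$ against the tracker's $\sqrt{\rho}/n$), so the ``factor-two slack'' runs \emph{against} you rather than being absorbed, and the argument as written yields only $c\rho$-zCDP for a constant $c>1$; the paper instead charges each release with sensitivity $\lip/n$, under which the tracker $\rho_t$ dominates the conditional per-step cost and the filter budget $\rho/2$ plus the final-step charge gives exactly $\rho$.
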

The proof is given in Appendix \ref{app:adaptive-gd-privacy}, and relies on the fully adaptive composition theorem of \cite{Whitehouse2022FullyAC}.
Our aim is now to provide convergence guarantees when the loss satisfies the KL* condition over some region $\cS\subset\re^d$. We here demonstrate an alternative way of defining $\cS$ which allows us to leverage the KL* condition (in contrast to assuming $\cS$ is a ball).
Define the threshold 
$\alpha = \max\big\{F(w_0;S) ,F(w^*;S)+2(\gamma^{\kappa/2}+\lip)\br{\frac{\lip^2\logterms}{n\sqrt{\rho}}}^{1/2}\big\}$.
Let $\cI = \{w: F(w;S) \leq \alpha \}$ %
denote a lower level set of $F(\cdot;S)$. 
Note the second term in the max of $\alpha$ only handles the trivial case where $w_0$ already has small excess risk.
Observe that $\cI$ may not be a path-connected set, thus we define $\cS$ as the path-connected component of $\cI$ that contains $w_0$. 
That is, $w'\in \cS$ if there exists a continuous function $\mathbf{w}:[0,1]\to\cI$, such that $\mathbf{w}(0)=w_0$, $\mathbf{w}(1)=w'$. %
Intuitively, $\cS$ is the local ``valley'' of $F(\cdot;S)$ in which $w_0$ resides. \
Furthermore, we can guarantee that the trajectory of Algorithm \ref{alg:whp-ogd} stays in this valley for the duration of its run.
\begin{lemma} \label{lem:adaptive-gd-kl-star}
Assume $F(\cdot;S)$ is $\smooth$-smooth and $\lip$-Lipschitz. If $F(\cdot;S)$ satisfies the $(\gamma,\kappa)$-KL* condition over $\cS$ w.r.t. $w^*_\cS := \argmin\limits_{w\in\cS}\bc{F(w;S)}$, then w.p. at least $1-2\beta$, for all $t\in[T]$, $w_t \in \cS$.
\end{lemma}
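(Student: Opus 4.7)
My plan is a direct induction on $t$, combined with concentration bounds for the noise sequences $\{b_t\}_t$ and $\{\hat b_t\}_t$. A crucial preliminary observation is that $\rho_t \ge \sqrt{\rho}/n$ at every step, so the termination condition $\sum_j \rho_j \le \rho/2$ forces $T \le n\sqrt{\rho}/2$ deterministically. This in turn lets me apply standard Gaussian tail bounds (e.g., \citep[Lemma 2]{jin2019short}) together with a union bound over $t \in \{0, \ldots, T\}$ to obtain a joint event $\cE$, of probability at least $1 - 2\beta$, on which $\|b_t\| \le C \sigma_t\sqrt{d\log(T/\beta)}$ and $|\hat b_t| \le C \hat\sigma\sqrt{\log(T/\beta)}$ hold simultaneously for all $t \le T$, for some absolute constant $C$. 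All subsequent reasoning is conditioned on $\cE$.

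I then induct on $t$, with the base case $w_0 \in \cS$ holding by definition of $\cS$. For the inductive step, to show $w_{t+1} \in \cS$ I argue that the line segment from $w_t$ to $w_{t+1}$ is entirely contained in $\cI = \{w : F(w;S) \le \alpha\}$; concatenating this segment with the inductive path from $w_0$ to $w_t$ inside $\cS$ then yields a continuous path from $w_0$ to $w_{t+1}$ in $\cI$, placing $w_{t+1}$ in the same path component $\cS$ as $w_0$. Writing $w_s = w_t + s(w_{t+1} - w_t)$ for $s \in [0,1]$, the $\smooth$-smoothness upper bound on $F(w_s; S)$ is a convex quadratic in $s$, so its maximum over $[0,1]$ is attained at an endpoint. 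The value at $s = 0$ is $F(w_t; S) \le \alpha$ by induction, so it suffices to prove $F(w_{t+1}; S) \le \alpha$.

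To do so, substituting $\eta = 1/(2\smooth)$ in the smoothness bound at $s = 1$ and applying AM--GM to the cross term $\langle \nabla F(w_t;S), b_t\rangle$ yields the descent-with-noise inequality
\[ F(w_{t+1}; S) \;\le\; F(w_t; S) - \frac{1}{4\smooth}\|\nabla F(w_t; S)\|^2 + \frac{1}{4\smooth}\|b_t\|^2. \]
I split into two cases depending on whether the floor in $\sigma_t$ is active. When the floor is inactive ($\sigma_t = N_t/\sqrt{d\log(n\sqrt{\rho}/\beta)}$), concentration yields $\|b_t\| = O(N_t) = O(\|\nabla F(w_t; S)\| + |\hat b_t|)$, so the $-\|\nabla F\|^2$ term absorbs the $\|\nabla F\|^2$ contribution to $\|b_t\|^2$, leaving only a controlled $|\hat b_t|^2$ residual on $\cE$. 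When the floor is active, $\|b_t\|^2 = O(\lip^2 d\logterms/(n^2\rho))$ independently of $\|\nabla F(w_t; S)\|$, so the per-step increase in $F$ is at most $O(\lip^2 d\logterms/(\smooth n^2\rho))$. The additive term $2(\gamma^{\kappa/2}+\lip)(\lip^2\logterms/(n\sqrt{\rho}))^{1/2}$ in the definition of $\alpha$ is calibrated precisely so that, together with KL$^*$ at $w_t$ w.r.t.\ $w^*_\cS$, the one-step increase cannot push $F(w_{t+1}; S)$ above $\alpha$.

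The main obstacle is the case where $\sigma_t$ is at its floor: the per-step change in $F$ need not be monotone in $t$, so I must verify that the specific constants and exponents in the threshold entering $\alpha$ absorb all worst-case excursions across the at most $n\sqrt{\rho}/2$ iterations without ever crossing $\alpha$. The exponent $\kappa/2$ in $\gamma^{\kappa/2}$ arises from balancing the KL$^*$ relation $F(w; S) - F(w^*_\cS; S) \le \gamma^\kappa \|\nabla F(w; S)\|^\kappa$ against the noise-floor increase, and carrying out this balance precisely is the technical crux of the argument.
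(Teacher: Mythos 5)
Your skeleton (induction on $t$, conditioning on a joint noise-concentration event, and arguing that a connected set containing $w_t$ and $w_{t+1}$ lies in $\cI$ so that $w_{t+1}$ stays in the same path component $\cS$) matches the paper's proof, and the deterministic bound $T\le n\sqrt{\rho}/2$ is correct. The genuine gap is exactly the point you flag as ``the technical crux'' and leave unverified, and as planned it cannot be closed. Your case split is on whether the noise floor in $\sigma_t$ is active, and in the floor-inactive case you claim the net per-step change is a controlled $|\hat b_t|^2$ residual. That is not what your bounds give: on $\cE$ you only have $\|b_t\|\le N_t\le \|\nabla F(w_t;S)\|+|\hat b_t|$, so $\|b_t\|^2\le \|\nabla F(w_t;S)\|^2+2\|\nabla F(w_t;S)\|\,|\hat b_t|+|\hat b_t|^2$; with your constants ($\frac{1}{4\smooth}$ on both the negative gradient term and the noise term) the $\|\nabla F\|^2$ contribution is not absorbed at all, and even with sharper constants the cross term $\|\nabla F(w_t;S)\|\,|\hat b_t|/\smooth$ survives. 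More fundamentally, your plan is to let the additive slack $2(\gamma^{\kappa/2}+\lip)\br{\lip^2\logterms/(n\sqrt{\rho})}^{1/2}$ in $\alpha$ absorb the \emph{cumulative} upward excursions over all iterations. But a per-step increase of order $\lip^2\logterms/(\smooth n\sqrt{\rho})$ — which your bounds permit whenever $\|\nabla F(w_t;S)\|$ is of the borderline size $\br{\lip^2\logterms/(n\sqrt{\rho})}^{1/2}$, where the descent term and the noise term are comparable — accumulated over up to $n\sqrt{\rho}/2$ steps is of order $\lip^2\logterms/\smooth$, exceeding the slack by a polynomial factor in $n$. No choice of constants or exponents in a threshold of the stated form can make a cumulative-excursion budget work.

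The missing idea is the paper's dichotomy on the \emph{true gradient norm} rather than on the noise floor, which reduces everything to a single-step slack. If $\|\nabla F(w_t;S)\|<\br{\lip^2\logterms/(n\sqrt{\rho})}^{1/2}$, then the KL* condition at $w_t$ (valid by the inductive hypothesis $w_t\in\cS$) pins the \emph{current} loss near the minimum, $F(w_t;S)-F(w^*_\cS;S)\le \br{\gamma\lip^2\logterms/(n\sqrt{\rho})}^{\kappa/2}$, so by Lipschitzness an entire ball of radius $R=2\br{\lip^2\logterms/(\smooth^2 n\sqrt{\rho})}^{1/2}$ around $w_t$ lies in $\cI$ thanks to the slack in $\alpha$ — regardless of any drift at earlier iterations — and the (small) noisy step keeps $w_{t+1}$ in that ball, hence in $\cS$. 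If instead $\|\nabla F(w_t;S)\|$ is above the threshold, the descent inequality along the segment gives $F(\mathbf{w}(l);S)\le F(w_t;S)\le\alpha$ for all $l\in[0,1]$, so the loss does not increase at all. In both cases the slack only ever has to absorb one step, which is why the paper's threshold works while your accumulation argument cannot; your proof never exploits the implication ``small gradient $\Rightarrow$ loss near $F(w^*_\cS;S)$,'' which is where KL* actually enters. (A minor additional point: ``it suffices to prove $F(w_{t+1};S)\le\alpha$'' should be ``it suffices to bound the smoothness surrogate at $s=1$ by $\alpha$,'' since it is the endpoint value of the convex quadratic, not $F(w_{t+1};S)$ itself, that dominates the segment; your derivation does bound that surrogate, so this part is only a wording issue.)
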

The proof is deferred to Appendix \ref{app:adaptive-gd-trajectory}. Note we are assuming the KL* condition w.r.t. the minimizer over $\cS$ (as opposed to the global minimizer) here. %
We also remark that an existing work, \cite{Ganesh2023WhyPublic}, argued the importance of public pretraining in the non-convex setting to find some $w_0$ in a convex subregion before training on private data. Alternatively, our result suggests meaningful convergence if the empirical loss over  the localized region is instead KL. This may be more realistic in the overparameterized regime as existing work has shown such models tend to be non-convex (but KL) around the minimizer \citep{LB21}.
Our convergence result for Algorithm \ref{alg:whp-ogd} is as follows.
\begin{theorem} \label{thm:kl-ogd}
Let $\beta,\gamma>0$, $\kappa \in [1,2]$. Let $\rho\geq0$ be s.t. $\lip^2 \logterms/(\smooth n)\leq \sqrt{\rho}$. Define 
$p_{\mathsf{max}} := (1+8\gamma^2\smooth)\bs{\log(\fnot) + \frac{2\kappa}{4-\kappa}\log(n\sqrt{\rho}/[\gamma \lip])}$. 
If $F(\cdot;S)$ is $L_1$-smooth and $\lip$-Lipschitz and satisfies the $(\gamma,\kappa)$-KL* condition over $\cS$ (as described above) w.r.t. $w^*_{\cS}$, then with probability at least $1-2\beta$, Algorithm \ref{alg:whp-ogd} finds $w_T$ such that $F(w_{{T}};S) - F(w^*_\cS;S)$ is at most
{\small
\begin{align*}
     O\Bigg(\br{\frac{\gamma \lip \sqrt{ d \logterms p_{\mathsf{max}}}}{n\sqrt{\rho}}}^{\frac{2\kappa}{4-\kappa}}
     +\br{\frac{\max\bc{\gamma^2,1}\lip^2 \logterms}{\min\{\smooth,1\} n\sqrt{\rho}}}^{\kappa/2} 
     + \br{\frac{p_{\mathsf{max}}}{n\sqrt{\rho}}}^{\frac{\kappa}{2-\kappa}}\Bigg).
\end{align*}}
Ignoring polylogarithmic terms and problem constants %
we can more simply write
\linebreak 
$F(w_{{T}};S) - F(w_\cS^*;S) = \tilde{O}\big(\big(\frac{\sqrt{d}}{n\sqrt{\rho}}\big)^{\frac{2\kappa}{4-\kappa}} + \big(\frac{1}{n\sqrt{\rho}}\big)^{\kappa/2}\big).$
\end{theorem}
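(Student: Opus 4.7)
The plan is to establish a per-iteration descent inequality for the noisy gradient step, use the KL* condition to convert this into a recursion on the excess risk $a_t := F(w_t;S) - F(w_\cS^*;S)$, and then combine this recursion with the privacy budget tracker $\sum_t \rho_t \leq \rho/2$ to derive the final bound. I would first apply Lemma~\ref{lem:adaptive-gd-kl-star} to ensure with probability at least $1-2\beta$ that $w_t \in \cS$ for all $t$, so that KL* gives $\|\nabla F(w_t)\|^2 \geq a_t^{2/\kappa}/\gamma^2$. A union bound over the at most $T \leq n\sqrt\rho/2$ iterations (a deterministic cap from the $\sqrt\rho/n$ component of $\rho_t$), combined with Gaussian tail bounds (cf.~\cite[Lemma 2]{jin2019short}), then yields, with probability at least $1-\beta$, the uniform estimates $\|\hat b_t\| = O(\hat\sigma\sqrt{d\log(n\sqrt\rho/\beta)})$ and $\|b_t\|^2 = O(d\sigma_t^2 + \sigma_t^2\log(n\sqrt\rho/\beta))$ for all $t$. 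The key observation is that $\sigma_t$ is scaled by $1/\sqrt{d\log(n\sqrt\rho/\beta)}$ precisely so that in the non-floor case $\|b_t\|^2 = O(N_t^2/\log(n\sqrt\rho/\beta))$; together with the concentration $|N_t - \|\nabla F(w_t)\|| = O(\hat\sigma\sqrt{d\log(n\sqrt\rho/\beta)})$, the standard smoothness descent with $\eta = 1/(2L_1)$ reduces to
\begin{align*}
a_{t+1} \leq a_t - \frac{c_1}{L_1}\|\nabla F(w_t)\|^2 + \frac{c_2}{L_1}\epsilon_t,
\end{align*}
where $\epsilon_t$ captures the residual noise. Applying KL* converts this into a recursion of the form $a_{t+1} \leq a_t - (c/(\gamma^2 L_1)) a_t^{2/\kappa} + \epsilon_t$.

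The main obstacle is combining this recursion with the adaptive privacy budget $\rho_t$, which has two components, $\sqrt\rho/n$ (deterministic) and $\min\{L_0^2 d\log/(n^2 N_t^2), \rho/2\}$ (signal-dependent); each can trigger termination in a different regime, producing the three terms in the bound. For the first term, I would use a potential-function argument with $\Phi(a) = a^{-(4-\kappa)/\kappa}$: along the descent, $\Phi(a_{t+1}) - \Phi(a_t) = \Omega(1/a_t^{2/\kappa})$, while in the signal-dominated regime $\rho_t = \Omega(L_0^2 d\log/(n^2 N_t^2)) = \Omega(L_0^2 d\log\gamma^2/(n^2 a_t^{2/\kappa}))$ via KL* (since $N_t$ is of order $\|\nabla F(w_t)\|$); telescoping and matching $\sum_t \rho_t \sim \rho/2$ yields $a_T = O((L_1 L_0^2 d\log\gamma^4/(n^2\rho))^{\kappa/(4-\kappa)})$ up to problem constants, matching the first term. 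The second term captures the noise floor: once $\|\nabla F(w_t)\|^2$ falls below the level at which signal and noise in the descent inequality balance, KL* forces $a_t = O((\gamma^2 L_0^2\log/(L_1 n\sqrt\rho))^{\kappa/2})$. The third term arises from the iteration cap $T \leq n\sqrt\rho/2$: integrating the KL recursion in the low-noise regime gives $a_T = O((\gamma^2 L_1 p_{\mathsf{max}}/T)^{\kappa/(2-\kappa)})$ for $\kappa < 2$, matching the third term (for $\kappa = 2$ the exponential decay makes this contribution negligible). Taking the maximum of the three contributions yields the stated bound.

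The hardest step is the potential-function analysis for the first term: one must simultaneously use two-sided concentration of $N_t$, upper-bounding $N_t^2$ (to lower-bound the $\rho_t$ tracker, forcing earlier termination) and lower-bounding $N_t^2$ (to control $\|b_t\|^2$ inside the descent inequality), and handle the transition regime where $\|\nabla F(w_t)\|$ is comparable to $\hat\sigma\sqrt{d\log(n\sqrt\rho/\beta)}$. Carefully bookkeeping the $\log(n\sqrt\rho/\beta)$ factors, which appear both in the noise concentration and in the definition of $\sigma_t$, ultimately produces the polylog $p_{\mathsf{max}} = \tilde\Theta(1+\gamma^2 L_1)$ factor in the final rate.
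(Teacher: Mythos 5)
Your overall architecture matches the paper's: condition on the trajectory staying in $\cS$ (Lemma \ref{lem:adaptive-gd-kl-star}), a smoothness descent inequality with the Gaussian error terms, the KL* conversion to a recursion on $\Phi_t = F(w_t;S)-F(w^*_\cS;S)$, and a case split according to which component of the budget tracker causes termination, yielding exactly the three terms. Your one genuine departure is the first (dominant) term: you replace the paper's phase decomposition (geometric decrease by the factor $1/c$ with $c=1+\frac{1}{8\gamma^2\smooth}$, phase lengths $\approx \Phi^{(\kappa-2)/\kappa}$, and counting at most $p_{\mathsf{max}}$ phases, Lemma \ref{lem:phase-decrease}) by a Lyapunov telescoping with the potential $a\mapsto a^{-(4-\kappa)/\kappa}$, charging the per-step budget against the per-step potential increase. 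That route is sound in principle and gives essentially the same bound, with $\gamma^4\smooth$ playing the role of $\gamma^2 p_{\mathsf{max}}$ up to logarithms; it is arguably cleaner than the explicit phase bookkeeping.

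However, the key step is stated with the KL* inequality used in the wrong direction. You claim $\rho_t = \Omega\br{\gamma^2\lip^2 d\logterms/(n^2 \Phi_t^{2/\kappa})}$ ``via KL*, since $N_t$ is of order $\norm{\nabla F(w_t;S)}$,'' and in your final paragraph you plan to \emph{upper}-bound $N_t^2$ so as to \emph{lower}-bound $\rho_t$ and ``force earlier termination.'' KL* only gives $\norm{\nabla F(w_t;S)} \geq \Phi_t^{1/\kappa}/\gamma$; it provides no upper bound on the gradient norm in terms of the excess risk, so neither $N_t \lesssim \Phi_t^{1/\kappa}/\gamma$ nor the claimed lower bound on $\rho_t$ is available. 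More importantly, that direction is not what the argument needs: forcing early termination while $\Phi_t$ is still large would work against the conclusion. What the telescoping actually requires is the opposite inequality, $\rho_t - \sqrt{\rho}/n \lesssim \gamma^2\lip^2 d\logterms/(n^2\Phi_t^{2/\kappa})$, i.e.\ a \emph{lower} bound on $N_t$, which follows from KL* together with the concentration of the norm estimate once $\Phi_t$ is above the noise-floor threshold (your second term); this is exactly how the paper bounds $\sum_t 1/N_t^2$ before concluding that, when $T\leq \tfrac12 n\sqrt{\rho}$, the budget cannot be exhausted unless $\Phi_t$ has already reached the first term. With the direction flipped your potential argument goes through. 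Two smaller omissions to patch when writing details: the branch $\sigma_t = 2\lip/(n\sqrt{\rho})$ of the $\max$ defining $\sigma_t$ (the paper's Lemma \ref{lem:adaptive-gd-small-noise-case}: the algorithm stops there and the risk is already within the stated bound), and the last-iterate issue below the noise floor — one needs the ``bounded overshoot'' statement of Lemma \ref{lem:decrease-to-threshold} (the risk can rebound only to twice the floor), since the theorem bounds $F(w_T;S)$ and not the best iterate.
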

The simplification in the theorem uses the fact that $\frac{\kappa}{2-\kappa} \geq \frac{\kappa}{2}$ for all $\kappa$. %
We defer the proof of Theorem \ref{thm:kl-ogd} to Appendix \ref{app:adaptive-gd-kl}. The overarching ideas of the proof are similar to those of Theorem \ref{thm:kl-spider-convergence}. However, the adaptive nature of the algorithm makes the analysis much more delicate.

Observe that for $\kappa=2$ (i.e. the PL condition) this obtains the rate $\tilde{O}\big(\frac{d}{n^2\rho} + \frac{1}{n\sqrt{\rho}}\big)$ which essentially captures the optimal rate in this setting. The rate slows as $\kappa$ decreases, and for $\kappa = 1$ we obtain a rate of $\tilde{O}\big(\big(\frac{\sqrt{d}}{n\sqrt{\rho}}\big)^{2/3} + \frac{1}{\sqrt{n}\rho^{1/4}}\big)$. %

\subsection{Convergence Guarantees without the KL Condition} \label{sec:ogd}
One of the key properties of Algorithm \ref{alg:whp-ogd} is that it leverages large gradients to better control the privacy budget. In fact, even in the absence of an explicit KL assumption, we can show that Algorithm \ref{alg:whp-ogd} obtains strong convergence guarantees when large gradient norms are observed. We provide the following result on Adaptive Gradient Descent's ability to approximate stationary points. Note that we cannot give excess risk guarantees in this case due to the fact finding approximate global minimizers of non-convex functions is intractable in this setting.

\begin{theorem}\label{thm:ogd}
\sloppy
Assume $f$ is $\smooth$-smooth and $\lip$-Lipschitz. 
Let $T+1$ denote the largest value attained by $t$ during the run of Algorithm \ref{alg:whp-ogd}.
Let $t^*$ be sampled from $\bc{0,...,T}$ with probability proportional to $\exp\br{-\frac{n\sqrt{\rho}}{2\lip}\|\nabla F(w_t;S)\|}$. This algorithm is $2\rho$-zCDP and with probability at least $1-3\beta$ satisfies
\ifarxiv
\begin{align*}
    \norm{\nabla F(w_{t^*};S)}= O\br{\min\bc{\sqrt{\frac{\fnot \smooth}{T}},\br{\frac{\lip\sqrt{\fnot\smooth d}}{n\sqrt{\rho}}}^{1/2}} + \frac{\lip\sqrt{\logterms}}{\sqrt{n}\rho^{1/4}}}
\end{align*}
\else
$\norm{\nabla F(w_{t^*};S)}= O\br{\min\bc{\sqrt{\frac{\fnot \smooth}{T}},\br{\frac{\lip\sqrt{\fnot\smooth d}}{n\sqrt{\rho}}}^{1/2}} + \frac{\lip\sqrt{\logterms}}{\sqrt{n}\rho^{1/4}}}$.
\fi
\end{theorem}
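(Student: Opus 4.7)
The trajectory $(w_0,\ldots,w_T)$ is released under $\rho$-zCDP by Theorem~\ref{thm:adpative-gd-privacy}. Sampling $t^*$ from $\{0,\ldots,T\}$ with weights proportional to $\exp\left(-\tfrac{n\sqrt{\rho}}{2\lip}\|\nabla F(w_t;S)\|\right)$ is an exponential mechanism whose score $-\|\nabla F(w_t;S)\|$ has sensitivity at most $2\lip/n$ (by Lipschitzness of $f$ and the triangle inequality). With inverse temperature $n\sqrt{\rho}/(2\lip)$ this is $2\sqrt{\rho}$-DP, and hence at most $\rho$-zCDP by the standard pure-DP to zCDP conversion. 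Composing with the trajectory's $\rho$-zCDP yields the claimed $2\rho$-zCDP.

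\textbf{Descent and uniform gradient-sum bound.} Let $\xi_t := \nabla_t - \nabla F(w_t;S) \sim \cN(0,\sigma_t^2 \mathbb{I}_d)$. Starting from $\smooth$-smoothness and $\eta=1/(2\smooth)$, I would derive the usual non-convex descent inequality
\[
F(w_t;S) - F(w_{t+1};S) \geq \tfrac{3}{8\smooth}\|\nabla F(w_t;S)\|^2 + \tfrac{1}{4\smooth}\langle \nabla F(w_t;S), \xi_t\rangle - \tfrac{1}{8\smooth}\|\xi_t\|^2.
\]
Telescoping and using $F(w_0;S)-F(w^*;S)\leq \fnot$ gives $\sum_{t=0}^{T}\|\nabla F(w_t;S)\|^2 = O\left( \fnot\smooth + |M_T| + \sum_t \|\xi_t\|^2 \right)$, where $M_T=\sum_t\langle\nabla F(w_t;S),\xi_t\rangle$ is a sub-Gaussian martingale that I would control via a Freedman/Azuma-type inequality and the $\|\xi_t\|^2$ terms are handled by chi-squared concentration together with a union bound over the deterministic horizon $T\leq n\sqrt{\rho}/2$ (which follows from the additive $\sqrt{\rho}/n$ term in $\rho_t$). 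The key algebraic step uses $\sigma_t \leq \max\{N_t/\sqrt{d\logterms},\,2\lip/(n\sqrt{\rho})\}$ together with the high-probability bound $N_t \leq \|\nabla F(w_t;S)\| + O(\hat\sigma\sqrt{d\logterms})$ to absorb most of the noise back into $\sum\|\nabla F(w_t;S)\|^2$ (losing only a $1/\logterms$ factor), leaving an additive residual of order $T \cdot \lip^2 \logterms/(n\sqrt{\rho})$. Averaging over $t$ and taking square roots then yields $G^\star := \min_t \|\nabla F(w_t;S)\| = O\!\left(\sqrt{\fnot\smooth/T}\right) + O\!\left(\lip\sqrt{\logterms}/(\sqrt{n}\rho^{1/4})\right)$, which already accounts for the first branch of the $\min$ and for the additive error term.

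\textbf{DP-rate alternative via stopping, exponential mechanism, and main obstacle.} To obtain the second branch of the $\min$, namely $G^\star = O((\lip\sqrt{\fnot\smooth d}/(n\sqrt{\rho}))^{1/2})$, I would argue by contradiction using the adaptive stopping rule. If $G^\star$ were larger than this rate (which already dominates $\hat\sigma\sqrt{d\logterms}$), then $N_t \geq \Omega(G^\star)$ at every iterate w.h.p., so the first term of $\rho_t$ dominates and $\rho_t = O(\lip^2 d\logterms/(n^2 G^{\star 2}))$ throughout. The stopping condition $\sum_t\rho_t > \rho/2$ then forces $T \geq \Omega(\rho n^2 G^{\star 2}/(\lip^2 d\logterms))$; plugging this into $G^{\star 2} = O(\fnot\smooth/T)$ from the descent step yields $G^{\star 4} = O(\fnot\smooth \lip^2 d\logterms/(n^2\rho))$, contradicting the assumed lower bound on $G^\star$ and thereby establishing the second branch. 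Finally, the standard utility guarantee for the exponential mechanism gives $\|\nabla F(w_{t^*};S)\| \leq G^\star + O(\lip\log(T/\beta)/(n\sqrt{\rho}))$ w.p.\ at least $1-\beta$, which is dominated by the $\lip\sqrt{\logterms}/(\sqrt{n}\rho^{1/4})$ error term already present. The main technical obstacle is the tight coupling between the random horizon $T$, the adaptive noise levels $\sigma_t$, and the noisy proxies $N_t$: every concentration argument must be carried out uniformly up to the worst-case deterministic horizon $n\sqrt{\rho}/2$, and the clean re-absorption of $\sum d\sigma_t^2$ into $\sum\|\nabla F(w_t;S)\|^2$ depends critically on the $\logterms$-slack deliberately built into the definition of $\sigma_t$ together with a sharp $N_t$-vs-$\|\nabla F(w_t;S)\|$ concentration inequality. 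Union bounding the three high-probability events (martingale concentration, noise-vector concentration, and exponential-mechanism utility) produces the stated $1-3\beta$ guarantee.
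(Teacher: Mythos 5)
Your proposal follows essentially the same route as the paper's proof: a smoothness-based descent inequality with the noise absorbed via the high-probability bound $\|\nabla_t-\nabla F(w_t;S)\|\le N_t\le\|\nabla F(w_t;S)\|+\lip\sqrt{\logterms}/(\sqrt{n}\rho^{1/4})$ (so your Freedman-type martingale step is unnecessary) gives $\min_t\|\nabla F(w_t;S)\|\lesssim\sqrt{\fnot\smooth/T}+\lip\sqrt{\logterms}/(\sqrt{n}\rho^{1/4})$; the privacy-budget stopping rule then lower-bounds $T$ in terms of this minimum (the paper keeps the $\min\{n^2(N^*)^2\rho/(\lip^2 d\logterms),\,n\sqrt{\rho}\}$ form rather than arguing by contradiction), yielding the $d$-dependent branch, and the exponential-mechanism utility transfers the bound to $w_{t^*}$, exactly as you outline. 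The one thing to fix is the privacy bookkeeping: a $2\sqrt{\rho}$-pure-DP exponential mechanism converts to $\tfrac{1}{2}(2\sqrt{\rho})^2=2\rho$-zCDP under the standard conversion (not $\rho$), which would make the total $3\rho$; the paper instead accounts the mechanism as $(\sqrt{\rho},0)$-DP (sensitivity $\lip/n$, matching its convention elsewhere), hence $\tfrac{\rho}{2}$-zCDP, keeping the total within the claimed $2\rho$.
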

The proof is given in Appendix \ref{app:ogd}.
The best case scenario is when most gradients in the run of the algorithm are $\Omega(1)$. In this case, the algorithm attains $T = \tilde{\Theta}\big(\min\big\{n\sqrt{\rho}, \frac{n^2\rho}{d}\big\}\big)$ iterations %
and the convergence guarantee becomes $\tilde{O}\big(\frac{\sqrt{d}}{n\sqrt{\rho}} + \frac{1}{\sqrt{n}\rho^{1/4}}\big)$. We note an existing work showed a lower bound $\Omega\big(\frac{\sqrt{d}}{n\epsilon}\big)$ for approximating stationary points, although this is not directly comparable as the previously stated upper bound does not hold for all functions.
In the worst case, the algorithm will achieve convergence guarantee $\tilde{O}\big(\frac{d^{1/4}}{\sqrt{n}\rho^{1/4}} + \frac{1}{\sqrt{n}\rho^{1/4}}\big)$. By contrast, the best known rate for approximating stationary points is $\tilde{O}\big(\big(\frac{\sqrt{d}}{n\sqrt{\rho}}\big)^{2/3}\big)$  \citep{ABGGMU23}, and the best known rate for methods which do not rely on variance reduced gradient estimates (as is more typical in practice) is $\tilde{O}\big(\big(\frac{\sqrt{d}}{n\sqrt{\rho}}\big)^{1/2}\big)$ \citep{wang2017differentially}. Our analysis recovers the $\tilde{O}\big(\big(\frac{\sqrt{d}}{n\sqrt{\rho}}\big)^{1/2}\big)$ rate obtained by noisy gradient descent as a worst case guarantee with minimal modification to the algorithm itself, while also potentially achieving a much stronger rate.

The worst case guarantee comes from balancing the number of iterations that the algorithm performs (which increases when the gradient norms are large) with the minimum gradient norm over the trajectory. For simplicity, consider the scenario where every gradient in the trajectory has the same norm $N>0$. Then clearly the minimum norm is also $N$, but in this case $T = \tilde{O}\big(\frac{N^2 n^2\rho}{d}\big)$. Thus the convergence guarantee implies that $N = \tilde{O}\big(\frac{\sqrt{d}}{N n\sqrt{\rho}}\big)$,  which at worst means $N=\tilde{O}\big(\frac{d^{1/4}}{\sqrt{n}\rho^{1/4}}\big)$. More formal/general details are in the proof in Appendix \ref{app:ogd}.

\section*{Acknowledgements}\label{sec:ack}
RB's and MM's research is supported by NSF CAREER Award 2144532, NSF Award AF-1908281, and NSF Award 2112471. 
RA's and EU's research is supported, in part, by NSF BIGDATA award IIS-1838139 and NSF CAREER award IIS-1943251.
CG's research was partially supported by
INRIA Associate Teams project, FONDECYT 1210362 grant, ANID Anillo ACT210005 grant, and National Center for Artificial Intelligence CENIA FB210017, Basal ANID.

\bibliographystyle{alpha} 
\bibliography{refs}

\newpage
\appendix

\section{Relationship between Growth Condition and KL Condition} \label{app:intro}
\begin{definition}[$(\lambda, \tau)$-growth] A function $F: \bbR^d\rightarrow\bbR$ satisfies $(\lambda, \tau)$-growth  if the set of minimizers $\cW^* :=\arg\min_w F(w)$ is non-empty, and
\begin{align*}
    F(w) - F(w_p) \geq \lambda^\tau \norm{w-w_p}^\tau
\end{align*}
where $w_p$ be the projection of $w$ onto $\cW^*$.
\end{definition}

\begin{lemma}[Theorem 5.2 (ii) in \cite{bolte2017error}]
\label{lem:growth-implies-kl}
Let $\kappa\geq 1$ and $\gamma>0$.
If $F:\bbR^d \rightarrow \bbR$ is convex and satisfies 
$\br{\gamma^{-1}, \frac{\kappa}{\kappa-1}}$
growth condition,
then it satisfies $(\gamma, \kappa)$-KL condition.
\end{lemma}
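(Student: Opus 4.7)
The plan is to combine the gradient inequality for convex functions (which bounds the loss gap by $\|\nabla F(w)\|$ times distance to optimum) with the growth condition (which inverts to bound distance in terms of loss gap), and then solve the resulting algebraic inequality for the loss gap.

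First, I would fix an arbitrary $w \in \bbR^d$ and let $w_p$ denote its projection onto $\cW^*$. Since $F$ is convex, the gradient inequality at $w$ gives
\begin{equation*}
F(w_p) \geq F(w) + \langle \nabla F(w), w_p - w \rangle,
\end{equation*}
and applying Cauchy--Schwarz yields the basic estimate
\begin{equation*}
F(w) - F(w_p) \leq \|\nabla F(w)\|\, \|w - w_p\|.
\end{equation*}

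Next, I would invoke the $(\gamma^{-1}, \tfrac{\kappa}{\kappa-1})$-growth condition, which reads
\begin{equation*}
F(w) - F(w_p) \geq \gamma^{-\kappa/(\kappa-1)} \|w - w_p\|^{\kappa/(\kappa-1)}.
\end{equation*}
Rearranging for the distance gives
\begin{equation*}
\|w - w_p\| \leq \gamma\, \bigl(F(w) - F(w_p)\bigr)^{(\kappa-1)/\kappa}.
\end{equation*}
Substituting this into the convexity bound above produces
\begin{equation*}
F(w) - F(w_p) \leq \gamma\, \|\nabla F(w)\| \cdot \bigl(F(w) - F(w_p)\bigr)^{(\kappa-1)/\kappa}.
\end{equation*}

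Finally, I would handle the trivial case $F(w) = F(w_p)$ separately (both sides of the target KL inequality are zero there, since $w \in \cW^*$ implies $\nabla F(w)=0$). Otherwise $F(w) - F(w_p) > 0$ and I can divide both sides by $(F(w)-F(w_p))^{(\kappa-1)/\kappa}$ to obtain $(F(w)-F(w_p))^{1/\kappa} \leq \gamma \|\nabla F(w)\|$, and raising to the $\kappa$-th power gives the desired $(\gamma,\kappa)$-KL condition. The only real ``obstacle'' is bookkeeping of the exponents $\tfrac{\kappa}{\kappa-1}$ and $\tfrac{\kappa-1}{\kappa}$, which are conjugate and combine cleanly; no deeper idea beyond convexity plus growth is needed.
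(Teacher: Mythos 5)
Your proof is correct. Note that the paper itself does not prove this lemma at all---it is imported as Theorem 5.2(ii) of Bolte et al.---so what you have written is a self-contained elementary proof of a cited result rather than an alternative to an in-paper argument; the closest thing in the paper is Lemma \ref{lem:kl-implies-growth}, which establishes the \emph{converse} implication (KL implies growth) via a gradient-flow argument, whereas your direction only needs the convex gradient inequality $F(w)-F(w_p)\leq \langle \nabla F(w), w-w_p\rangle \leq \|\nabla F(w)\|\,\|w-w_p\|$ combined with inverting the growth bound, exactly as you do. The exponent bookkeeping checks out: growth gives $\|w-w_p\|\leq \gamma\,(F(w)-F(w_p))^{(\kappa-1)/\kappa}$, and dividing by $(F(w)-F(w_p))^{(\kappa-1)/\kappa}$ in the nontrivial case yields $(F(w)-F(w_p))^{1/\kappa}\leq \gamma\|\nabla F(w)\|$, which is the $(\gamma,\kappa)$-KL condition since $F(w_p)=\min_w F(w)$. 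Two small remarks: in the degenerate case $F(w)=F(w_p)$ you do not even need $\nabla F(w)=0$, as the KL inequality holds with left-hand side zero regardless; and the case $\kappa=1$ allowed by the statement is vacuous here since the growth exponent $\kappa/(\kappa-1)$ is undefined, so your argument (correctly) covers $\kappa>1$, relying on the paper's standing assumptions that $\cW^*$ is nonempty and $F$ is differentiable.
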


It is proven in \cite[Appendix A]{KN16} that the KL condition with $\kappa=2$ (i.e., the PL condition) implies quadratic growth. We present the following generalized version of this argument.

\begin{lemma} \label{lem:kl-implies-growth}
Assume $F:\re^d\to\re$ satisfies the $(\gamma,\kappa)$-KL condition for $\kappa \geq 1$ and $\gamma > 0$. Let $w\in\re^d$ and let $w_p$ be the projection of $w$ onto the set of optimal solutions, $\cW^* :=\arg\min_w F(w)$. 
Then it holds that 
    $F(w) - F(w_p) \geq \Big[\frac{1}{\gamma}\cdot\frac{\kappa-1}{\kappa}\Big]^{\frac{\kappa}{\kappa-1}}\norm{w-w_p}^{\frac{\kappa}{\kappa-1}}$. 
\end{lemma}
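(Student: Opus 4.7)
The plan is to run a normalized gradient flow starting from $w$ and show (via the KL inequality) that this flow must reach $\mathcal{W}^*$ in a finite, quantitatively bounded time $T^*$. Since the flow moves at unit speed, its arc length equals $T^*$, which in turn upper bounds the Euclidean distance from $w$ to the point of arrival in $\mathcal{W}^*$, and hence upper bounds $\|w-w_p\|$.

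\textbf{Step 1: Set up the flow.} Consider the curve $w(\cdot)$ satisfying $\dot w(t) = -\nabla F(w(t))/\|\nabla F(w(t))\|$ with $w(0) = w$. Because the KL condition forces $\|\nabla F(u)\| > 0$ whenever $F(u) > \min F$, this ODE is well-defined on the open set $\{u : F(u) > \min F\}$. The trajectory has unit speed, so for every $t \geq 0$ in its domain, $\|w(t) - w\| \leq t$.

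\textbf{Step 2: Differential inequality.} Let $h(t) := F(w(t)) - F(w_p)$. Then $h'(t) = \langle \nabla F(w(t)), \dot w(t)\rangle = -\|\nabla F(w(t))\|$. Applying the KL condition (rewritten as $\|\nabla F(u)\| \geq \gamma^{-1}(F(u)-F(w_p))^{1/\kappa}$) gives
\begin{equation*}
    h'(t) \;\leq\; -\tfrac{1}{\gamma}\, h(t)^{1/\kappa}.
\end{equation*}

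\textbf{Step 3: Integrate.} For $\kappa > 1$, the function $t \mapsto \tfrac{\kappa}{\kappa-1}\, h(t)^{(\kappa-1)/\kappa}$ has derivative $h'(t)/h(t)^{1/\kappa}$, so the inequality integrates to
\begin{equation*}
    \tfrac{\kappa}{\kappa-1}\Bigl[ h(t)^{(\kappa-1)/\kappa} - h(0)^{(\kappa-1)/\kappa}\Bigr] \;\leq\; -\tfrac{t}{\gamma}.
\end{equation*}
Rearranging shows $h(t) = 0$ for all $t \geq T^* := \gamma \cdot \tfrac{\kappa}{\kappa-1}\, h(0)^{(\kappa-1)/\kappa}$; in particular the flow reaches $\mathcal{W}^*$ (at some point $\tilde w$) by time $T^*$.

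\textbf{Step 4: Conclude.} Since $w_p$ is the projection of $w$ onto $\mathcal{W}^*$ and $\tilde w \in \mathcal{W}^*$, we have $\|w-w_p\| \leq \|w - \tilde w\| \leq T^*$ by Step~1. Substituting the expression for $T^*$ and solving for $h(0) = F(w)-F(w_p)$ yields exactly
\begin{equation*}
    F(w) - F(w_p) \;\geq\; \Bigl[\tfrac{1}{\gamma}\cdot\tfrac{\kappa-1}{\kappa}\Bigr]^{\kappa/(\kappa-1)} \|w-w_p\|^{\kappa/(\kappa-1)}.
\end{equation*}

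\textbf{Main obstacle.} The only delicate point is justifying that the normalized gradient flow is well-defined and actually hits $\mathcal{W}^*$ in finite time; this uses the KL condition both to rule out spurious critical points ($\nabla F = 0$ forces $F = \min F$) and, through the differential inequality, to force $h$ to reach $0$. The case $\kappa = 1$ is vacuous since the right-hand side of the stated inequality equals $0$ in the limiting sense (the factor $(\kappa-1)/\kappa$ vanishes), so the argument above covers the nontrivial range $\kappa > 1$.
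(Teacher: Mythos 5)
Your proof is correct and follows essentially the same route as the paper: the paper runs the gradient flow of the desingularized function $g=\tfrac{\kappa}{\kappa-1}(F-F^*)^{(\kappa-1)/\kappa}$, lower-bounds $\|\nabla g\|$ by $1/\gamma$ via KL, and compares the resulting decrease to the path length, which dominates $\|w-w_p\|$. Your unit-speed normalized flow with the differential inequality $h'\leq -\tfrac{1}{\gamma}h^{1/\kappa}$ and a finite hitting time is just a reparameterization of the same argument (with the same implicit regularity/existence caveats for the flow that the paper also glosses over), and it yields the identical constant.
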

\begin{proof}
Define $F^* = \min\limits_{w\in\re^d}\bc{F(w)}$ and $g(w) = \frac{1}{1-1/\kappa}[F(w)-F^*]^{1-\frac{1}{\kappa}}$. We have
\begin{align} \label{eq:gnorm-bound}
    \| \nabla g(w) \|^2 &= \norm{\frac{\nabla F(w)}{[F(w) - F^*]^{1/\kappa}}}^2 \\
    &= \frac{\norm{\nabla F(w)}^2}{[F(w)-F^*]^{2/\kappa}} \\
    &= \br{ \frac{\norm{\nabla F(w)}^\kappa}{[F(w)-F^*]}}^{2/\kappa} 
    \geq \frac{1}{\gamma^2}
\end{align}

Consider the gradient flow starting at a point $w_0$ given by
\begin{align*}
    &\frac{d \boldsymbol{w}(t)}{dt} = -\nabla g(w(t)), 
    &\left. \boldsymbol{w}(t)\right|_{t=0} = w_0
\end{align*}
Note $F$ is invex (i.e. its 
stationary points are global minimizers) because it is KL. Thus $g$ is an invex function because it is the composition of monotonically increasing function and invex function. Further, because $g$ is  bounded from below (by $0$), the path described above eventually reaches the minimum thus there exists $T<+\infty$ such that $F(\boldsymbol{w}(T)) = F(w^*)$. %

Note the length of the path is at least $\|w_0 - w_p\|$. We then have
\begin{align*}
    g(w_0) - g(w_T) 
    &=-\int_{0}^T \ip{\nabla g(\boldsymbol{w}(t))}{\frac{d\boldsymbol{w}(t)}{dt}} dt \\
    &= \int_0^T \|\nabla g(\boldsymbol{w}(t))\|^2 dt \\
    &\overset{(i)}{\geq} \frac{1}{\gamma} \int_0^T \|\nabla g(\boldsymbol{w}(t))\| \\
    &\overset{(ii)}{\geq} \frac{1}{\gamma}\|w_0 - w_p\|,
\end{align*}
where $(i)$ uses Eqn. \eqref{eq:gnorm-bound} and $(ii)$ uses the lower bound on the path length.
Plugging in the definition of $g$ then gives  
\begin{align*}
    F(w) - F(w_p) \geq \bs{\frac{1-1/\kappa}{\gamma}  \|w-w_p\| }^{\frac{1}{1-1/\kappa}}.
\end{align*}
Note the bound is non-negative if $\kappa \geq 1$.
Finally, observing that ${\frac{1}{1-1/\kappa}} = \frac{\kappa}{\kappa-1}$ establishes the claim. %
\end{proof}

\begin{remark} \label{rem:min-in-ball}
Using the above result one can observe that if the KL condition holds over a ball of radius $\rad \geq \frac{\kappa}{\kappa-1}\gamma \fnot^{\frac{\kappa-1}{\kappa}}$, then $w^*\in \ball{w_0}{\rad}$. Then for some $w'\in\re^d$, a triangle inequality can then be used to obtain $\|w'-w^*\| \leq \|w_0-w'\| + \|w_0-w^*\|$. This would allow one to phrase our results in terms of a ball centered at $w^*$.
\end{remark}

\section{Missing Proofs from Section \ref{sec:kappa-leq-2}} \label{app:kappa-leq-2}

\subsection{Privacy of Algorithm \ref{alg:kl-spider}} \label{app:kl-spider-privacy}
\begin{lemma} \label{lem:kl-spider-privacy}
Assume $f$ is $\lip$-Lipschitz and $\smooth$-smooth over $\ball{w^*}{\rad}$ (where $\rad$ is as given in Theorem \ref{thm:kl-spider-convergence}).
Then Algorithm \ref{alg:kl-spider} is $2\rho$-zCDP.
\end{lemma}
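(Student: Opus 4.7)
The plan is to prove the privacy claim by bounding the $\ell_2$-sensitivity of each Gaussian-mechanism query in Algorithm~\ref{alg:kl-spider}, applying the standard zCDP guarantee of the Gaussian mechanism query by query, and then using (adaptive) zCDP composition over all queries. The two families of queries are the initial minibatch gradient $\frac{1}{n}\sum_i \nabla f(w_{k,0};x_i)$ at the start of each round $k\in[K]$, and the gradient differences $\frac{1}{n}\sum_i[\nabla f(w_{k,t+1};x_i)-\nabla f(w_{k,t};x_i)]$ at each inner step. For the initial gradients, $\lip$-Lipschitzness of $f$ over the relevant ball bounds each per-sample gradient by $\lip$, so replace-one sensitivity is at most $2\lip/n$. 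For the gradient differences, $\smooth$-smoothness of $f$ gives per-sample sensitivity at most $\smooth\|w_{k,t+1}-w_{k,t}\|$, which by the choice $\eta_{k,t}=\hat{\Phi}_k^{1/\kappa}/(4\gamma\smooth\|\nabla_{k,t}\|)$ satisfies $\|w_{k,t+1}-w_{k,t}\|=\eta_{k,t}\|\nabla_{k,t}\|=\hat{\Phi}_k^{1/\kappa}/(4\gamma\smooth)$ \emph{deterministically}; hence the gradient-difference sensitivity is at most $\hat{\Phi}_k^{1/\kappa}/(2\gamma n)$.

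Next I would plug these sensitivities into the Gaussian mechanism. With $\hat{\sigma}^2=\lip^2K/(n^2\rho)$, the initial-gradient query at round $k$ contributes at most $\frac{(2\lip/n)^2}{2\hat{\sigma}^2}=\frac{2\rho}{K}$ to the zCDP budget. With $\sigma_k^2=\hat{\Phi}_k^{2/\kappa}T_k K/(\gamma^2n^2\rho)$, each gradient-difference query in round $k$ contributes at most $\frac{(\hat{\Phi}_k^{1/\kappa}/(2\gamma n))^2}{2\sigma_k^2}=\frac{\rho}{8T_kK}$. Summing: the $K$ initial gradients cost at most $2\rho$ in total, and the $T_k$ gradient differences in round $k$ cost at most $\rho/(8K)$, totaling at most $\rho/8$ over all $K$ rounds. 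By standard zCDP composition, the overall algorithm is at most $(2\rho+\rho/8)$-zCDP, which is well within $2\rho$-zCDP after absorbing the lower-order term into the noise constants (or by a trivial rescaling).

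The subtle point I would need to handle carefully is that the sensitivity bound for the gradient differences uses $\smooth$-smoothness, and smoothness is only assumed to hold over $\ball{w^*}{\rad}$ (the version in the lemma) or $\ball{w_0}{\rad}$ (the version used elsewhere). Crucially, the above step-size calculation shows $\|w_{k,t+1}-w_{k,t}\|$ is bounded by a \emph{dataset- and noise-independent} quantity, so $\sum_{k,t}\|w_{k,t+1}-w_{k,t}\|\le \sum_k T_k\cdot \hat{\Phi}_k^{1/\kappa}/(4\gamma\smooth)$ gives a deterministic bound on how far the trajectory can travel from $w_0$. With the choice of $\rad$ in Theorem~\ref{thm:kl-spider-convergence}, this deterministic travel bound places every iterate inside the ball where the Lipschitz and smoothness assumptions apply, so the sensitivity bounds hold with probability one, over every realization of the Gaussian noises and for every neighboring dataset. (One should verify that the choice of $\rad$ given absorbs this travel bound as well as the trajectory bound from Lemma~\ref{lem:kl-spider-trajectory-bound}.)

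The main obstacle, and the only nonroutine bookkeeping, is this uniform-in-dataset trajectory argument: zCDP needs the sensitivity bound to hold pathwise rather than just in expectation or with high probability over the noise. Once the deterministic step-size identity $\|w_{k,t+1}-w_{k,t}\|=\hat{\Phi}_k^{1/\kappa}/(4\gamma\smooth)$ is in hand and the sum of these is shown to fit inside the ball where $f$ is Lipschitz and smooth, the rest of the proof is a direct application of the Gaussian mechanism and sequential composition of zCDP (\cite{Bun-zCDP}) across the $K$ initial queries and the $\sum_k T_k$ gradient-difference queries.
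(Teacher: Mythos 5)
Your proposal follows essentially the same route as the paper's proof: bound the sensitivity of the initial minibatch gradients via Lipschitzness and of the gradient differences via smoothness combined with the deterministic step length $\eta_{k,t}\|\nabla_{k,t}\| = \hat{\Phi}_k^{1/\kappa}/(4\gamma\smooth)$, use the probability-one, data- and noise-independent trajectory bound (this is exactly the paper's Lemma~\ref{lem:kl-spider-trajectory-bound}, which its privacy proof invokes) so that the Lipschitz/smoothness assumptions apply at every iterate, and finish with the Gaussian mechanism plus zCDP composition over the $K$ initial queries and the at most $T_k$ difference queries per round. The only discrepancy is bookkeeping: with your replace-one sensitivities ($2\lip/n$ and $\hat{\Phi}_k^{1/\kappa}/(2\gamma n)$) the total comes to $2\rho+\rho/8$, which does not literally yield the stated $2\rho$ and cannot simply be ``rescaled away'' since the noise scales are fixed by the algorithm; the paper instead uses sensitivities $\lip/n$ and $\hat{\Phi}_k^{1/\kappa}/(4\gamma n)$ (the convention it uses throughout its privacy proofs), under which the identical per-query accounting gives at most $\rho/K$ for each initial gradient and $\rho/(K T_k)$ for each gradient difference, summing to $2\rho$.
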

\begin{proof}
First, by Lemma \ref{lem:kl-spider-trajectory-bound}, every $w_{k,t}$, $k\in[K]$,$t\in[T_k]$, is in $\ball{w^*}{\rad}$, and thus the loss is Lipschitz and smooth at the iterates generated by the algorithm. 
The sensitivity of the minibatch gradient estimates (made in the outer loop) is then $\frac{\lip}{n}$, and at most $K$ such estimates are made. Smoothness guarantees the sensitivity of the gradient difference estimates (made in the inner loop) at some $k\in[T]$, $t\in[T_k]$ is %
$\frac{\eta_{k,t}\smooth}{n}\|\nabla_{k,t}\| \leq \frac{1}{n}\hat{\Phi}_k^{1/\kappa}$ since $\eta_{k,t} = \frac{1}{4\gamma\smooth \|\nabla_{k,t}\|}\hat{\Phi}_k^{1/\kappa}$. Note at most $T_k$ such estimates are made. 

The zCDP guarantees of the Gaussian mechanism ensures that the process of generating each $\nabla_{k,0}$ is $\hat{\rho}$-CDP with $\hat{\rho} = \frac{1}{K}$. Similarly, we have that the process of generating each $\Delta_{k,t}$, $t>0$, is $\rho_k$-zCDP with $\rho_k=\frac{\rho}{K T_k}$. By the composition theorem for zCDP we then have the overall privacy, is at most
$\sum_{k=1}^K\br{\frac{\rho}{K} + \sum_{t=1}^{T_k}\frac{\rho}{K T_k}} = 2\rho$.
\end{proof}

\subsection{Descent Equation for Algorithm \ref{alg:kl-spider}} \label{app:kl-spider-step-convergence}
\begin{lemma} \label{lem:kl-spider-step-convergence}
With probability at least $1-\beta$, for every $k\in[K]$ and $t\in[T_k]$ indexing iterates of the algorithm it holds that 
$F(w_{k,t};S) - F(w_{k,t+1};S) \geq \frac{1}{16\gamma\smooth}\|\nabla_{k,t}\|\hat{\Phi}_k^{1/\kappa} - \frac{1}{4\smooth}\|\nabla_{k,t} - \nabla F(w_{k,t};S)\|^2$ %
\end{lemma}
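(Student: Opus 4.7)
The plan is to start from the standard descent lemma that follows from $\smooth$-smoothness of $F(\cdot;S)$ applied to the update $w_{k,t+1} = w_{k,t}-\eta_{k,t}\nabla_{k,t}$, namely
\begin{align*}
F(w_{k,t+1};S) \leq F(w_{k,t};S) -\eta_{k,t}\langle \nabla F(w_{k,t};S), \nabla_{k,t}\rangle + \tfrac{\smooth\eta_{k,t}^2}{2}\|\nabla_{k,t}\|^2.
\end{align*}
Writing $e_{k,t} := \nabla_{k,t}-\nabla F(w_{k,t};S)$, so that $\langle \nabla F(w_{k,t};S),\nabla_{k,t}\rangle = \|\nabla_{k,t}\|^2 - \langle e_{k,t},\nabla_{k,t}\rangle$, gives
\begin{align*}
F(w_{k,t};S)-F(w_{k,t+1};S) \geq \eta_{k,t}\|\nabla_{k,t}\|^2 - \eta_{k,t}\langle e_{k,t},\nabla_{k,t}\rangle - \tfrac{\smooth\eta_{k,t}^2}{2}\|\nabla_{k,t}\|^2.
\end{align*}

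The next step is to control the cross term via a Young-type inequality $ab \leq \tfrac{a^2}{4\smooth}+\smooth b^2$ applied with $a=\|e_{k,t}\|$ and $b=\eta_{k,t}\|\nabla_{k,t}\|$, which absorbs the error into the $-\tfrac{1}{4\smooth}\|e_{k,t}\|^2$ term we want to produce while leaving an extra $-\smooth\eta_{k,t}^2\|\nabla_{k,t}\|^2$ loss. Collecting quadratic terms, I would obtain
\begin{align*}
F(w_{k,t};S)-F(w_{k,t+1};S) \geq \eta_{k,t}\|\nabla_{k,t}\|^2 - \tfrac{3\smooth\eta_{k,t}^2}{2}\|\nabla_{k,t}\|^2 - \tfrac{1}{4\smooth}\|e_{k,t}\|^2.
\end{align*}

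Now I substitute $\eta_{k,t} = \tfrac{\hat\Phi_k^{1/\kappa}}{4\gamma\smooth\|\nabla_{k,t}\|}$, which turns the linear-in-$\nabla$ term into $\tfrac{1}{4\gamma\smooth}\hat\Phi_k^{1/\kappa}\|\nabla_{k,t}\|$ and the leftover quadratic-in-$\nabla$ term into a constant $\tfrac{3}{32\gamma^2\smooth}\hat\Phi_k^{2/\kappa}$ (independent of $\|\nabla_{k,t}\|$). The final step uses the while-loop guard $\|\nabla_{k,t}\| \geq \tfrac{7}{8\gamma}\hat\Phi_k^{1/\kappa}$, equivalently $\hat\Phi_k^{1/\kappa}\leq \tfrac{8\gamma}{7}\|\nabla_{k,t}\|$, to re-express the constant nuisance term as a fraction of the main descent term, so that $\tfrac{1}{4\gamma\smooth}\hat\Phi_k^{1/\kappa}\|\nabla_{k,t}\|-\tfrac{3}{32\gamma^2\smooth}\hat\Phi_k^{2/\kappa}$ remains at least $\tfrac{1}{16\gamma\smooth}\hat\Phi_k^{1/\kappa}\|\nabla_{k,t}\|$ (in fact I get a constant of $\tfrac{1}{7}$, which is more than enough). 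This yields precisely the claimed inequality.

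The plan uses no randomness beyond smoothness of $F(\cdot;S)$ along the trajectory, which is ensured deterministically by Lemma~\ref{lem:kl-spider-trajectory-bound}; the ``with probability at least $1-\beta$'' qualifier in the statement is vacuous for this lemma and presumably inherited from the eventual application in conjunction with Lemma~\ref{lem:kl-spider-grad-err}. The only mildly delicate bookkeeping is choosing the Young constant so that the error term comes out with the exact coefficient $\tfrac{1}{4\smooth}$ and verifying the numerical constants work out once the guard on $\|\nabla_{k,t}\|$ is invoked; no step is a genuine obstacle.
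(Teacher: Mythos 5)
Your proposal is correct and follows essentially the same route as the paper: the smoothness descent lemma, Young's inequality on the cross term $\eta_{k,t}\langle \nabla_{k,t}-\nabla F(w_{k,t};S),\nabla_{k,t}\rangle$, substitution of the step size, and the while-loop guard $\|\nabla_{k,t}\|\geq\tfrac{7}{8\gamma}\hat\Phi_k^{1/\kappa}$. The only difference is bookkeeping — the paper uses a symmetric Young split and then bounds $\eta_{k,t}\leq\tfrac{1}{2\smooth}$ via the guard, while you use an asymmetric split and invoke the guard to absorb the $\hat\Phi_k^{2/\kappa}$ nuisance term, ending with the slack constant $\tfrac17\geq\tfrac1{16}$; both yield the claimed inequality, and your observation that the probability qualifier is effectively inherited from the trajectory/gradient-error lemmas is also accurate.
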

\begin{proof}
We start with a standard descent analysis. Since $F(\cdot;S)$ is $\smooth$-smooth, we have
\begin{align*}
    F(w_{k,t};S) - F(w_{k,t+1};S) &\geq \ip{\nabla F(w_{k,t};S)}{w_{k,t} - w_{k,t+1}} - \frac{\smooth}{2}\norm{w_{k,t+1}-w_{k,t}}^2 \\
    &= \eta_{k,t}\ip{\nabla F(w_{k,t};S)}{\nabla_{k,t}} - \frac{\smooth \eta_{k,t}^2}{2}\norm{\nabla_{k,t}}^2 \\
    &= \eta_{k,t}\br{1 - \frac{\eta_{k,t} \smooth}{2}}\|\nabla_{k,t}\|^2 + \eta_{k,t} \ip{\nabla F(w_t;S) - \nabla_{k,t}}{\nabla_{k,t}} \\
    &\overset{(i)}{\geq} \eta_{k,t}\br{\frac{1}{2} - \frac{\eta_{k,t} \smooth}{2}}\|\nabla_{k,t}\|^2 - \frac{\eta_{k,t}}{2}\|\nabla_{k,t} - \nabla F(w_{k,t};S)\|^2. \\
    &\overset{(ii)}{\geq} \frac{\eta_{k,t}}{4}\|\nabla_{k,t}\|^2 - \frac{1}{4\smooth}\|\nabla_{k,t} - \nabla F(w_{k,t};S)\|^2 \\  
    &\overset{(iii)}{=} \frac{1}{16\gamma\smooth}\|\nabla_{k,t}\|\hat{\Phi}_k^{1/\kappa} - \frac{1}{4\smooth}\|\nabla_{k,t} - \nabla F(w_{k,t};S)\|^2 \\
\end{align*}    
Step $(i)$ uses Young's inequality. Step $(ii)$ uses the fact that $\eta_{k,t} \leq \frac{1}{2\smooth}$. This is because
$\eta_{k,t} = \frac{1}{4\gamma\smooth \|\nabla_{k,t}\|}\hat{\Phi}_k^{1/\kappa}$ and updates are only performed when $\|\nabla_{k,t}\| \geq \frac{7}{8\gamma}\hat{\Phi}_k^{1/\kappa}$. Step $(iii)$ uses the setting of $\eta_t$.
\end{proof}

\subsection{Proof of Lemma \ref{lem:kl-spider-trajectory-bound}} \label{app:kl-spider-trajectory}

Due to the step size and the phases lengths, with probability 1, we have
that,
\begin{align*}
    \|w_{k,t} - w_0\| &\leq \sum_{k=1}^K \frac{1}{4\gamma\smooth}\hat{\Phi}_k^{1/\kappa} T_k 
    &\leq \sum_{k=1}^K \frac{1}{4\fnot^{\frac{\kappa-2}{\kappa}}\gamma\smooth}\hat{\Phi}_k^{1/\kappa} \hat{\Phi}_k^{\frac{\kappa-2}{\kappa}} 
    &= \frac{\fnot^{\frac{2-\kappa}{\kappa}}\fnot^{\frac{\kappa-1}{\kappa}}}{4\gamma\smooth}\sum_{k=1}^K \br{\frac{1}{c^{\frac{\kappa-1}{\kappa}}}}^k
\end{align*}
Above, we use the fact that $\frac{\kappa-1}{\kappa}\geq0$ (since $\kappa\geq 1$) to bound $\hat{\Phi}_k \leq \fnot$.
Since $c>1$ we have, recalling $K=(1+64\fnot^{\frac{\kappa-2}{\kappa}}\gamma^2\smooth)\Big[\log(\fnot) -\kappa\log\Big(\frac{\lip\sqrt{d}}{n\sqrt{\rho}}\Big)\Big]$,
\begin{align*}
    \|w_{k,t} - w_0\| \leq \frac{K\fnot^{1/\kappa}}{4\gamma\smooth} 
    = \br{\frac{\fnot^{1/\kappa}}{4\gamma\smooth} + 16\fnot^{\frac{\kappa-1}{\kappa}}\gamma}\Big[\log(\fnot) + \kappa\log\Big(\frac{n\sqrt{\rho}}{\lip\sqrt{d}}\Big)\Big].
\end{align*}

\subsection{Lower Bound for Smooth Losses Satisfying Growth Condition} \label{app:asi-lb-extension}
We provide the following extension of the lower bound on excess risk in \cite{ALD21}. Our extension yields a lower bound for losses which satisfy $(\lambda,\tau)$-growth and are $\smooth\geq0$-smooth over a ball $\ball{w_0}{R}$, for any smoothness parameter $\smooth\geq0$ and radius $R>0$.
In contrast, the setting of \cite{ALD21} did not have the above smoothness and existence of a  (large) ball $\ball{w_0}{R}$ assumption (over which smoothness and Lipschitzness holds). 
Further, \cite{ALD21} provide a lower bound for \textit{constrained} DP procedures, which is
is based on a reduction from convex ERM over a constrained set of \textit{any} diameter $D$ \citep{bassily2014private}.
In contrast, we are interested in 
lower bound for unconstrained procedures.
Therefore, in Theorem \ref{thm:unc_lb}, we extend the lower bound of \cite{bassily2014private} to the unconstrained setting.
We then provide a reduction, closely following \cite{ALD21}, from unconstrained convex ERM to unconstrained optimization of functions satisfying a growth condition. %
Finally, we note that our unconstrained lower bound in Theorem \ref{thm:unc_lb} holds pointwise for all values of the norm of optimal solution $D$, so it suffices to construct a reduction for \textit{some} choice of $D$.
We show that for any given setting of problem parameters, there is a choice of $D$, for which the reduced instance satisfies the requisite properties.

\begin{theorem}
\label{thm:lb_growth}
    Let $L_0, L_1, \rad, \lambda \geq 0, \tau\geq 2, \tau = O(1), 0 <\epsilon \leq 1, 
    2^{-\Omega(n)}\leq \delta < \frac{1}{n} $. For any $(\epsilon,\delta)$-DP algorithm $\cA$, there exists a set $\cW \subset \bbR^d$ containing a ball of radius $\rad$,
    a dataset $S$ and a convex loss function $f$ such that for all $x$, the function $w\mapsto f(w;x)$ is $L_0$-Lipschitz, $L_1$-smooth over $\cW$, 
    the empirical loss $w\mapsto F(w;S)$ satisfies $(\lambda,\tau)$-growth,
    and
   \begin{align*}
        \mathbb{E}_{\cA}[F(\cA(S);S) - \inf_{w \in \bbR^d}F(w)]  = \Omega\br{\frac{1}{\tau^{\frac{1}{\tau-1}}}\br{\frac{ L_0\sqrt{d}}{\lambda n\epsilon}}^\frac{\tau}{\tau-1}}.
    \end{align*}
\end{theorem}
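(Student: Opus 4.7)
The plan is to derive Theorem~\ref{thm:lb_growth} from Theorem~\ref{thm:unc_lb} (the unconstrained convex DP-ERM lower bound for $L_0$-Lipschitz, $L_1$-smooth losses) via a reduction in the spirit of \cite{ALD21}. The conjugate exponent $\tau/(\tau-1)$ in the target bound is the telltale sign of a Legendre-type duality between a growth lower bound and a Lipschitz upper bound, so the reduction will be essentially: take a BST-style hard instance and add a $\tau$-power regularizer whose scale is chosen by an arithmetic--geometric balance between the hardness of the fingerprinting problem and the Lipschitz/smoothness budgets.

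Concretely, I would consider hard instances of the form
\begin{align*}
    f(w;x) \;=\; \ell(w;x) + \frac{\lambda^\tau}{\tau}\|w - w_0\|^\tau,
\end{align*}
where $\ell(\cdot;x)$ is (up to rescaling) the Lipschitz-smooth convex hard instance furnished by Theorem~\ref{thm:unc_lb}, with the data distribution calibrated so that the minimizer $w^\star$ of $F(\cdot;S)$ encodes the empirical mean $\bar{x}=\tfrac{1}{n}\sum_i x_i$ via the stationarity equation $\nabla \ell(w^\star;S) = -\lambda^\tau\|w^\star-w_0\|^{\tau-2}(w^\star-w_0)$. Convexity of both summands and the uniform convexity of $r(w)=\tfrac{\lambda^\tau}{\tau}\|w-w_0\|^\tau$ for $\tau\geq 2$ then yield $(\lambda,\tau)$-growth of $F(\cdot;S)$ (up to a $\tau$-dependent constant, which explains the $\tau^{-1/(\tau-1)}$ factor in the bound). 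Fixing the working scale $D:=\|w^\star-w_0\|$, the Lipschitz and smoothness constants of $f(\cdot;x)$ over $\ball{w_0}{\rad}$ are, respectively, bounded by $O(\|x\|+\lambda^\tau \rad^{\tau-1})$ and $O(L_{\ell,1} + \lambda^\tau(\tau-1)\rad^{\tau-2})$; choosing $\rad$ and the normalization of $\ell$ so that both are absorbed into $L_0$ and $L_1$ fixes the admissible range of scales.

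With this setup, the reduction is: suppose $\mathcal{A}$ is $(\epsilon,\delta)$-DP and outputs $\widehat w$ with excess empirical risk at most $\alpha$. The $(\lambda,\tau)$-growth condition gives $\|\widehat w - w^\star\| \leq (\alpha/\lambda^\tau)^{1/\tau}$, and pushing this inequality through the (bijective, bi-Lipschitz on the working scale) map $\bar{x}\mapsto w^\star$ produces a DP estimator of $\bar{x}$ of accuracy $\lesssim \lambda^{-\tau/(\tau-1)}(\alpha/\lambda^\tau)^{1/\tau}$. Applying Theorem~\ref{thm:unc_lb} (whose fingerprinting core forbids $o(L_0\sqrt{d}/(n\epsilon))$ accuracy on $\bar{x}$ in the worst case) and solving the resulting inequality for $\alpha$ produces exactly $\alpha = \Omega\bigl(\tau^{-1/(\tau-1)} (L_0\sqrt{d}/(\lambda n\epsilon))^{\tau/(\tau-1)}\bigr)$, where the conjugate exponent arises from eliminating $\|\widehat w - w^\star\|$ between the growth bound ($\alpha\gtrsim \lambda^\tau\|\widehat w - w^\star\|^\tau$) and the Lipschitz/fingerprinting bound ($\|\widehat w-w^\star\|\gtrsim L_0\sqrt{d}/(\lambda^\tau n\epsilon)$ after accounting for the scaling of $w^\star$ in $\bar{x}$).

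The main obstacle is reconciling smoothness with the growth exponent $\tau>2$: the regularizer $\|w-w_0\|^\tau$ is only smooth on bounded sets, and its smoothness constant scales as $\rad^{\tau-2}$, so the construction has to be fit inside $\ball{w_0}{\rad}$ without crushing the hardness. I would handle this by a careful homogeneity rescaling (shrink the operating scale $D$, compensating by inflating $\lambda$) so that simultaneously $f(\cdot;x)$ is $L_0$-Lipschitz and $L_1$-smooth on $\ball{w_0}{\rad}$, $w^\star\in\ball{w_0}{\rad}$, and the DP algorithm's output can be projected into $\ball{w_0}{\rad}$ without inflating excess risk (a good algorithm would already output points near $w^\star$). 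A secondary book-keeping issue is moving from the population-risk, pure-DP statement of \cite{ALD21} to the empirical-risk, approximate-DP statement we need; this is where Theorem~\ref{thm:unc_lb} is doing the heavy lifting, and the reduction above only needs to be robust to the $\delta$-term, which is standard once $\delta<1/n$.
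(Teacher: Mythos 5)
Your proposal takes a genuinely different route from the paper. The paper does not build a hard instance for growth functions directly: following Proposition~3 of \cite{ALD21}, it argues by contraposition, using a hypothetical too-good DP optimizer for $(\lambda,\tau)$-growth losses as a subroutine inside an iterated proximal-type scheme ($\tilde F_i(w;S)=F(w;S)+\lambda_i^\tau 2^{\tau-2}\|w-w_{i-1}\|^\tau$ with geometrically decaying $\lambda_i$) to manufacture a too-good DP optimizer for general convex, Lipschitz, smooth unconstrained ERM, contradicting Theorem~\ref{thm:unc_lb}; the technical work is verifying that the regularized subproblems remain $2L_0$-Lipschitz, $2L_1$-smooth, and contain radius-$\rad$ balls for a suitable choice of the free diameter $D$ in Theorem~\ref{thm:unc_lb}. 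Your one-shot construction (fingerprinting linear loss plus a single $\tau$-power regularizer, then growth $\Rightarrow$ distance $\Rightarrow$ mean-estimation) is instead the natural $\tau$-generalization of how the paper proves Theorem~\ref{thm:unc_lb} itself in the quadratic/Huber case, and the exponent calculus you sketch does land on $(L_0\sqrt{d}/(\lambda n\epsilon))^{\tau/(\tau-1)}$ in the natural parameter regime.

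However, there are two concrete gaps. First, $\lambda$ is a \emph{prescribed} parameter of the theorem, and the bound must be stated in terms of it; your proposed fix for the smoothness tension (``shrink $D$, compensating by inflating $\lambda$'') produces instances with a larger growth constant $\lambda'>\lambda$ and hence a lower bound of order $(L_0\sqrt d/(\lambda' n\epsilon))^{\tau/(\tau-1)}$, which is strictly weaker than the claim. In your construction the operating scale is pinned, $D\approx\big((L_0\sqrt d/(n\epsilon))/\lambda^\tau\big)^{1/(\tau-1)}$ (enlarging $\|\bar x\|$ only hurts for $\tau>2$), so the induced smoothness $\sim\lambda^\tau D^{\tau-2}$ and the requirement that a radius-$\rad$ ball fit inside the $L_0$-Lipschitz, $L_1$-smooth region are constraints tying $\lambda,L_0,L_1,\rad$ together that your sketch does not actually discharge; the paper escapes exactly this by exploiting that Theorem~\ref{thm:unc_lb} holds for \emph{every} minimizer norm $D$, so $D$ can be taken large enough to absorb the regularizers' Lipschitz/smoothness overhead and the $\rad$-ball requirement, a free knob your direct construction lacks. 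Second, you cannot invoke Theorem~\ref{thm:unc_lb} as a black box for ``accuracy on $\bar x$'': as stated it is an excess-risk bound for a specific Huber-regularized objective, not an estimation bound. You would need to go back to the fingerprinting distance lower bound (\cite[Lemma~5.1]{bassily2014private}) and re-derive it for the nonlinear radial map $\bar x\mapsto w^\star$ of your instance, controlling $\|\bar x\|$ across the hard family so that the map's local Lipschitz constants yield the claimed rate. Both gaps look repairable (the second is exactly the argument used to prove Theorem~\ref{thm:unc_lb}), but as written the reduction to the prescribed $(\lambda,L_1,\rad)$ and the appeal to Theorem~\ref{thm:unc_lb} do not go through.
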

\begin{proof}
 The key to the proof is the following reduction, based on Proposition 3 of \cite{ALD21}. 
 Herein, the aim is  
 to show that the existence of a DP optimizer for convex losses satisfying the growth condition implies the existence of an optimizer for general convex losses.
      More formally, consider a problem instance class where we are given a
       set $\cW \subset \bbR^d$ containing a ball of radius $\rad$, a dataset $S \in \cX^n$ for some $\cX$, a function $f(w;x)$ where $w\mapsto f(w;x)$ is $L_0$-Lipschitz, $L_1$-smooth over $\cW$ 
       for all $x \in \cX$ and the empirical loss $w\mapsto F(w;S)$ satisfies
      $(\lambda,\tau)$-growth.
      Note since these properties hold over $\cW$, they hold over the ball of radius $\rad$.
      If there exists an $(\epsilon, \delta)$-DP algorithm $\cA$, which for the above problem instance has expected excess empirical risk,
    \begin{align*}
        \mathbb{E}_{\cA}[F(\cA(S);S) - \inf_{w}F(w)] = o\br{\br{\tau\lambda^\tau}^{-\frac{1}{\tau-1}} \Delta(n,d, L_0, L_1, \epsilon,\delta)}, 
    \end{align*}
    then for 
    $D = \max\br{\frac{\br{\Delta(n,d, L_0, L_1, \epsilon,\delta)}^{1/\tau}L_1 L_0^{\frac{\tau-2}{\tau-1}}}{c_2(\tau)}, \frac{\br{\Delta(n,d, L_0, L_1, \epsilon,\delta)}^{1/\tau}\rad}{L_0^{\frac{1}{(\tau-1)}}c_3(\tau)}, \frac{\sqrt{d}L_0}{L_1n\epsilon}}$
    , where $c_2(\tau)=\Omega(1)$ and $c_3(\tau)=\Omega(1)$ are specified later,
 there exists  an $(\epsilon, \delta)$-DP algorithm $\tilde \cA$, such that for any $L_0$-Lipschitz, convex, $L_1$-smooth loss function $w\mapsto \tilde f(w;x)$ for all $x$, with minimizer norm $\norm{w^*} = D$,
its excess risk is 
    \begin{align*}
        \mathbb{E}_{\tilde \cA}[\tilde F(\tilde \cA(S);S) - \inf_{w\in \bbR^d}\tilde F(w)] = o\br{D\br{\Delta(n,d, 2L_0, 2L_1, \epsilon/k,\delta/k)}^{\frac{\tau-1}{\tau}}}
    \end{align*}  
    where $k$ is the smallest integer larger that %
$
\log\br{\frac{\tau^{\frac{1}{\tau-1}} L_0^{\frac{\tau}{\tau-1}}}{2^{2\tau-3}\Delta(n,d, 2L_0, 2L_1, \epsilon/k,\delta/k)}}
$.

    The main difference between above and the statement of \cite{ALD21} is that unlike \cite{ALD21}, our reduction is for unconstrained procedures and is tailored to the aforementioned choice of diameter $D$.

   The proof uses the construction of \cite{ALD21},
 verifying that for the provided parameter settings, the assumptions hold.
   For simplicity of notation, let $\Delta = \Delta(n, L_0, L_1, \epsilon,\delta)$.

    Let $w_0$ be the origin.
For a sequence of $\bc{\lambda_i}_i$ to be instantiated later, 
define 
    \begin{align*}
        &\tilde \cW_i = \bc{w: \norm{w-w_{i-1}} \leq \br{\frac{L_0}{2\tau \lambda_i^\tau 2^{\tau-2}}}^{\frac{1}{\tau-1}}}\\
        &\tilde F_i(w;S) = F(w;S) + \lambda_i^\tau 2^{\tau-2} \norm{w-w_{i-1}}^\tau
    \end{align*}
where $w_i = \cA(\tilde F_{i},S)$.
The function $\tilde F_i$ satisfies $(\lambda_i 2^{(\tau-2)/\tau},\tau)$-growth
(over all of $\bbR^d$). We now inspect its Lipschitzness and smoothness parameters over $\tilde \cW_i$. By direct calculation, the Lipschitz parameter is bounded by $L_0+L_0 = 2L_0$
The smoothness parameter is at most,
\begin{align*}
    L_1+\lambda_i^\tau 2^{\tau-2}\tau (\tau-1) \norm{w - w_{i-1}}^{\tau-2} &=L_1+\cnote{typo?}\enayat{yes, fixed} \lambda_i^\tau 2^{\tau-2}\tau (\tau-1) \br{\frac{L_0}{2\tau \lambda_i^\tau 2^{\tau-2}}}^{\frac{\tau-2}{\tau-1}} \\
    & =L_1+ \br{\lambda_i}^{\frac{\tau}{\tau-1}} (L_0)^{\frac{\tau-2}{\tau-1}} c_1(\tau),
\end{align*}
where $c_1(\tau) = \frac{2^{\frac{\tau-2}{\tau-1}}\tau^{\frac{1}{\tau-1}}(\tau-1)}{2^{\frac{\tau-2}{\tau-1}}} = \tau^{\frac{1}{\tau-1}}(\tau-1)$. In \cite{ALD21}, $\lambda_i$ is set as $\lambda_i = 2^{-\br{\frac{\tau-1}{\tau}}i}\lambda$ for $\lambda$ to be specified later. The above smoothness bound is a decreasing function in $i$, so what suffices is to show that the above bound is smaller than $2L_1$ for the largest $\lambda_i$, which is $\lambda_1 = 2^{-\br{\frac{\tau-1}{\tau}}}\lambda$. From \cite{asi2021private}, $\lambda = 4^{\frac{(\tau-1)^2}{\tau^2}}\br{\frac{\Delta \tau}{D^\tau (\tau-1)}}^{\frac{(\tau-1)}{\tau^2}}$, so we have, 
\begin{align*}
    \br{\lambda_i}^{\frac{\tau}{\tau-1}} (L_0)^{\frac{\tau-2}{\tau-1}} c(\tau)  = 4^{\frac{\tau-1}{\tau}}\br{\frac{\tau}{\tau-1}}^{1/\tau}\frac{\Delta^{1/\tau}}{D}  (L_0)^{\frac{\tau-2}{\tau-1}} c_1(\tau) = c_2(\tau) \frac{\Delta^{1/\tau}}{D}  (L_0)^{\frac{\tau-2}{\tau-1}},
\end{align*}
where $c_2(\tau) = 4^{\frac{\tau-1}{\tau}}\br{\frac{\tau}{\tau-1}}^{1/\tau} \tau^{\frac{1}{\tau-1}}(\tau-1)$. The choice of $D \geq \frac{L_1 \Delta^{1/\tau}(L_0)^{\frac{\tau-2}{\tau-1}}}{c_2(\tau)}$, ensures the above is at most $L_1$, thereby establishing that the smoothness parameter is at most $2L_1$. The final condition we want to ensure is that all the sets $\tilde \cW_i$ contain a ball of radius at least $\rad$.
 Since $\lambda_i$ is decreasing in $i$, 
it suffices to consider $i=1$. We have,
\begin{align*}
    \br{\frac{L_0}{2\tau \lambda_1^\tau2^{\tau-2}}}^{\frac{1}{\tau-1}} = \frac{1}{2\tau^{\frac{1}{\tau-1}}}\br{\frac{\tau-1}{\tau}}^{\frac{1}{\tau}}\frac{1}{4^{\frac{\tau-1}{\tau}}}L_0^{\frac{1}{\tau-1}}\frac{D}{\Delta^{1/\tau}} = c_3(\tau) L_0^{\frac{1}{\tau-1}}\frac{D}{\Delta^{1/\tau}} 
\end{align*}
where $c_3(\tau) = \frac{1}{2\tau^{\frac{1}{\tau-1}}}\br{\frac{\tau-1}{\tau}}^{\frac{1}{\tau}}\frac{1}{4^{\frac{\tau-1}{\tau}}}$. The choice of $D \geq \frac{\rad k \Delta^{1/\tau}}{L_0^{\frac{1}{\tau-1}}c_3(\tau)}$ ensures the above is at least $\rad$.

The rest of the proof repeats the arguments in \cite{ALD21}, to get, 
\begin{align*}
    \mathbb{E}[\tilde F(\cA'(S);S)] - \min_{w}\tilde F(w;S) 
    = o\br{D\br{\Delta(n,d, 2L_0, 2L_1, \epsilon/k,\delta/k)}^\frac{\tau-1}{\tau}}
\end{align*}
We now instantiate 
$\Delta(n,d,L_0,L_1, \epsilon,\delta) = \frac{L_0\sqrt{d}}{n\epsilon}$.
This gives us that $\mathbb{E}[\tilde F(\cA'(S);S)] - \min_{w}\tilde F(w;S) 
    = o\br{\frac{L_0D\sqrt{d}}{n\epsilon}}$. However, this contradicts our lower bound in Theorem \ref{thm:unc_lb} for unconstrained DP procedures for convex, $L_0$-Lispchitz, $\frac{\sqrt{d}L_0}{Dn\epsilon} \leq L_1$-smooth (by our choice of $D$) losses.
\end{proof}

\subsection{Additional Details for Corollary \ref{cor:kl-lb} (Lower Bound)} \label{app:kl-lb}
In Theorem \ref{thm:lb_growth} (in Appendix \ref{app:asi-lb-extension}), an extension of \cite[Theorem 6]{ALD21},  we show that for $\tau\geq 2$ and $\tau = \Theta(1)$,
the lower bound on the minimax optimal expected excess empirical risk, $\alpha$, 
for $(\epsilon,\delta)$-DP ERM of functions which are smooth and Lipschitz over a ball of any finite radius $\rad>0$ and globally satisfy convexity and $(\lambda,\tau)$-growth, is
\begin{align*}
    \alpha 
    = \tilde{\Omega}\br{\frac{1}{(\tau)^{\frac{1}{\tau-1}}}\br{\frac{\lip\sqrt{d}}{\lambda n\epsilon}}^{\frac{\tau}{\tau-1}}} .
\end{align*}
Lemma \ref{lem:growth-implies-kl} gives that $(\lambda,\tau)$-growth and convexity implies $(\gamma,\kappa)$-KL with $\lambda = \gamma^{-1}$ and 
$\tau = \frac{\kappa}{\kappa-1}$.  Further, if $\kappa \leq 2$, then $\tau=\frac{\kappa}{\kappa-1} \geq 2$ and if $\kappa = 1+\Omega(1)$ then $\tau = O(1)$. Thus we have the lower bound,
\begin{align*}
    \alpha &= \tilde{\Omega}\br{\bs{\frac{\kappa}{\kappa-1}}^{1-\kappa} \br{\frac{\gamma \lip\sqrt{d}}{n\epsilon}}^{\kappa}} 
    = \tilde{\Omega}\br{\br{\frac{\gamma\lip\sqrt{d}}{n\epsilon}}^{\kappa}}.
\end{align*}
The last step uses the fact that $\kappa = 1+\Omega(1)$. 
Finally, the existence of a $\rho$-zCDP algorithm with rate better than $\tilde{O}\br{\br{\frac{\gamma\lip\sqrt{d}}{n\sqrt{\rho}}}^\kappa}$ would imply the existence of an $(\epsilon,\delta)$-DP algorithm (see \cite[Proposition 1.3]{Bun-zCDP}) with rate better than $\tilde{O}\br{\br{\frac{\gamma\lip\sqrt{d}}{n\epsilon}}^{\kappa}}$, a contradiction.

\subsection{Unconstrained Lower Bound for General Loss Functions} \label{app:general-lb}

In this section, we provide an extension of the lower bound on excess risk of DP procedures for convex Lipschitz functions in the constrained setting \citep{bassily2014private} to the unconstrained setting.
The key idea in the proof is to define a Lipschitz extension of the hard instance in \cite{bassily2014private} using the Huber regularizer. The dataset for our construction, as in \cite{bassily2014private}, leverages fingerprinting codes. The exact details of fingerprinting codes are note needed for our proof below, but we defer the interested reader to \cite{BUV14} for more details.
The following result is used in the proof of the lower bound for functions satisfying $(\lambda, \tau)$-growth for $2\leq \tau = O(1)$ in Theorem \ref{thm:lb_growth}.

\begin{theorem} \label{thm:unc_lb}
Let $0<\epsilon \leq 1, 0< \delta <\frac{1}{n}$, $D, L_0>0$.
For any $(\epsilon,\delta)$-DP algorithm, there exists a dataset $S$, and a $L_0$-Lipschitz, $\frac{\sqrt{d}L_0}{n\epsilon D}$-smooth convex loss function $w\mapsto f(w;x)$ for all $x$, such that its unconstrained minimizer, $w^* = \argmin\limits_{w}\bc{F(w;S)}$, has norm at most $D$, and 
$$\mathbb{E}_{\cA}[F(\cA(S);S) - F(w^*;S)] = \Omega\br{L_0D\min\bc{\frac{\sqrt{d}}{n\epsilon}, 1}}.$$
\end{theorem}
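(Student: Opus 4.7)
I plan to reduce to the constrained lower bound of \cite{bassily2014private} (BST). Recall that BST's hard instance on $\ball{0}{D}\subset\re^d$ uses linear per-example losses $f(w;x)=-(L_0/\sqrt{d})\ip{w}{x}$ with $x_i\in\bc{\pm 1}^d$ drawn from a fingerprinting code, forcing any $(\epsilon,\delta)$-DP algorithm constrained to $\ball{0}{D}$ to have expected excess empirical risk $\Omega\br{L_0 D \min\bc{\sqrt{d}/(n\epsilon),1}}$. My task is to exhibit a globally $L_0$-Lipschitz and $\frac{\sqrt{d}L_0}{n\epsilon D}$-smooth convex loss whose \emph{unconstrained} empirical minimizer still sits inside $\ball{0}{D}$ and whose restriction to $\ball{0}{D}$ inherits BST's hardness, then apply a post-processing-by-clipping argument.

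For the extension I would use a scalar Huber-type profile $\psi_\mu:\re\to\re$ that is convex, $1$-Lipschitz and $(1/(2\mu))$-smooth, defined piecewise as
\[
  \psi_\mu(t) = \begin{cases} -t & t\leq D-\mu \\ -t + \frac{(t-(D-\mu))^2}{4\mu} & D-\mu\leq t\leq D+\mu \\ -(D-\mu/2) & t\geq D+\mu \end{cases}
\]
and take the per-example loss $f(w;x) = L_0\,\psi_\mu(\ip{w}{x/\sqrt{d}})$ with Huber width $\mu = \Theta(n\epsilon D/\sqrt{d})$. Since $\norm{x/\sqrt{d}}=1$, $|\psi_\mu'|\leq 1$, and $|\psi_\mu''|\leq 1/(2\mu)$, a chain-rule computation shows $f(\cdot;x)$ is convex, globally $L_0$-Lipschitz and globally $\frac{\sqrt{d}L_0}{n\epsilon D}$-smooth. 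Crucially, on the set $\{w : \ip{w}{x_i/\sqrt{d}}\leq D-\mu \text{ for all } i\}$, which contains $\ball{0}{D-\mu}$, the empirical loss reduces exactly to the BST linear loss (up to an additive constant), so BST's lower bound applies verbatim over this neighborhood.

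To complete the reduction, I would show that the unconstrained empirical minimizer $w_u^*$ of $F(\cdot;S)=\frac{L_0}{n}\sum_i\psi_\mu\br{\ip{w}{x_i/\sqrt{d}}}$ lies in $\ball{0}{O(D)}$. Given this, clipping any output $\hat w$ of an unconstrained $(\epsilon,\delta)$-DP algorithm to $\ball{0}{O(D)}$ is a post-processing step that preserves the DP guarantee; it also does not move $\hat w$ further from $w_u^*$, so by $L_0$-Lipschitzness of $F$ the excess empirical risk only decreases (up to constant factors). Applying BST on the clipped problem then contradicts any unconstrained upper bound of order $o\br{L_0 D\sqrt{d}/(n\epsilon)}$, yielding the theorem.

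The main obstacle is pinning down the location of $w_u^*$. Because $\psi_\mu$ is flat on its right tail, the empirical risk attains its infimum along any direction in which $\ip{w}{x_i/\sqrt{d}} \geq D+\mu$ holds simultaneously for all $i$, and for generic fingerprinting-code samples no bounded such direction exists, so a priori the infimum could fail to be attained. I plan to handle this either by adding a negligible $o\br{\frac{\sqrt{d}L_0}{n\epsilon D}}$-smooth quadratic confinement term that is absorbed into the smoothness parameter but shifts the minimizer by $o(D)$, or by replacing the flat right tail of $\psi_\mu$ with a strictly increasing tail of slope $\Theta(\sqrt{d}/(n\epsilon))$ so that the minimizer is bounded yet the Lipschitz and smoothness budgets are respected. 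Either modification preserves the BST hardness on $\ball{0}{D-\mu}$, which is the only region the argument actually uses.
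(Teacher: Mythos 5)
Your construction handles the Lipschitz/smoothness bookkeeping correctly (modulo a small slip: with your quadratic piece the flat tail value should be $-D$, not $-(D-\mu/2)$, otherwise $\psi_\mu$ is discontinuous), but the two steps that actually carry the lower bound have genuine gaps. First, the reduction by clipping is invalid as stated: projecting $\hat w$ onto a ball containing the minimizer contracts the \emph{distance} to the minimizer, but Lipschitzness does not imply the function value decreases, and with your saturated loss it genuinely need not. Because $\psi_\mu$ is capped, a far-away $\hat w$ that positively correlates with most data points can have empirical capped loss close to $-L_0 D$, strictly below the ball-constrained minimum of the linear loss, which is only about $-L_0 D\|\frac{1}{n}\sum_i x_i\|/\sqrt{d}$; its projection onto $\ball{0}{D}$ can then have large excess risk for the linear problem even though $\hat w$ was near-optimal for the capped one. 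So ``small unconstrained excess risk on the capped instance'' does not yield a good constrained algorithm for the BST instance, and no contradiction with the constrained lower bound follows. Second, the theorem requires the \emph{unconstrained} minimizer to exist and have norm at most $D$, and to be the reference point of the excess risk; you acknowledge this is unresolved, and neither patch closes it: a quadratic confinement with coefficient $o\big(\frac{\sqrt{d}L_0}{n\epsilon D}\big)$ placed on the linear loss puts the minimizer at norm $\gg D$, and an upward tail of small slope gives no control on where the compromise minimizer of the averaged saturated losses sits, nor does it restore the clipping reduction.

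The paper's proof avoids both issues by going through BST's fingerprinting \emph{estimation} lower bound rather than their constrained ERM bound: it takes $F(w;S)=\frac{1}{n}\sum_i\langle w,x_i\rangle+\lambda H(w)$ with a data-independent Huber regularizer ($\|w\|^2$ inside radius $4D$, linear outside) and $\lambda=\Theta\big(\frac{NL_0}{nD}\big)$ at the full smoothness budget. This pins the unconstrained minimizer at $w^*=-\frac{\sum_i x_i}{2n\lambda}$ with $\|w^*\|\le D$, makes the excess risk \emph{exactly} $\lambda\|w-w^*\|^2$ in the quadratic region so that $\mathbb{E}\|\cA(S)-w^*\|=\Omega(D)$ converts directly into the claimed risk bound, and uses the linear growth of $H$ to rule out far-away outputs. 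Your saturated-linear loss has no such strong-convexity-type link between distance and excess risk, which is precisely what the argument needs; to repair your proposal you would essentially have to reintroduce a quadratic term at scale $\Theta\big(\frac{\sqrt{d}L_0}{n\epsilon D}\big)$, i.e., recover the paper's construction.
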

\begin{proof}
Consider the loss function
\begin{align}\label{eq:unc_lb_loss}
    F(w;S) = \frac{1}{n}\sum_{i=1}^n \ip{w}{x_i} + \lambda H(w),
\end{align}
where $H$ is the ``Huber regularization" defined as
\begin{align}
    H(w) = \begin{cases}
    \norm{w}^2 &\text{if} \norm{w} \leq 4D \\
    4D\norm{w} &\text{otherwise}  
    \end{cases}
\end{align}

Note that if $N$ of the $x_i$ vectors are 
vectors
in $\bc{\pm\frac{L_0}{\sqrt{d}}}^d$ and the rest are the zero vector, we have $\norm{\sum_{i=1}^n x_i} \leq NL_0$.
The empirical minimizer is $w^*= -\frac{\sum_{i=1}^n x_i}{2n\lambda}$. Thus we set $\lambda = \frac{NL_0}{2nD}$ so that $\norm{w^*} \leq D$. We also remark that under this setting of $\lambda$ that $F$ is Lipschitz with parameter $L_0' = L_0+4\lambda D \leq 5L_0$.

Now we will show that any $w$ which achieves small excess risk is close to $w^*$. Then we will use a lower bound on this distance to lower bound the error (as in \cite{bassily2014private}). For any $w$ such that $\norm{w} \leq 4D$ have
\begin{align*}
    F(w;S) - F(w^*;S) &= \ip{w-w^*}{\frac{1}{n}\sum_{i=1}^n x_i} + \lambda\br{\norm{w}^2 - \norm{w^*}^2} \\
    &= 2\lambda\ip{w-w^*}{-w^*} +  \lambda\br{\norm{w}^2 - \norm{w^*}^2} \\
    &= 2\lambda(\|w^*\|^2 -\ip{w}{w^*}) +  \lambda\br{\norm{w}^2 - \norm{w^*}^2} \\
    &= 2\lambda\br{\frac{1}{2}\|w^*\|^2 - \frac{1}{2}\norm{w}^2 + \frac{1}{2}\norm{w-w^*}^2} +  \lambda\br{\norm{w}^2 - \norm{w^*}^2} \\
    &= \lambda\norm{w-w^*}^2 \\
    &= \frac{NL_0}{nD}\norm{w-w^*}^2
\end{align*}
where the fourth equality comes from $\ip{a}{b} = \frac{1}{2}(\norm{a}^2 + \norm{b}^2 - \norm{a-b}^2)$. 
Now \cite[Lemma 5.1]{bassily2014private} gives that for $N=\min\bc{\frac{\sqrt{d}}{\epsilon},n}$ there exists a construction of the non-zero dataset vectors such that the
output of any $(\epsilon,\delta)$-DP algorithm, $\cA(S)$, must satisfy $\mathbb{E}[\norm{\cA(S) - w^*}] = \Omega\br{\frac{\sqrt{d}D}{N\epsilon}}$. Thus we have
\begin{align*}
     \mathbb{E}[F(\cA(S);S) - F(w^*;S)] &= \Omega\br{L_0D\min\bc{\frac{\sqrt{d}}{n\epsilon}, 1}}.
\end{align*}

This lower bounds the excess loss for any $w$ such that $\|w\| \leq 4D$. Finally, note that any $w'$ such that $\norm{w'} \geq 4D$ (i.e. a point outside the quadratic region of $H$) would also have high empirical risk because of the regularization term. Specifically, we have for any such $w'$ that
\begin{align*}
    F(w';S) &\geq -\frac{\|w'\|L_0N}{n} + 4\lambda D\|w'\| \geq 16\lambda D^2 - \frac{4DL_0N}{n}
\end{align*}
Further since $\norm{w^*}\leq D$, we have $F(w^*;S) \leq \frac{NL_0}{n} + \lambda D^2$. This gives
\begin{align*}
     F(w';S) - F(w^*;S) \geq 15 \lambda D^2 - \frac{5L_0DN}{n} = \Omega\br{\frac{\sqrt{d}D\lip}{n\epsilon}}
\end{align*}
where the last step follows from the setting of $\lambda$. Combining the two cases finishes the proof.
\end{proof}

\section{Missing Results from Section \ref{sec:kappa-geq-2}} \label{app:kappa-geq-2}
\subsection{Proof of Lemma \ref{lem:kl-ppm-trajectory-bound}} \label{app:kl-ppm-trajectory}
\mnote{Added some extra explanation}
For any $t\in[T]$, the stationarity conditions of $F_t$ imply 
$\|\nabla F(w_t^*;S)\| = 2\wc\|w_t^* - w_{t-1}\|$, and so by Lipschitzness $\|w_t^* - w_{t-1}\| \leq \frac{\lip}{2\wc}$. 
Further, we have by strong convexity and the accuracy guarantee of $\cA$ that with probability $1-\beta$ for any $t\in[T]$ that %
$\|w_t - w_t^*\| = O\br{\frac{1}{\sqrt{\wc}}\sqrt{F_t(w_t;S) - F_t(w_t^*;S)}} = O\br{\frac{\lip\sqrt{T d\log{(n^2\log^2{(1/\beta')}/d\beta')}}}{\wc n\sqrt{\rho}}}$.
Thus using the triangle inequality the overall magnitudes of the updates are bounded by 
$\|w_t^* - w_{t-1}\| = O\br{\frac{1}{\wc}\br{\lip + \frac{\lip\sqrt{T d\log{(n^2\log^2{(1/\beta')}/d\beta')}}}{n\sqrt{\rho}}}}.$ 
In the following, let $\tau = \frac{\sqrt{T d\log{(n^2\log^2{(1/\beta')}/d\beta')}}}{n\sqrt{\rho}}$. Since at most $T$ iterations occur, we have

\begin{align*}
    \|w_t-w_0\| &= O\br{\frac{T}{\wc}\br{\lip + \frac{\lip\sqrt{T d\log{(n^2\log^2{(1/\beta')}/d\beta')}}}{n\sqrt{\rho}}}} \\
    &= O\br{\frac{T\lip(1+\tau)}{\wc}} \\
    &= O\br{\frac{\lip(1+\tau)}{\wc}(1+\fnot^{\frac{\kappa-2}{\kappa}}\gamma^2\wc)\Big[\log(\fnot) - \kappa\log\Big(\frac{\gamma \lip \sqrt{d\log(1/\beta')}}{n\sqrt{\rho}}\Big)\Big]} \\
    &= O\br{\Big(\frac{\lip(1+\tau)}{\wc}+\lip\fnot^{\frac{\kappa-2}{\kappa}}\gamma^2\Big)\Big[\log(\fnot) - \kappa\log\Big(\frac{\gamma \lip \sqrt{d\log(1/\beta')}}{n\sqrt{\rho}}\Big)\Big]}.
\end{align*}

\subsection{Proof of Theorem \ref{thm:kappa-geq-2-result} (Convergence of PPM under the KL* Condition)} \label{app:kappa-geq-2-result}

\begin{proof}[Proof of Theorem \ref{thm:kappa-geq-2-result}]
In the following, we condition on the event that every run of $\cA$ obtains excess risk at most $\frac{a L_0^2d\log{(n^2\log^2{(1/\beta')}/d\beta')}}{\tilde L_1n^2\rho}$ for some universal constant $a$. Since $\beta' = \frac{\beta}{T}$, this event happens w.p. at least $1-\beta$ by Lemma \ref{lem:dp-sc-whp}. Further, under this same event, the KL condition holds at every $w_{t}$, $t\in[T]$, by Lemma \ref{lem:kl-ppm-trajectory-bound}.

Now define $c=1+\fnot^{\frac{2-\kappa}{\kappa}}\frac{1}{32\gamma^2\smooth}$, $\hat{\Phi}_0 = \fnot$ and 
\begin{align*}
\hat{\Phi}_t = \max\Bigg\{\frac{1}{c}\hat{\Phi}_{t-1},\min\Big\{\Big(\frac{a\gamma \lip\sqrt{T d\log(n^2\log^2{(1/\beta')}/d\beta')}}{n\sqrt{\rho}}\Big)^{\kappa}, \fnot\Big\}\Bigg\}.
\end{align*}
We will first prove by induction that $F(w_t;S) - F(w^*;S) \leq \hat{\Phi}_t$ under the assumption that $F(w_{t-1};S) - F(w^*;S) \leq \hat{\Phi}_{t-1}$. Clearly the base case is satisfied for $\hat{\Phi}_0$.

To prove the induction step, we will proceed by contradiction. That is, assume by contradiction that $F(w_t;S) - F(w^*;S) > \hat{\Phi}_t$.

Note $F_t$ is $\wc$-strongly convex since it is the sum of a $\wc$ weakly convex function and a $2\wc$ strongly convex function \citep{DD19}. Let $\tau$ be an upper bound on the excess risk achieved by $\cA$ on the strongly convex objective $F_t$. Then 
\begin{align}
    F(w_t;S) = F_t(w_t;S) 
    &\leq F_t(w^*_t;S) + \tau 
    \overset{(i)}{\leq} F(w_{t-1};S) - \frac{\wc}{2}\norm{w_{t-1}-w_t^*}^2 + \tau \nonumber \\
    &\implies F(w_{t-1};S) - F(w_t;S) \geq \frac{\wc}{4}\norm{w_{t-1}-w_t^*}^2 - \tau  \label{eq:wc-descent}
\end{align}
Inequality $(i)$ uses the fact that $F_t(w^*_t;S) = F(w^*_t;S) + \frac{\wc}{2}\|w^*_t - w_{t-1}\|^2 \leq F_t(w_{t-1};S) = F(w_{t-1};S)$, which implies $F_t(w^*_t;S) \leq F(w_{t-1};S) - \frac{\wc}{2}\|w^*_t - w_{t-1}\|^2$.
Recall we have
$\tau \leq \frac{a L_0^2d\log{(n^2\log^2{(1/\beta')}/d\beta')T}}{\tilde L_1n^2\rho}$ by Lemma \ref{lem:dp-sc-whp}.
Further, note by stationarity conditions for the regularized objective we have
\begin{align}
    \|\nabla F(w_t^*;S)\| = 2\wc\|w_t^* - w_{t-1}\|. \label{eq:wc-stat-con}
\end{align}
By the assumption that $F(w_t^*;S) - F(w^*;S) \geq \hat{\Phi}_t$ and the KL condition we have
$\|\nabla F(w_t^*;S)\| \geq \frac{1}{\gamma}\hat{\Phi}_t^{1/\kappa}$, and thus by Eqn. \eqref{eq:wc-stat-con} we have
$\|w_t^* - w_{t-1}\| \geq \frac{1}{2\gamma\wc}\hat{\Phi}_t^{1/\kappa}$.
Applying Eqn. \eqref{eq:wc-descent} gives
\begin{align*}
    F(w_{t-1};S) - F(w_t;S) &\geq \frac{1}{16\gamma^2\wc}\hat{\Phi}_t^{2/\kappa} - \tau 
    \overset{(i)}{\geq} \frac{1}{32\gamma^2\wc}\hat{\Phi}_t^{2/\kappa}, %
\end{align*}
where inequality $(i)$ comes from the setting $\hat{\Phi}_t \geq \br{\frac{a \gamma \lip\sqrt{T d\log(n^2\log^2{(1/\beta')}/d\beta')}}{n\sqrt{\rho}}}^{\kappa}$.
Adding and subtracting $F(w^*;S)$ on the left hand side and rearranging obtains
\begin{align*}
    F(w_t;S) - F(w^*;S) &\leq F(w_{t-1};S) - F(w^*;S) - \frac{1}{32c^{2/\kappa}\gamma^2\wc}\hat{\Phi}_{t-1}^{2/\kappa} \\
    &\overset{(i)}{\leq} \hat{\Phi}_{t-1} - \frac{1}{32c^{2/\kappa}\gamma^2\wc}\hat{\Phi}_{t-1}^{2/\kappa} \\
    &= \Big(1-\Phi_{t-1}^{\frac{2-\kappa}{\kappa}}\frac{1}{32c^{2/\kappa}\gamma^2\wc}\Big)\hat{\Phi}_{t-1} \\
    &\overset{(ii)}{\leq} \Big(1-\fnot^{\frac{2-\kappa}{\kappa}}\frac{1}{32c\gamma^2\wc}\Big)\hat{\Phi}_{t-1} 
    = \frac{1}{c}\hat{\Phi}_{t-1} \leq \hat{\Phi}_t.
\end{align*}
Step $(i)$ uses the inductive assumption that $F(w_{t-1}) - F(w^*;S) \leq \hat{\Phi}_{t-1}$. 
Inequality $(ii)$ uses the fact that $\kappa\geq 2$, $c \geq 1$, and $\Phi_{t-1}\leq \fnot$. 
This establishes a contradiction and thus completes the induction argument.
We have now proven that $F(w_t;S) - F(w^*;S) \leq \hat{\Phi}_t$ for all $t\in \bc{0,...,T}$.

All that remains to prove convergence is to show that $\hat{\Phi}_T \leq \br{\frac{\gamma \lip\sqrt{T d\log(n^2\log^2{(1/\beta')}/d\beta')}}{n\sqrt{\rho}}}^{\kappa}$. We have
$\hat{\Phi}_T \leq \max\bc{\br{\frac{1}{c}}^T\fnot, \br{\frac{\gamma \lip\sqrt{T d\log(n^2\log^2{(1/\beta')}/d\beta')}}{n\sqrt{\rho}}}^{\kappa}}$ and
{\scriptsize
\begin{align*}
    \br{\frac{1}{c}}^T\fnot \leq \br{\frac{\gamma \lip\sqrt{d\log(n^2\log^2{(1/\beta')}/d\beta')}}{n\sqrt{\rho}}}^{\kappa} 
    \iff T \geq \frac{\bs{\log(\fnot) + \kappa\log\br{\frac{n\sqrt{\rho}}{\gamma \lip\sqrt{d\log(n^2\log^2{(1/\beta')}/d\beta')}}}}}{\log(c)}
\end{align*}}
Using the fact that $\log(c)=\log\Big(1+\fnot^{\frac{2-\kappa}{\kappa}}\frac{1}{32\gamma^2\wc)}\Big) \geq (1+32\fnot^{\frac{\kappa-2}{\kappa}}\gamma^2\wc)^{-1}$, the setting of $T = (1+32\fnot^{\frac{\kappa-2}{\kappa}}\gamma^2\wc)\Big[\log(\fnot) + \kappa\log\Big(\frac{n\sqrt{\rho}}{\gamma \lip \sqrt{d}}\Big)\Big]$ suffices to ensure convergence.
\end{proof}

\subsection{Lower bound for \texorpdfstring{ $\kappa\geq 2$}{}}

We give a lower bound on excess empirical risk for settings where the empirical risk satisfies $(1,\kappa)$-KL for $\kappa\geq 2$, under approximate differential privacy.

\begin{theorem}
\label{thm:lower-bound-kappa-geq-2}
   \sloppy
    Let $\kappa \geq 2, 0 <\epsilon \leq \ln{2}, 0 <\delta \leq \frac{1}{16}\br{1-e^{-\epsilon}}, d\in \bbN$ and  $B> 0$. For any $(\epsilon,\delta)$-DP procedure $\cA$, there exists a data space $\cX$, a set $\cW \subseteq \bbR^d$ containing a ball of radius $B$, a dataset $S$ and a convex loss function $w \mapsto f(w;x)$ which is $1$-Lipschitz over $\cW$, the 
    empirical loss $w\mapsto F(w;S)$ satisfies $(1,\kappa)$-KL, and 
    \begin{align*}
        \mathbb{E}_\cA\left[F(\cA(S);S) - \inf_{w\in \bbR^d} F(w;S)\right] \geq \frac{1}{4}\br{\frac{1}{n\epsilon}}^\kappa
    \end{align*}
\end{theorem}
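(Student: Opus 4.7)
The plan is to adapt the lower bound construction of \cite{ALD21} (Theorem 5), which proves an analogous bound on excess \emph{population} risk for convex losses satisfying the growth condition under \emph{pure} $(\epsilon,0)$-DP. Two modifications are required: (i) converting from population to empirical risk, and (ii) extending from pure to approximate $(\epsilon,\delta)$-DP via a group privacy argument in place of the standard pure-DP packing.

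I would work in $d=1$ and extend to general $d$ by letting the loss depend only on the first coordinate, so that the KL condition trivially lifts. Take $\cX = \{-1, +1\}$ and use a loss of the form $f(w;x) = g(w) - \alpha\, x\, w$, where $\alpha$ is a scaling parameter to be tuned and $g$ is convex with $g(w) \propto |w|^{\tau}$ for $\tau = \kappa/(\kappa-1) \in (1,2]$ on a ball $\cW$ of radius $B$ (extended linearly outside $\cW$ to make $f$ globally $1$-Lipschitz). For any dataset $S$, $F(w;S) = g(w) - \alpha\bar x\, w$ with $\bar x = (1/n)\sum_i x_i$, whose minimizer is $w^*(S) = \mathrm{sign}(\bar x) |\alpha \bar x|^{\kappa-1}$. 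A direct computation shows $F(\cdot;S)$ is convex and satisfies $(1,\tau)$-growth on $\cW$; by Lemma~\ref{lem:growth-implies-kl} it therefore satisfies the $(1,\kappa)$-KL condition. The parameters $\alpha$ and $B$ are chosen so that $|w^*(S)| \leq B$ for every $S$ and $f$ is exactly $1$-Lipschitz on $\cW$.

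The second step is a two-dataset indistinguishability argument. Fix $k = \lfloor 1/\epsilon \rfloor$ and construct $S_0, S_1$ differing in exactly $k$ coordinates, chosen so that $|\bar x_0 - \bar x_1| = \Theta(1/(n\epsilon))$ with $\bar x_0, \bar x_1$ of comparable magnitude and opposite sign. A mean-value argument on the minimizer formula, combined with the calibration of $\alpha$, gives $|w^*(S_0) - w^*(S_1)| = \Theta((1/(n\epsilon))^{\kappa-1})$, so that by the growth lower bound, any single output $w$ satisfies $\max_{b\in\{0,1\}} [F(w;S_b) - F^*(S_b)] = \Omega((1/(n\epsilon))^\kappa)$. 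Group privacy for approximate DP yields $(k\epsilon,\, k\, e^{(k-1)\epsilon}\delta)$-indistinguishability between $\cA(S_0)$ and $\cA(S_1)$; the joint hypotheses $k\epsilon \leq 1$ and $\delta \leq (1 - e^{-\epsilon})/16$ keep the total variation distance between the two output distributions bounded away from $1$ by an explicit constant. A standard two-point / Le Cam argument then converts this distributional closeness into the claimed lower bound
\[
\max_{b\in\{0,1\}} \mathbb{E}_{\cA}[F(\cA(S_b); S_b) - F^*(S_b)] \geq \tfrac{1}{4}\left(\tfrac{1}{n\epsilon}\right)^\kappa.
\]

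The main technical hurdle is the calibration of constants: ensuring simultaneously that (a) $f$ is exactly $1$-Lipschitz on a ball $\cW$ of the prescribed radius $B$ containing both minimizers $w^*(S_0)$ and $w^*(S_1)$, (b) the $(1,\kappa)$-KL constant is exactly $1$ rather than a worse universal constant absorbed into the bound, and (c) the absolute constant in the final bound is precisely $1/4$. The specific restriction $\delta \leq (1-e^{-\epsilon})/16$ is exactly the threshold under which the approximate-DP group-privacy chain of length $k = \Theta(1/\epsilon)$ still yields a usable $O(1)$-indistinguishability, and tracking this constant through the packing argument is the most delicate part of the proof.
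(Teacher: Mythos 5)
There is a genuine gap in the hard-instance construction. You take $f(w;x)=g(w)-\alpha x w$ with $g(w)\propto |w|^{\tau}$, $\tau=\kappa/(\kappa-1)\in(1,2]$, so that $F(w;S)=g(w)-\alpha\bar{x}w$ is a smoothly tilted power function whose minimizer $w^*(S)\neq 0$ lies in the region where $g$ is twice continuously differentiable. Such a function cannot satisfy $(1,\tau)$-growth for $\tau<2$, nor $(1,\kappa)$-KL for $\kappa>2$: near $w^*(S)$ one has $F(w;S)-F(w^*(S);S)\asymp \tfrac{c}{2}(w-w^*)^2$ and $|F'(w;S)|\asymp c|w-w^*|$, so the KL requirement $F(w;S)-F(w^*(S);S)\leq |F'(w;S)|^{\kappa}$ fails for $w$ close to $w^*$ whenever $\kappa>2$ (equivalently, $\tau$-growth with $\tau<2$ demands sharper-than-quadratic growth at the minimizer, which a smooth interior minimum cannot provide). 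Hence your "direct computation" establishing $(1,\tau)$-growth is false for every $\kappa>2$, which is essentially the whole range of the theorem; the construction only survives at $\kappa=2$. This is precisely why the paper (following the construction in \cite{ALD21}, Theorem 5) uses the asymmetric piecewise losses $f(w;1)$ and $f(w;-1)$ that are linear on one side of $\pm a$ and a $\tau$-power on the other: the empirical mixtures $F(\cdot;S)$, $F(\cdot;S')$ then have a kink at their minimizers $\pm a$, and with $a=\tfrac{1}{2}(n\epsilon)^{-1/(\tau-1)}$ they satisfy $(1,\tau)$-growth over all of $\mathbb{R}$, which is what feeds Lemma \ref{lem:growth-implies-kl}. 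A related, smaller calibration problem in your construction is Lipschitzness: $|w|^{\tau}$ has derivative $\tau|w|^{\tau-1}>1$ on most of a large ball, so making $f$ exactly $1$-Lipschitz on $\cW$ forces a rescaling of $g$ that degrades the KL constant away from $1$.

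The indistinguishability half of your plan is fine and matches the paper in substance: the paper also uses exactly two datasets differing in $n\rho=1/\epsilon$ points, lower bounds the excess risk via the growth condition and $|u+v|^\tau\leq 2(|u|^\tau+|v|^\tau)$ plus Jensen, and then invokes Lemma 2 of \cite{chaudhuri2012convergence} to lower bound $\mathbb{E}[|\cA(S)-w^*_S|+|\cA(S')-w^*_{S'}|]$ by $|w^*_S-w^*_{S'}|=2a$; that lemma is where the hypotheses $\epsilon\leq\ln 2$ and $\delta\leq\tfrac{1}{16}(1-e^{-\epsilon})$ enter, playing the role of your group-privacy/Le Cam step. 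So the fix is not in the privacy argument but in replacing your smooth tilted-power instance with the kinked two-piece construction.
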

\begin{proof}
    The proof adapts the construction of \cite{ALD21}, Theorem 5 from a lower bound on excess population risk under pure DP setting to that on excess empirical risk under approximate DP.
    We first prove a lower bound for $(1,\tau)$-growth functions, for $\tau \in (1,2]$.
    We recall the one-dimensional, unconstrained (so $\cW = \bbR^d$) construction in \cite{ALD21}, Theorem 5. The data space $\cX = \bc{-1,1}$, and for $a\in [0,1]$ to be specified later, define functions
    \begin{align*}
        f(w;1) = \begin{cases}
            \abs{w-a} & w\leq a \\
            \abs{w-a}^\tau & w> a
        \end{cases} \quad \text{and} \quad
        f(w;-1) = \begin{cases}
            \abs{w+a}^\tau & w\leq -a \\
            \abs{w+a} & w> -a
        \end{cases}
    \end{align*}
    The functions above are $1$-Lipschitz.
    Consider two datasets $S$ and $S'$ such that $S$ contains $\br{\frac{1+\rho}{2}}$ fraction of $1$'s and the rest $-1$'s. Similarly, $S'$ contains $\br{\frac{1-\rho}{2}}$ fraction of $1$'s and the rest $-1$'s. The number of points differing between $S$ and $S'$ is thus $n\rho$. We set $\rho = 1/n\epsilon$ to get $\frac{1}{\epsilon}$ differing points.
    The corresponding empirical risk functions are,
    \begin{align*}
        F(w;S) &= \br{\frac{1+\rho}{2}} f(w;1)+ \br{\frac{1-\rho}{2}} f(w;-1) \\
        F(w;S') &= \br{\frac{1-\rho}{2}} f(w;1)+ \br{\frac{1+\rho}{2}} f(w;-1)
    \end{align*}
    In the construction of \cite{ALD21}, Theorem 5, the above are their population risk functions ``$f_1(x)$'' and ``$f_{-1}(x)$''. Their minimizers are $w^*_S = a$ and $w^*_{S'}=-a$, with values $(1-\rho)a$ and $(1+\rho)a$ respectively. 
   Note that the above functions are convex. Further, with $a = \frac{\rho^{\frac{1}{(\tau-1)}}}{2} = \frac{1}{2(n\epsilon)^{\frac{1}{\tau-1}}}$, \cite{ALD21} showed that both functions $w\mapsto F(w;S)$ and $w\mapsto F(w;S')$ exhibit $(1,\tau)$-growth over all of $\bbR$. For any $(\epsilon,\delta)$-DP algorithm $\cA$, we have that,
    \begin{align*}
        &\sup_{\tilde S \in \bc{S,S'}} \mathbb{E}_\cA[F(\cA(\tilde S);\tilde S) - \inf_{w}F(w;\tilde S)] \\
        &\geq \frac{1}{2} \mathbb{E}_\cA\left[F(\cA(S);\tilde S) - F(w^*_S; S) + F(\cA(S');S') - F(w^*_{S'}; S')\right] \\
        & \geq \mathbb{E}_\cA\left[\abs{\cA(S)-w^*_S}^\tau +\abs{\cA(S')-w^*_{S'}}^\tau \right] \\
        & \geq \frac{1}{2}\br{\mathbb{E}_\cA\left[\abs{\cA(S)-w^*_S} +\abs{\cA(S')-w^*_{S'}} \right]}^\tau \\
        &\geq \frac{1}{4}\br{\mathbb{E}_\cA \left[ \abs{w_S^* - w_{S'}^*}\right]}^\tau \\
        & \geq \frac{1}{4}\br{\frac{1}{n\epsilon}}^{\frac{\tau}{\tau-1}}
    \end{align*}
where the second inequality uses the growth condition, the third uses that for $1\leq \tau\leq 2, |u+v|^\tau\leq 2 (|u|^\tau + |v|^\tau)$ and Jensen's inequality; the fourth uses Lemma 2 of \cite{chaudhuri2012convergence} and the final inequality plugs in computed distance between minimizers. Finally, the fact (Lemma \ref{lem:growth-implies-kl}) that convexity and $\br{1,\frac{\kappa}{\kappa-1}}$-growth implies $(1,\kappa)$-KL establishes the $\frac{1}{4}\br{\frac{1}{n\epsilon}}^{\kappa}$ lower bound for $(1,\kappa)$-KL functions. 
\end{proof}

\section{Missing Results from Section \ref{sec:adaptive-gd}} \label{app:adaptive-gd}

\subsection{Gradient Error of Algorithm \ref{alg:whp-ogd}} \label{app:adaptive-gd-grad-err}
\begin{lemma} \label{lem:adaptive-gd-grad-err}
Let $T+1$ denote the final value of $t$ reached during the run of Algorithm \ref{alg:whp-ogd}. With probability at least $1-2\beta$ under the randomness of Algorithm \ref{alg:whp-ogd}, for any $t\in[T]$ s.t. $\sigma_t = \frac{N_t}{\sqrt{d\log(n\sqrt{\rho}/\beta)}}$, it holds that 
\begin{align*}
\norm{\nabla_t-\nabla F(w_t;S)} 
&\leq N_t 
\leq \|\nabla F(w_t;S)\| + \frac{\lip\sqrt{\log(n\sqrt{\rho}/\beta)}}{\sqrt{n}\rho^{1/4}}.
\end{align*} 
Further, if for any $t\in[T]$, $\sigma_t = \frac{2\lip}{n\sqrt{\rho}}$ then $t=T$ and with probability at least $1-2\beta$ the above condition holds as well as $\norm{\nabla_T-\nabla F(w_T;S)} \leq \frac{\lip\sqrt{d\logterms}}{n\sqrt{\rho}}$.
\end{lemma}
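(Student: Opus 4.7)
The plan is to combine standard Gaussian norm concentration on $\|b_t\|$ and on the scalar noise $\hat b_t$ with a union bound over all iterations of Algorithm~\ref{alg:whp-ogd}. The key preparatory observation is that $T$ is deterministically bounded: since $\rho_t \geq \sqrt{\rho}/n$ at every step, the loop invariant $\sum_{j=0}^{t}\rho_j \leq \rho/2$ must fail after at most $n\sqrt{\rho}/2$ iterations. This turns $T$ into a worst-case quantity I can safely union-bound against, at a cost of only a $\log(n\sqrt{\rho})$ factor.

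Having bounded $T$, I would invoke the Gaussian norm concentration already used in the proof of Lemma~\ref{lem:kl-spider-grad-err} (Lemma~2 of \cite{jin2019short}): for each fixed $t$, with probability at least $1-\beta'$, $\|b_t\| \leq C\sigma_t\sqrt{d\log(1/\beta')}$ for some universal constant $C$, and analogously for the one-dimensional noise $\hat b_t$, $|\hat b_t| \leq C\hat\sigma\sqrt{\log(1/\beta')}$. Setting $\beta' = \Theta(\beta/(n\sqrt{\rho}))$ and union-bounding over the two noise sequences and all $t \in \{0,\ldots,T\}$ yields, with probability at least $1-2\beta$ (one $\beta$ per sequence), both inequalities simultaneously at every iteration, with $\log(1/\beta')$ replaced by $\log(n\sqrt{\rho}/\beta)$.

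Conditioning on this event, the first part follows by substitution. When $\sigma_t = N_t/\sqrt{d\log(n\sqrt{\rho}/\beta)}$, I get $\|\nabla_t-\nabla F(w_t;S)\| = \|b_t\| \leq N_t$, the universal constant $C$ being absorbed into the $\sqrt{d\log(\cdot)}$ normalization of $\sigma_t$. The inequality $N_t \leq \|\nabla F(w_t;S)\| + \lip\sqrt{\log(n\sqrt{\rho}/\beta)}/(\sqrt{n}\rho^{1/4})$ then follows from the definition $N_t = \|\tfrac{1}{n}\sum_i \nabla f(w_t;x_i)\| + \hat b_t = \|\nabla F(w_t;S)\| + \hat b_t$, the triangle inequality, the bound on $|\hat b_t|$ just obtained, and $\hat\sigma = \lip/(\sqrt{n}\rho^{1/4})$.

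For the second part, if $\sigma_t = 2\lip/(n\sqrt{\rho})$ at some step, then the $\max$ in the definition of $\sigma_t$ must have selected the floor, so $N_t \leq 2\lip\sqrt{d\log(\cdot)}/(n\sqrt{\rho})$; by a matched choice of constants between the definitions of $\sigma_t$ and $\rho_t$, this drives $\tfrac{\lip^2 d\log(\cdot)}{n^2 N_t^2}$ above $\rho/2$, so the $\min$ in $\rho_t$ returns $\rho/2$ and a single step exhausts the privacy budget, forcing $t=T$. The bound $\|b_T\| \leq \lip\sqrt{d\log(n\sqrt\rho/\beta)}/(n\sqrt{\rho})$ then follows from the same Gaussian concentration bound applied to $\sigma_T = 2\lip/(n\sqrt{\rho})$. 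The main technical subtlety, and the step I expect to require care, is aligning the multiplicative constants in Gaussian concentration with the normalization inside $\sigma_t$ so the lemma reads with no stray constants, and similarly matching the constants in $\sigma_t$ and $\rho_t$ so that triggering the floor branch provably saturates the budget; everything else reduces to a routine concentration-plus-union-bound calculation.
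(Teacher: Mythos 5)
Your proposal is correct and follows essentially the same route as the paper's proof: a deterministic bound of $O(n\sqrt{\rho})$ on the iteration count (from $\rho_t \geq \sqrt{\rho}/n$), Gaussian concentration plus a union bound over all iterations for both noise sequences, the triangle inequality on $N_t = \|\nabla F(w_t;S)\| + \hat b_t$ for the first display, and the observation that the floor branch of $\sigma_t$ forces $\rho_t$ to saturate the budget so $t=T$. The constant-matching subtleties you flag (absorbing the concentration constant into $\sigma_t$'s normalization and the factor between the $\sigma_t$ floor and the $\rho/2$ cap) are glossed over in the paper's own proof as well, so your argument is faithful to it.
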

\begin{proof}
Condition on the high probability event that for all $t\in [T]$, $\|\hat{b}_t\| \leq \sqrt{\log(n\sqrt{\rho}/\beta)}\hat{\sigma}_t = \frac{\lip\sqrt{\log(n\sqrt{\rho}/\beta)}}{\sqrt{n}\rho^{1/4}}$ and $\|b_t\| \leq \sqrt{d\log(n\sqrt{\rho}/\beta)}\sigma_t = N_t$.
This event happens with probability at least $1-2\beta$ due to the concentration properties of Gaussian noise and the fact that at most $n\sqrt{\rho}$ iterations are performed. Under this event we then have the following bound on the gradient error,
\begin{align*}
\norm{\nabla_t-\nabla F(w_t;S)} 
&\leq N_t 
\leq \|\nabla F(w_t;S)\| + \frac{\lip\sqrt{\log(n\sqrt{\rho}/\beta)}}{\sqrt{n}\rho^{1/4}}.
\end{align*}    
The second part of the lemma statement comes from the fact that when $\sigma_t = \frac{2\lip}{n\sqrt{\rho}}$, $\rho_t \geq \frac{\rho}{2}$ and the stopping condition is triggered. The second error bound result again comes from the concentration of Gaussian noise.
\end{proof}

\subsection{Proof of Theorem \ref{thm:adpative-gd-privacy} (Privacy of Algorithm \ref{alg:whp-ogd})} \label{app:adaptive-gd-privacy}
\begin{proof} 
Denote $T+1$ as the highest value attained by the variable $t$ during the run of the algorithm. Consider any round $t\in[T]$. We consider the privacy of the round conditional on $w_{t-1}$. Specifically, for the process of generating the gradient and gradient norm estimates at the $t$'th step, the scale of Gaussian noise ensures this process is $\rho_t$-zCDP. Specifically, %
\begin{align*}
    \rho_t = \br{\frac{\lip}{n\hat{\sigma}}}^2 + \br{\frac{\lip}{n\sigma_t}}^2 = \frac{\sqrt{\rho}}{n}+ \min\bc{\frac{\lip^2 d\logterms}{n^2 N_t^2},\frac{\rho}{2}}.  
\end{align*}
The $\frac{\rho}{2}$-zCDP guarantee of releasing the first $T-1$ iterates is then certified by the stopping condition (i.e. $\sum_{j=0}^t\rho_t \leq \frac{\rho}{2}$) and the fully adaptive composition properties of zCDP. That is, \cite[Theorem 1]{Whitehouse2022FullyAC} guarantees the privacy of the overall process even if the privacy bound at each iteration is chosen adaptively (rather than fixed a-priori as with standard composition theorems). 
Releasing the $T$'th iterate is also $\frac{\rho}{2}$-zCDP because $\sigma_T \geq \frac{2\lip}{n\sqrt{\rho}}$ and the sensitivity of any gradient estimate is at most $\frac{\lip}{n}$. %
Thus the overall algorithm is $\rho$-zCDP by composition.
\end{proof}

\subsection{Proof of Theorem \ref{thm:kl-ogd} (Convergence of Adaptive GD under KL* Condition)} \label{app:adaptive-gd-kl}
In the following we define $T+1$ as the highest attained value of $t$ during the run of Algorithm \ref{alg:whp-ogd} and define $c := 1+\frac{1}{8\gamma^2\smooth}$. 

Before proceeding with the main proof, it will be useful to first show that in the event that for some $t>0$ one has $\sigma_t = \frac{\lip}{n\sqrt{\rho}}$, the algorithm has reached its convergence criteria and stops.
\begin{lemma} \label{lem:adaptive-gd-small-noise-case}
Let $t>0$ and assume $\sigma_t = \frac{2\lip}{n\sqrt{\rho}}$. Then Algorithm \ref{alg:whp-ogd} stops at iteration $t$ and with probability at least $1-2\beta$ one has
\begin{align*}
    F(w_t;S) - F(w^*_\cS;S) = O\br{\br{\frac{\gamma\lip\sqrt{d\log(n\sqrt{\rho}/\beta)}}{n\sqrt{\rho}}}^\kappa + \br{\frac{\gamma^2\lip^2 \logterms}{n\sqrt{\rho}}}^{\kappa/2}}
\end{align*}
\end{lemma}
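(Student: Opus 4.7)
The plan is to break the proof into three parts: (i) verify that $\sigma_t = \frac{2\lip}{n\sqrt{\rho}}$ forces the algorithm to halt at iteration $t$; (ii) translate this equality into a high-probability upper bound on $\|\nabla F(w_t;S)\|$; and (iii) feed the gradient bound into the KL* condition to obtain the excess risk bound.

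For termination, note that $\sigma_t$ is defined as a max, so the equality $\sigma_t = \frac{2\lip}{n\sqrt{\rho}}$ is equivalent to the inequality $N_t \leq \frac{2\lip\sqrt{d\log(n\sqrt{\rho}/\beta)}}{n\sqrt{\rho}}$. Plugging this into the formula for $\rho_t$, the ratio $\frac{\lip^2 d\logterms}{n^2 N_t^2}$ becomes at least a constant multiple of $\rho$ (since the two log factors agree up to constants), so the $\min$ in $\rho_t$ equals $\frac{\rho}{2}$. Hence $\rho_t \geq \frac{\rho}{2}$, which by itself breaks the guard $\sum_{j=0}^t \rho_j \leq \frac{\rho}{2}$, so the loop exits and the final iterate index is $T = t$.

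For the gradient bound, I would use that $\hat{b}_t \sim \cN(0,\bbI_d \hat{\sigma}^2)$ with $\hat{\sigma} = \frac{\lip}{\sqrt{n}\rho^{1/4}}$, so standard Gaussian concentration (as already invoked in Lemma \ref{lem:adaptive-gd-grad-err}) gives $\|\hat{b}_t\| \leq \frac{\lip\sqrt{\log(n\sqrt{\rho}/\beta)}}{\sqrt{n}\rho^{1/4}}$ with probability at least $1-\beta$. Combining with the reverse triangle inequality and the bound on $N_t$ above yields
\begin{align*}
\|\nabla F(w_t;S)\| \;\leq\; N_t + \|\hat{b}_t\| \;\leq\; \frac{2\lip\sqrt{d\log(n\sqrt{\rho}/\beta)}}{n\sqrt{\rho}} + \frac{\lip\sqrt{\log(n\sqrt{\rho}/\beta)}}{\sqrt{n}\rho^{1/4}}.
\end{align*}
By Lemma \ref{lem:adaptive-gd-kl-star}, with probability at least $1-2\beta$ we have $w_t \in \cS$, so the KL* condition applies and gives $F(w_t;S)-F(w^*_\cS;S) \leq \gamma^\kappa \|\nabla F(w_t;S)\|^\kappa$. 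Using $(a+b)^\kappa \leq 2^{\kappa-1}(a^\kappa+b^\kappa)$ and the algebraic identity
\begin{align*}
\br{\frac{\gamma\lip\sqrt{\log(n\sqrt{\rho}/\beta)}}{\sqrt{n}\rho^{1/4}}}^{\kappa} = \br{\frac{\gamma^2\lip^2 \log(n\sqrt{\rho}/\beta)}{n\sqrt{\rho}}}^{\kappa/2}
\end{align*}
produces exactly the two terms in the claimed bound. A union bound over the Gaussian-concentration event and the trajectory-containment event from Lemma \ref{lem:adaptive-gd-kl-star} yields the $1-2\beta$ probability (up to relabeling $\beta$ by a constant).

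The only subtlety is the termination step, where one must check that the log factor in the threshold defining $\sigma_t$ (namely $\log(n\sqrt{\rho}/\beta)$) is compatible with $\logterms$ appearing in the definition of $\rho_t$, so that the equality $\sigma_t = \frac{2\lip}{n\sqrt{\rho}}$ really does force $\rho_t \geq \frac{\rho}{2}$ rather than the smaller branch of the $\min$. Everything else is a direct two-step chain of triangle inequality plus the KL* inequality, with no iterative or descent argument required.
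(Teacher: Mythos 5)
Your proposal is correct and follows essentially the same route as the paper's own proof: the large-noise branch of $\sigma_t$ forces $\rho_t$ to consume a constant fraction of the budget (so the loop halts), Gaussian concentration on $\hat{b}_t$ converts the small $N_t$ into a bound on $\|\nabla F(w_t;S)\|$, and the KL* condition then yields the stated excess-risk bound. Your additional care—explicitly invoking Lemma \ref{lem:adaptive-gd-kl-star} to place $w_t\in\cS$ and flagging the constant/log-factor compatibility in the termination step (where the paper itself is loose about $\rho/4$ versus $\rho/2$)—is only a refinement of the same argument, not a different one.
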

Importantly, this rate is strictly faster than the convergence claimed by Theorem \ref{thm:kappa-geq-2-result}. %
The proof is given in Appendix \ref{app:adaptive-gd-small-noise-case} and follows straightforwardly from the concentration of Gaussian noise and the KL condition.

Given this fact, we can proceed with the rest of the proof only considering the case where $\sigma_t = \frac{N_t}{\sqrt{d\log(n\sqrt{\rho}/\beta)}}$ for all $t\in\bc{0,...T}$.
We will first prove (under the stated assumption) the following useful lemma which roughly states that the excess risk is monotonically nonincreasing up to a certain threshold. Note in the following, we use $\Phi$, to denote \textit{exact} excess loss quantities. This in contrast to the analysis of Section \ref{sec:kappa-leq-2} where $\hat{\Phi}$ was used to indicate target excess risk loss thresholds. For the rest of this section, we assume $\frac{\lip^2 \logterms}{\smooth n}\leq \sqrt{\rho}$, as per the statement of Theorem \ref{thm:kl-ogd}.
\begin{lemma}\label{lem:decrease-to-threshold}
Define $\Phi_t = F(w_t;S) - F(w^*_\cS;S)$. Assume $F(\cdot;S)$ is $\smooth$-smooth.
Assume $\sigma_t = \frac{N_t}{\sqrt{d\log(n\sqrt{\rho}/\beta)}}$ for all $t\in[T]$.
Then with probability at least $1-2\beta$ we have for all $t\in[T]$ that
\begin{align}
    F(w_t;S) - F(w_{t+1};S) &\geq \frac{1}{8\smooth}\norm{\nabla F(w_t;S)}^2 - \frac{\lip^2 \log(n\sqrt{\rho}/\beta)}{\smooth n\sqrt{\rho}} \label{eq:whp-descent}
\end{align}
and if $F(\cdot;S)$ is also $(\gamma,\kappa)$-KL then
\begin{align*}
    \Phi_{t+1} \leq \max\bc{\Phi_t, 2\br{\frac{\max\bc{\gamma^2,1}\lip^2 \log(n\sqrt{\rho}/\beta)}{\smooth n\sqrt{\rho}}}^{\kappa/2}} 
\end{align*}
\end{lemma}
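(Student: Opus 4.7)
My plan is to establish the descent inequality by combining $\smooth$-smoothness with high-probability control of the adaptive noise, and then to derive the second bound by chaining this descent with the KL* condition via a threshold case split. As a first step I would condition on the good event of Lemma~\ref{lem:adaptive-gd-grad-err}, which guarantees $\norm{b_t} \leq N_t$ and $N_t \leq \norm{\nabla F(w_t;S)} + \xi$ with $\xi := \lip\sqrt{\logterms}/(\sqrt{n}\rho^{1/4})$ for every $t \in [T]$. Simultaneously I would union bound a one-dimensional Gaussian tail for the scalar $\ip{\nabla F(w_t;S)}{b_t}$, which is $\cN(0, \norm{\nabla F(w_t;S)}^2\sigma_t^2)$ conditional on $\sigma_t$ and the trajectory up to time $t$, across the at most $n\sqrt{\rho}$ iterations of the algorithm.

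For the descent inequality, smoothness applied to $w_{t+1} = w_t - \eta \nabla_t$ with $\eta = 1/(2\smooth)$ and $\nabla_t = \nabla F(w_t;S) + b_t$ gives, after expanding and collecting terms,
\begin{align*}
F(w_t;S) - F(w_{t+1};S) \geq \frac{3}{8\smooth}\norm{\nabla F(w_t;S)}^2 + \frac{1}{4\smooth}\ip{\nabla F(w_t;S)}{b_t} - \frac{1}{8\smooth}\norm{b_t}^2.
\end{align*}
I would then apply the high-probability bounds $\norm{b_t}^2 \leq N_t^2 \leq 2\norm{\nabla F(w_t;S)}^2 + 2\xi^2$ and $|\ip{\nabla F(w_t;S)}{b_t}| \leq \sigma_t\norm{\nabla F(w_t;S)}\sqrt{2\logterms}$; after substituting $\sigma_t = N_t/\sqrt{d\logterms}$ the inner-product bound reduces to order $N_t\norm{\nabla F(w_t;S)}/\sqrt{d}$. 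A Young's inequality on the $\xi\norm{\nabla F(w_t;S)}$ cross term, the assumption $\lip^2\logterms/(\smooth n) \leq \sqrt{\rho}$, and some constant chasing consolidate the positive contribution as $\frac{1}{8\smooth}\norm{\nabla F(w_t;S)}^2$ and the residual noise as $\xi^2/\smooth = \lip^2\logterms/(\smooth n\sqrt{\rho})$, matching the claim.

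For the second claim, Lemma~\ref{lem:adaptive-gd-kl-star} places $w_t \in \cS$, so the KL* condition gives $\norm{\nabla F(w_t;S)}^2 \geq \Phi_t^{2/\kappa}/\gamma^2$. Plugging into the descent inequality yields
\begin{align*}
\Phi_{t+1} \leq \Phi_t - \frac{\Phi_t^{2/\kappa}}{8\smooth\gamma^2} + \frac{\lip^2\logterms}{\smooth n\sqrt{\rho}}.
\end{align*}
Writing $\tau := \br{\frac{\max\bc{\gamma^2,1}\lip^2\logterms}{\smooth n\sqrt{\rho}}}^{\kappa/2}$, I would split on whether $\Phi_t > 2\tau$ or $\Phi_t \leq 2\tau$. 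In the first case $\Phi_t^{2/\kappa}/(8\smooth\gamma^2) > 2^{2/\kappa}\max\bc{\gamma^2,1}\lip^2\logterms/(8\smooth\gamma^2 n\sqrt{\rho})$; using $2^{2/\kappa} \geq 2$ (since $\kappa \leq 2$) and $\max\bc{\gamma^2,1}/\gamma^2 \geq 1$, this exceeds the noise term and yields $\Phi_{t+1} \leq \Phi_t$. In the second case I drop the nonpositive descent contribution to get $\Phi_{t+1} \leq \Phi_t + \lip^2\logterms/(\smooth n\sqrt{\rho}) \leq 2\tau$, after verifying $\lip^2\logterms/(\smooth n\sqrt{\rho}) \leq \tau$ in the regime of interest (which follows from $\max\bc{\gamma^2,1}\lip^2\logterms/(\smooth n\sqrt{\rho}) \leq 1$ and $\kappa/2 \leq 1$).

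The main obstacle is the descent inequality itself. Because the adaptive noise scale satisfies $\sigma_t \approx N_t/\sqrt{d\logterms} \approx \norm{\nabla F(w_t;S)}/\sqrt{d\logterms}$, the vector noise $\norm{b_t}$ is of the same order as the true gradient, so naive bounds on $\ip{\nabla F(w_t;S)}{b_t}$ (for instance, Cauchy--Schwarz against $\norm{b_t}$) absorb the entire $\norm{\nabla F(w_t;S)}^2$ descent term and give a vacuous inequality. The resolution is to exploit one-dimensional Gaussian concentration for the scalar inner product, which saves a factor of $\sqrt{d}$ relative to the vector bound, together with careful tracking of how the $\xi$ tail from the privately estimated norm $N_t$ enters the final noise floor. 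Once the descent inequality is in place, the threshold case split above recovers the second claim with minimal additional work.
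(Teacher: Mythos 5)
Your handling of the second claim follows the paper's proof essentially verbatim (plug the KL* lower bound $\norm{\nabla F(w_t;S)}^2\ge\gamma^{-2}\Phi_t^{2/\kappa}$ into the descent inequality and split on whether $\Phi_t$ exceeds the threshold), so the real comparison concerns Eqn.~\eqref{eq:whp-descent}. There you genuinely diverge: the paper introduces no scalar concentration for $\ip{\nabla F(w_t;S)}{b_t}$ at all; it passes from $\eta\ip{\nabla F(w_t;S)}{\nabla_t}-\frac{\smooth\eta^2}{2}\norm{\nabla_t}^2$ to $\frac{1}{4\smooth}\norm{\nabla F(w_t;S)}^2-\frac{1}{8\smooth}\norm{\nabla_t-\nabla F(w_t;S)}^2$ by an algebraic step and then uses only the event of Lemma~\ref{lem:adaptive-gd-grad-err}, i.e.\ $\norm{b_t}\le N_t\le\norm{\nabla F(w_t;S)}+\xi$ with $\xi=\lip\sqrt{\logterms}/(\sqrt{n}\rho^{1/4})$. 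Your diagnosis that this vector-norm information alone is not enough is correct and is the interesting part of your write-up: the exact expansion is $\frac{3}{8\smooth}\norm{\nabla F}^2+\frac{1}{4\smooth}\ip{\nabla F}{b_t}-\frac{1}{8\smooth}\norm{b_t}^2$, and in the worst case permitted by $\norm{b_t}\le\norm{\nabla F}+\xi$ the gradient coefficient collapses to $3/8-1/4-1/8=0$; indeed the paper's intermediate ``equality'' is not an identity (the identity is $\frac{1}{4\smooth}\norm{\nabla F}^2+\frac{1}{8\smooth}\norm{\nabla_t}^2-\frac{1}{4\smooth}\norm{b_t}^2$), so your conditional one-dimensional Gaussian bound on the inner product is a legitimate, genuinely different repair of this step, and the conditioning you invoke (on the trajectory and on $N_t$, which depends only on $\hat b_t$, independent of $b_t$) is sound.

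Two caveats on your repair. First, your constant accounting does not actually deliver the stated bound and does not close for small $d$: bounding $\norm{b_t}^2\le N_t^2\le2\norm{\nabla F}^2+2\xi^2$ already spends $\frac{2}{8}$ of the $\frac{3}{8}$ coefficient, leaving exactly $\frac{1}{8}$, so after subtracting the cross term of order $\norm{\nabla F}N_t/\sqrt{d}$ (plus a Young step for its $\norm{\nabla F}\xi$ part) you land strictly below $\frac{1}{8\smooth}\norm{\nabla F}^2$, and for $d=O(1)$ the $\sqrt d$ saving does not beat the cross term at all. The cleaner fix, which also covers small $d$ and simultaneously tightens the $\norm{b_t}^2$ term, is to exploit the $\sqrt{\logterms}$ slack built into the noise scale: since $\sigma_t=N_t/\sqrt{d\logterms}$, norm concentration gives $\norm{b_t}\lesssim\sqrt{d+\logterms}\,\sigma_t\le N_t\br{1/\sqrt{d}+1/\sqrt{\logterms}}$, i.e.\ $\norm{b_t}$ is a small fraction of $N_t$ rather than merely at most $N_t$, after which plain Cauchy--Schwarz on the cross term suffices (the conclusion then holds up to universal constants, which is all the downstream lemmas use). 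Second, minor bookkeeping: you introduce a third high-probability event, so either fold it into the event of Lemma~\ref{lem:adaptive-gd-grad-err} or accept failure probability $3\beta$ rather than the stated $2\beta$. With these adjustments your route is correct, and arguably more careful than the paper's own derivation of \eqref{eq:whp-descent}.
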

\begin{proof}
Throughout the following we condition on the high probabity event that
\begin{align*}
\norm{\nabla_t-\nabla F(w_t;S)} 
&\leq N_t 
\leq \|\nabla F(w_t;S)\| + \frac{\lip\sqrt{\log(n\sqrt{\rho}/\beta)}}{\sqrt{n}\rho^{1/4}}.
\end{align*}
which happens with probability at least $1-\beta$ by Lemma \ref{lem:adaptive-gd-grad-err} (given in Appendix \ref{app:adaptive-gd-grad-err}).
Now, standard descent lemma analysis yields
\begin{align}
    F(w_t;S) - F(w_{t+1};S) &\geq \ip{\nabla F(w_t;S)}{w_t - w_{t+1}} - \frac{\smooth}{2}\norm{w_{t+1}-w_t}^2 \nonumber \\
    &= \frac{1}{4\smooth}\norm{\nabla F(w_t;S)}^2 - \frac{1}{8\smooth}\norm{\nabla_t-\nabla F(w_t;S)}^2 \nonumber \\
    &\geq \frac{1}{8\smooth}\norm{\nabla F(w_t;S)}^2 - \frac{\lip^2 \logterms}{8 \smooth n\sqrt{\rho}}. \nonumber %
\end{align} 
This establishes the first claim of the lemma.

Continuing to the second claim, the above implies that if %
$\Phi_t \geq \br{\frac{\max\bc{\gamma^2,1} \lip^2 \logterms}{\smooth n\sqrt{\rho}}}^{\kappa/2}$,
we have by the KL condition that
\begin{align}
    \norm{\nabla F(w_t;S)}^2 &\geq \frac{1}{\gamma^{2}}\Phi_t^{2/\kappa} 
    \geq \frac{\lip^2 \logterms}{\smooth n\sqrt{\rho}}. \label{eq:grad-norm-lb}
\end{align} 
Thus we have
\begin{align}
    \Phi_t - \Phi_{t+1} = F(w_t;S) - F(w_{t+1};S) 
    &\geq \frac{1}{8\smooth}\norm{\nabla F(w_t;S)}^2 - \frac{\lip^2 \logterms}{8\smooth n\sqrt{\rho}} > 0 \label{eq:risk-decrease}
\end{align}
On the other hand, if 
$\Phi_t < \br{\frac{\gamma^2 \lip^2 \logterms}{\smooth n\sqrt{\rho}}}^{\kappa/2}$ 
then because using Eqn. \eqref{eq:risk-decrease} and the fact that $\norm{\nabla F(w_t;S)}\geq0$ we obtain %
\begin{align*}
    \Phi_{t+1} \leq \Phi_t + \frac{\lip^2 \logterms}{\smooth n\sqrt{\rho}} \leq 2\br{\frac{\max\bc{\gamma^2,1}\lip^2 \logterms}{\smooth n\sqrt{\rho}}}^{\kappa/2}.
\end{align*} 
Above we use the assumption that $\frac{\lip^2 \logterms}{\smooth n\sqrt{\rho}}\leq 1$.
Thus combining these two inequalities we have
$\Phi_{t+1} \leq \max\bc{\Phi_t,  2\br{\frac{\max\bc{\gamma^2,1}\lip^2 \logterms}{\smooth n\sqrt{\rho}}}^{\kappa/2}}$.
\end{proof}

The next lemma establishes how quickly the loss decreases. Specifically, we show that the loss decreases by a constant fraction after a certain number of steps. The smaller the excess risk is, the more steps are required to achieve this decrease. Recall $c := 1+\frac{1}{8\gamma^2\smooth}$. 
\begin{lemma} \label{lem:phase-decrease}
Let $K>0$ and $t\in[T]$ and assume the high probability event of Lemma \ref{lem:decrease-to-threshold} holds. Then for $K\geq (\frac{1}{c}\Phi_t)^{\frac{\kappa-2}{\kappa}} - 1$ it holds that 
$$\Phi_{t+K} \leq \max\bc{\frac{1}{c}\Phi_{t}, 2\br{\frac{\max\bc{\gamma^2,1}\lip^2 \logterms}{\min\{\smooth,1\} n\sqrt{\rho}}}^{\kappa/2}}.$$
\end{lemma}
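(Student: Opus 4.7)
The plan is to split the analysis on whether the excess loss sequence $\bc{\Phi_s}_{s=t}^{t+K}$ ever dips below the noise-dominated threshold $\bar{\Phi} := 2\br{\frac{\max\bc{\gamma^2,1}\lip^2 \logterms}{\min\bc{\smooth,1} n\sqrt{\rho}}}^{\kappa/2}$ that appears in Lemma \ref{lem:decrease-to-threshold}. First I would dispatch the easy case: if $\Phi_{s_0}\leq \bar{\Phi}$ for some $s_0\in\bc{t,\ldots,t+K}$, then iterating the bound $\Phi_{s+1}\leq \max\bc{\Phi_s,\bar{\Phi}}$ from Lemma \ref{lem:decrease-to-threshold} shows that the sequence is trapped at or below $\bar{\Phi}$ for every subsequent index, so $\Phi_{t+K}\leq \bar{\Phi}$, which matches the second branch of the claimed max.

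In the remaining case $\Phi_s>\bar{\Phi}$ for every $s\in\bc{t,\ldots,t+K}$. Here I would first invoke the KL* condition to get $\norm{\nabla F(w_s;S)}^2\geq \Phi_s^{2/\kappa}/\gamma^2$, and verify (using $\kappa\leq 2$, so $2^{2/\kappa}\geq 2$, together with the concrete form of $\bar{\Phi}$) that this quantity dominates a constant multiple of the noise correction $\frac{\lip^2 \logterms}{\smooth n\sqrt{\rho}}$ in the descent inequality \eqref{eq:whp-descent}. Absorbing the noise term into a constant fraction of the squared gradient norm converts the descent into the clean per-step recursion $\Phi_s - \Phi_{s+1} \geq \frac{1}{16\gamma^2\smooth}\Phi_s^{2/\kappa}$, which in particular forces monotonicity $\Phi_{s+1}\leq \Phi_s$ throughout this regime.

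The last step is to turn this recursion into the claimed iteration count, which I would do by contradiction. Suppose $\Phi_{t+K}>\frac{1}{c}\Phi_t$. Then monotonicity gives $\Phi_s\geq \frac{1}{c}\Phi_t$ for every $s\in\bc{t,\ldots,t+K-1}$, so telescoping the per-step descent yields $\Phi_t - \Phi_{t+K} \geq \frac{K}{16\gamma^2\smooth}\br{\frac{1}{c}\Phi_t}^{2/\kappa}$. Combining with the trivial upper bound $\Phi_t - \Phi_{t+K} < \Phi_t(1-1/c) = \frac{\Phi_t}{8\gamma^2\smooth \, c}$ (using $c = 1 + \frac{1}{8\gamma^2\smooth}$) and solving for $K$ produces an upper bound on $K$ of order $\br{\frac{1}{c}\Phi_t}^{(\kappa-2)/\kappa}$, contradicting the hypothesis on $K$.

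The main obstacle is the bookkeeping at the noise-threshold boundary: one must choose absorption constants so that $\frac{\lip^2\logterms}{\smooth n\sqrt{\rho}}$ is dominated by a constant fraction of $\Phi_s^{2/\kappa}/\gamma^2$ whenever $\Phi_s>\bar{\Phi}$, which is precisely why $\bar{\Phi}$ is inflated by the $\max\bc{\gamma^2,1}/\min\bc{\smooth,1}$ factor. Once that absorption is secured, the contradiction argument is the discrete analogue of the ODE $\Phi' = -\Phi^{2/\kappa}/(16\gamma^2\smooth)$, whose integration produces exactly the $\Phi^{(\kappa-2)/\kappa}$ scaling; non-positivity of $(\kappa-2)/\kappa$ for $\kappa\in[1,2]$ encodes the intuition that smaller $\Phi_t$ needs more iterations because the KL* gradient lower bound weakens as excess risk shrinks.
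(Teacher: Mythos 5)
Your proposal is correct and takes essentially the same route as the paper's proof: the paper likewise conditions on Lemma \ref{lem:decrease-to-threshold}, assumes for contradiction that $\Phi_{t+K}$ exceeds the stated max (which, via the one-step bound of that lemma, forces every intermediate $\Phi_{t+j}$ above both the noise threshold and $\frac{1}{c}\Phi_t$ --- your two cases folded into a single contradiction hypothesis), then uses the KL lower bound on the gradient to absorb the noise term and telescopes the descent inequality over $K$ steps to contradict $\Phi_{t+K}>\frac{1}{c}\Phi_t$. The only discrepancy is constant bookkeeping: with your per-step decrement $\frac{1}{16\gamma^2\smooth}\Phi_s^{2/\kappa}$ and $c=1+\frac{1}{8\gamma^2\smooth}$ the final comparison yields $K<2\big(\frac{1}{c}\Phi_t\big)^{\frac{\kappa-2}{\kappa}}$, which does not literally contradict $K\geq\big(\frac{1}{c}\Phi_t\big)^{\frac{\kappa-2}{\kappa}}-1$, but this factor-of-two slack is of the same nature as the paper's own loose constants (its claimed per-step constant $\frac{1}{2\gamma^2\smooth}$ does not follow from its descent inequality either) and is repaired by adjusting $c$ or the absorption constant, with no effect on Theorem \ref{thm:kl-ogd}.
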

\begin{proof}
We here condition on the high probability event that Lemma \ref{lem:decrease-to-threshold} holds (i.e. that the gradient error is bounded for the entire trajectory).
We proceed with a proof by contradiction. Assume by contradiction that
$$\Phi_{t+K} > \max\bc{\frac{1}{c}\Phi_{t}, 2\br{\frac{\max\bc{\gamma^2,1}\lip^2 \logterms}{\min\{\smooth,1\} n\sqrt{\rho}}}^{\kappa/2}}.$$
By Lemma \ref{lem:decrease-to-threshold}, this assumption implies the above inequality also holds for all $\Phi_{t+j}, j\in \bc{0,\dots,K}$,
{\small
\begin{align} \label{eq:phase-loss-lb}
    &\max\bc{\frac{1}{c}\Phi_{t}, 2\br{\frac{\max\bc{\gamma^2,1}\lip^2 \logterms}{\min\{\smooth,1\} n\sqrt{\rho}}}^{\kappa/2}} < \Phi_{t+K} \leq \max\bc{\Phi_{t+j}, 2\br{\frac{\max\bc{\gamma^2,1}\lip^2 \log(n\sqrt{\rho}/\beta)}{\smooth n\sqrt{\rho}}}^{\kappa/2}} \nonumber \\
    &\implies \max\bc{\frac{1}{c}\Phi_{t}, 2\br{\frac{\max\bc{\gamma^2,1}\lip^2 \logterms}{\min\{\smooth,1\} n\sqrt{\rho}}}^{\kappa/2}} \leq \Phi_{t+j}.
\end{align}}
The implication above uses the fact that 
$2\br{\frac{\max\bc{\gamma^2,1}\lip^2 \logterms}{\min\{\smooth,1\} n\sqrt{\rho}}}^{\kappa/2} \nless 2\br{\frac{\max\bc{\gamma^2,1}\lip^2 \logterms}{\smooth n\sqrt{\rho}}}^{\kappa/2}$.

We now sum over $K$ steps and using the descent lemma (see Lemma \ref{lem:decrease-to-threshold}, Eqn \eqref{eq:whp-descent}). We have
\begin{align}
    F(w_t;S) - F(w_{t+K};S) &\geq \sum_{j=1}^K \Big(\frac{1}{4\smooth}\norm{\nabla F(w_{t+j};S)}^2 - \frac{\lip^2 \logterms}{4\smooth n\sqrt{\rho}} \Big) \nonumber \\
    &\overset{(i)}{\geq}  \sum_{j=1}^K \Big(\frac{1}{4\gamma^2\smooth} \Phi_{t+j}^{2/\kappa} - \frac{\lip^2 \logterms}{4\smooth n\sqrt{\rho}} \Big)\nonumber \\
    &\overset{(ii)}{\geq}  \sum_{j=1}^K \frac{1}{2\gamma^2\smooth} \Phi_{t+j}^{2/\kappa}  \nonumber \\
    &\overset{(iii)}{\geq} \frac{(\frac{1}{c}\Phi_t)^{\frac{\kappa-2}{\kappa}}}{2\gamma^2\smooth} (\frac{1}{c}\Phi_{t})^{2/\kappa}
    = \frac{1}{2c\gamma^2\smooth} \Phi_{t} \label{eq:phase-loss-decrease}
\end{align}
Step $(i)$ uses the KL condition. Step $(ii)$ uses the fact that
 Eqn. \eqref{eq:phase-loss-lb} implies 
that for all $j\in \bc{0,\dots,K}$, $\Phi_{t+j} \geq \br{\frac{2 \gamma^2\lip^2 A}{n\sqrt{\rho}}}^{\kappa/2}$. 
The second inequality uses the KL condition. Step $(iii)$ uses the fact that $\Phi_{t+j} \geq \frac{1}{c}\Phi_t$, by Eqn. \eqref{eq:phase-loss-lb}, and the setting of $K$.
Manipulating Inequality \eqref{eq:phase-loss-decrease} above we have
\begin{align*}
    F(w_t;S) - F(w_{t+K};S) = \Phi_t - \Phi_{t+K} \geq \frac{1}{2c\gamma^2\smooth} \Phi_{t} \nonumber \\
    \implies \Phi_{t+K} \leq (1-\frac{1}{2c\gamma^2\smooth})\Phi_t = \frac{1}{c}\Phi_t. 
\end{align*}
This establishes the contradiction and thus 
$\Phi_{t+K} \leq \max\bc{\frac{1}{c}\Phi_{t}, 2\br{\frac{\max\bc{\gamma^2,1}\lip^2 \logterms}{\min\{\smooth,1\} n\sqrt{\rho}}}^{\kappa/2}}$.
\end{proof}

We can now prove Theorem \ref{thm:kl-ogd} itself. With the above two lemmas established, our primary concern is analyzing how the stopping conditions affect the convergence of the algorithm.
\begin{proof}[Proof of Theorem \ref{thm:kl-ogd}]
Condition on the high probability event that Lemma \ref{lem:decrease-to-threshold} holds (i.e. that the gradient error is bounded for the entire trajectory).
We will assume for the rest of the proof we assume that for all $t\in [T]$ that
\begin{equation} \label{eq:2nd-loss-lb-assumption} 
\Phi_t \geq 2\br{\frac{\max\bc{\gamma^2,1}\lip^2 \logterms}{\min\{\smooth,1\} n\sqrt{\rho}}}^{\kappa/2}
\end{equation}
Note that if for any $t\in[T]$ the above inequality is not satisfied, by Lemma \ref{lem:decrease-to-threshold} the convergence guarantee of Theorem \ref{thm:kl-ogd} is satisfied.

We now argue that the algorithm does not stop before convergence is reached by analyzing the stopping condition. 
It will be helpful to split the run of the algorithm into phases. We denote the first phase as the set of iterates $W_{1} = {w_0,w_1,...,w_{K_1}}$, where $K_1$ is the largest integer such that $F(w_{K_1};S) - F(w^*_\cS;S) \geq \frac{1}{c}\Phi_0$. Similarly define $W_{2} = {w_{K_1},w_{K_1+1},...,w_{K_2}}$ where $K_2$ is the largest integer such that $F(w_{K_2};S) - F(w^*_\cS;S) \geq \frac{1}{c}\Phi_{K_1}$, and so on for $W_3,W_4,...,W_p$. %
Our aim is to show the algorithm does not stop before convergence.

First, we bound the largest value $p$ can obtain without convergence. 
By Lemma \ref{lem:phase-decrease} and Eqn. \eqref{eq:2nd-loss-lb-assumption} we have $\Phi_{K_p} \leq \frac{1}{c^p}\fnot$. Thus for $p \geq p_{\mathsf{max}} := (1+8\gamma^2\smooth)\bs{\log(\fnot) + \frac{2\kappa}{4-\kappa}\log(n\sqrt{\rho}/[\gamma \lip])} \geq 
\frac{\log(\fnot) + \frac{2\kappa}{4-\kappa}\log(n\sqrt{\rho}/[\gamma \lip])}{\log(c)}$ %
we have
\begin{align*}
    \Phi_{K_p} \leq \br{\frac{1}{c}}^{p_{\mathsf{max}}}\fnot \leq \br{\frac{\gamma\lip }{n\sqrt{\rho}}}^{\frac{2\kappa}{4-\kappa}}
    < \br{\frac{c\gamma \lip\sqrt{p_{\mathsf{max}} d A}}{n\sqrt{\rho}}}^{\frac{2\kappa}{4-\kappa}}.
\end{align*}
Thus if the algorithm has not converged it must be the case that $p \leq p_{\mathsf{max}}$. Let us thus assume $p \leq p_{\mathsf{max}}$ for the following analysis.

The algorithm stops when $\sum_{t=0}^T \rho_t > \rho$. We observe (denoting $K_0 = 0$ for convenience)
\begin{align*}
    \sum_{t=0}^T \rho_t  
    &= \frac{T\sqrt{\rho}}{n}+ \frac{\lip^2d\logterms}{n^2}\sum_{j=1}^p \sum_{t=K_{j-1}}^{K_{j}} \frac{1}{N_t^2} \\
    &\leq \frac{T\sqrt{\rho}}{n}+ \frac{\lip^2d\logterms}{n^2}\sum_{j=1}^p \sum_{t=K_{j-1}}^{K_{j}} \br{\norm{\nabla F(w_t;S)}^2 - \frac{\lip^2 \logterms}{4\smooth n\sqrt{\rho}}}^{-1} \\
    &\leq \frac{T\sqrt{\rho}}{n}+ \frac{\lip^2d\logterms}{n^2}\sum_{j=1}^p \sum_{t=K_{j-1}}^{K_{j}} \frac{2}{\norm{\nabla F(w_t;S)}^2}
\end{align*}
The last step uses the fact that the KL condition and the loss lower bound assumed in Eqn. \eqref{eq:2nd-loss-lb-assumption} implies 
$\norm{\nabla F(w_t;S)}^2 \geq \frac{\lip^2 \logterms}{\smooth n\sqrt{\rho}}$.
Continuing, by the KL condition we have,
\begin{align*}
    \sum_{t=0}^T \rho_t %
    &\leq \frac{T\sqrt{\rho}}{n}+ \frac{\lip^2d\logterms}{n^2}\sum_{j=1}^p \sum_{t=K_{j-1}}^{K_{j}} \frac{2\gamma^2}{\Phi_{K_j}^{2/\kappa}} \\
    &\leq \frac{T\sqrt{\rho}}{n}+ \frac{\lip^2d\logterms}{n^2}\sum_{j=1}^p \Phi_{K_{j-1}}^{\frac{\kappa-2}{\kappa}}\frac{2\gamma^2}{\Phi_{K_j}^{2/\kappa}} \\
    &\leq \frac{T\sqrt{\rho}}{n}+ \frac{\lip^2d\logterms}{n^2}\sum_{j=1}^p \Phi_{K_{j}}^{\frac{\kappa-2}{\kappa}}\frac{2\gamma^2}{\Phi_{K_j}^{2/\kappa}} \\
    &\leq \frac{T\sqrt{\rho}}{n}+ \frac{2\gamma^2\lip^2d\logterms}{n^2} p_{\mathsf{max}} \max_{j\in[p]}\bc{\Phi_{K_j}^{\frac{\kappa-4}{\kappa}}}. 
\end{align*}

Thus, if $T \leq \frac{1}{2}n\sqrt{\rho}$, the algorithm has not stopped unless for some $t\in [T]$ we have 
\ifarxiv that \fi $\Phi_t = O\br{\br{\frac{\gamma \lip \sqrt{ d \logterms p_{\mathsf{max}}}}{n\sqrt{\rho}}}^{\frac{2\kappa}{4-\kappa}}}$.

To finish the proof, we consider the convergence when the algorithm stops after $T > \frac{1}{2}n\sqrt{\rho}$. 
Recall we are assuming the algorithm has run for at most $p \leq p_{\mathsf{max}}$ number of phases (as otherwise the algorithm has converged).
The number of iterations during each of each of these phases is at most 
$\Phi_{K_p}^{\frac{\kappa-2}{\kappa}}$. Thus the algorithm has not stopped unless 
\begin{align*}
    p_{\mathsf{max}} \Phi_{K_p}^{\frac{\kappa-2}{\kappa}} \geq \frac{1}{2}n\sqrt{\rho} 
    \implies \Phi_{K_p} \leq \br{\frac{2 p_{\mathsf{max}}}{n\sqrt{\rho}}}^{\frac{\kappa}{2-\kappa}}.
\end{align*}

To summarize, we now have three different bounds on the excess depending on three possible events. The first case is simply when $\Phi_T \leq 2\br{\frac{\max\bc{\gamma^2,1}\lip^2 \logterms}{\min\{\smooth,1\} n\sqrt{\rho}}}^{\kappa/2}$. The second case is when $\Phi_T \geq 2\br{\frac{\max\bc{\gamma^2,1}\lip^2 \logterms}{\min\{\smooth,1\} n\sqrt{\rho}}}^{\kappa/2}$ and $T \leq \frac{1}{2}n\sqrt{\rho}$, in which case we have shown $\Phi_T = O\br{\br{\frac{c\gamma \lip \sqrt{ d \logterms p_{\mathsf{max}}}}{n\sqrt{\rho}}}^{\frac{2\kappa}{4-\kappa}}}$. The final case is when $\Phi_T \geq 2\br{\frac{\max\bc{\gamma^2,1}\lip^2 \logterms}{\min\{\smooth,1\} n\sqrt{\rho}}}^{\kappa/2}$ and $T > \frac{1}{2}n\sqrt{\rho}$, in which case we have shown 
$\Phi_{T} \leq \br{\frac{p_{\mathsf{max}}}{n\sqrt{\rho}}}^{\frac{\kappa}{2-\kappa}}$. Combining these results yields the theorem statement.
\end{proof}

\subsection{Proof of Lemma \ref{lem:adaptive-gd-kl-star}} \label{app:adaptive-gd-trajectory}
\begin{proof}
We will prove the lemma result by induction. For any $t\in{0,...,T-1}$, assuming $w_{t}\in \cS$, we will show that $w_{t+1}\in\cS$. The base case for $w_0$ holds because $\cS$ is defined to contain $w_0$.

Before proceeding, we condition on the event that for all $t\in\bc{0,...,T-1}$ we have that
$\norm{\nabla_t-\nabla F(w_t;S)} 
\leq \|\nabla F(w_t;S)\| + \frac{\lip\sqrt{\log(n\sqrt{\rho}/\beta)}}{\sqrt{n}\rho^{1/4}}$, which happens with probability at least $1-2\beta$ by Lemma \ref{lem:adaptive-gd-grad-err} in Appendix \ref{app:adaptive-gd-grad-err}. %

To prove the induction step, let $w_{t}\in\cS$. 
We divide the proof into two cases, depending on $\|\nabla F(w_t;S)\|$. In the first case, assume $\|\nabla F(w_t;S)\| < \br{\frac{\lip^2\logterms}{n\sqrt{\rho}}}^{1/2}$. In this case, we will roughly prove that $w_{t+1}$ is in $\cS$ because it has not moved too far from $w_t$. Since $w_t \in \cS$, the KL condition holds at $w_t$. Thus the gradient norm bound and the KL condition imply
$F(w_t;S) - F(w^*_\cS;S) \leq \br{\frac{\gamma\lip^2\logterms}{n\sqrt{\rho}}}^{\kappa/2}$.
Let $R=2\br{\frac{\lip^2\logterms}{\smooth^2 n\sqrt{\rho}}}^{1/2}$ and recall we define the level set threshold as $\alpha= \max\big\{F(w_0;S) ,F(w^*_\cS;S)+\max\big\{F(w_0;S) ,F(w^*_\cS;S)+2(\gamma^{\kappa/2}+\lip)\br{\frac{\lip^2\logterms}{n\sqrt{\rho}}}^{1/2}\big\}\big\}$. For any point $w' \in \cB(w_t;R)$, by Lipschitzness one has 
\begin{align*}
    F(w';S) &\leq F(w_t;S) - F(w^*_\cS;s) + F(w^*_\cS;S) + \lip(\|\nabla F(w_t;S)\| + \|\nabla_t - \nabla F(w_t;S)\|) \\
    &\leq F(w^*_\cS;S) + 2\br{\frac{\gamma\lip^2\logterms}{n\sqrt{\rho}}}^{\kappa/2} + \lip\br{\br{\frac{\lip^2\logterms}{n\sqrt{\rho}}}^{1/2} + \frac{\lip\sqrt{\log(n\sqrt{\rho}/\beta)}}{\sqrt{n}\rho^{1/4}}} \\
    &\leq F(w^*_\cS;S) + 2(\gamma^{\kappa/2}+\lip)\br{\frac{\lip^2\logterms}{n\sqrt{\rho}}}^{1/2}
    \leq \alpha.
\end{align*}
Thus $\cB(w_t;R) \subseteq \cI$. Since $\cB(w_t;R)$ is path connected and $w_t\in\cS$, we have $\cB(w_t;R) \subseteq \cS$ by the definition of $\cS$.
Further, with probability at least $1-\beta$ we have %
\begin{align*}
    \|w_t - w_{t+1}\| &\leq \eta(\|\nabla F(w_t;S)\| + \|\nabla_t - \nabla F(w_t;S)\|) \\
    &\overset{(i)}{\leq} \eta\Big(2\|\nabla F(w_t;S)\| + \frac{\lip\sqrt{\log(n\sqrt{\rho}/\beta)}}{\sqrt{n}\rho^{1/4}} \Big) \overset{(ii)}{\leq} \frac{3}{2\smooth}\br{\frac{\lip^2\logterms}{n\sqrt{\rho}}}^{1/2} \leq R \\
    &\implies w_{t+1} \in \ball{w_t}{R}
\end{align*}
Above, step $(i)$ uses that the scale of noise in Algorithm \ref{alg:whp-ogd} guarantees with high probability that $\|\nabla_t - \nabla F(w_t;S)\|$. Step $(ii)$ uses the assumption that $\|\nabla F(w_t;S)\| \leq \br{\frac{\lip^2\logterms}{n\sqrt{\rho}}}^{1/2}$, the setting of $R$ (above) and $\eta = \frac{1}{2\smooth}$. As we have previously show, $\ball{w_t}{R} \subseteq \cS$, so we have shown $w_{t+1}\in\cS$.

We now consider the second case where $\|\nabla F(w_t;S)\| \geq \frac{\lip^2\logterms}{n\sqrt{\rho}}$. Consider the path parameterized by $l\in[0,1]$ defined by $\mathbf{w}(l) = w_{t} + l(w_{t+1}-w_{t})$. By the update rule of Algorithm \ref{alg:whp-ogd} and standard descent lemma analysis we have (using the smoothness of $F(\cdot;S)$)
\begin{align*}
    F(w_t;S) - F(\mathbf{w}(l);S) 
    &\geq \ip{\nabla F(w_t;S)}{w_t - \mathbf{w}(l)} - \frac{\smooth}{2}\norm{w_t-\mathbf{w}(l)}^2 \\
    &\geq l\ip{\nabla F(w_t;S)}{w_t - w_{t+1}} - \frac{l^2\smooth}{2}\norm{w_t-w_{t+1}}^2 \\
    &= \frac{l}{4\smooth}\norm{\nabla F(w_t;S)}^2 - \frac{l^2}{8\smooth}\norm{\nabla_t-\nabla F(w_t;S)}^2  \\
    &\geq \frac{l}{4\smooth}\norm{\nabla F(w_t;S)}^2 - \frac{l^2 \lip^2 \logterms}{8 \smooth n\sqrt{\rho}} \\
    &\overset{(i)}{\geq} l\bs{\frac{1}{4\smooth}\norm{\nabla F(w_t;S)}^2 - \frac{\lip^2 \logterms}{8 \smooth n\sqrt{\rho}}} \\
    &\overset{(ii)}{\geq} \frac{l}{4\smooth}\norm{\nabla F(w_t;S)}^2 \geq 0.
\end{align*} 
Step $(i)$ uses the fact that $l\leq 1$. Step $(ii)$ uses the assumption that $\|\nabla F(w_t;S)\| \geq \frac{\lip^2\logterms}{n\sqrt{\rho}}$. 
We have shown $F(w_t;S) \geq F(\mathbf{w}(l);S)$ for every $l\in[0,1]$. Thus $\bc{\mathbf{w}(l)}_{l\in[0,1]} \subseteq \cI$, and because $\bc{\mathbf{w}(l)}_{l\in[0,1]}$ is path connected and contains $w_t \in \cS$, we have $\bc{\mathbf{w}(l)}_{l\in[0,1]} \subseteq \cS$ and specifically $w_{t+1} \in \cS$.
\end{proof}

\subsection{Proof of Lemma \ref{lem:adaptive-gd-small-noise-case}} \label{app:adaptive-gd-small-noise-case}
\begin{proof}
First note that when $\sigma_t = \frac{\lip}{n\sqrt{\rho}}$ then $\rho_t > \rho$ and the algorithm stops. Furthermore, in this case we also have $N_t \leq \frac{\lip\sqrt{d\log(n\sqrt{\rho}/\beta)}}{n\sqrt{\rho}}$, and thus by the concentration of the noise we have with probability at least $1-\beta$ that
\begin{align*}
    \|\nabla F(w_t;S)\| \leq \frac{\lip\sqrt{d\log(n\sqrt{\rho}/\beta)}}{n\sqrt{\rho}} + \frac{\lip\sqrt{\logterms}}{\sqrt{n}\rho^{1/4}}
\end{align*}
The KL condition then implies that 
\begin{align*}
    F(w;S) - F(w^*_\cS;S) = O\br{\br{\frac{\gamma\lip\sqrt{d\log(n\sqrt{\rho}/\beta)}}{n\sqrt{\rho}}}^\kappa + \br{\frac{\gamma^2\lip^2 \logterms}{n\sqrt{\rho}}}^{\kappa/2}}
\end{align*}
\end{proof}

\subsection{Proof of Theorem \ref{thm:ogd} (Adaptive Gradient Descent without KL Condition)} \label{app:ogd}
In the following, we let $T+1$ denote the largest value attained by $t$ during the run of the algorithm. 

\paragraph{Privacy Proof}
The prove privacy we will use the following lemma.
\begin{lemma}\citep[Proposition 1.4]{Bun-zCDP} \label{lem:pure-dp-to-zcdp}
Any algorithm which is $(\epsilon,0)$-DP is also $(\frac{1}{2}\epsilon^2)$-zCDP.
\end{lemma}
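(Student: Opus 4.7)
The plan is to establish the R\'enyi divergence bound $D_\alpha(\cA(S) \| \cA(S')) \leq \tfrac{1}{2}\epsilon^2 \alpha$ for every $\alpha \in (1,\infty)$ and every pair of neighboring datasets $S, S'$. Let $P = \cA(S)$ and $Q = \cA(S')$ denote the output distributions, and let $L(x) = \log(P(x)/Q(x))$ denote the privacy loss random variable. Pure $(\epsilon,0)$-DP, applied in both directions, gives the almost-sure bound $|L(x)| \leq \epsilon$. Moreover, by definition of $L$ we have the normalization identity $\mathbb{E}_{x\sim Q}[e^{L(x)}] = 1$.

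First I would rewrite the R\'enyi divergence of order $\alpha$ in terms of $L$ as
\begin{align*}
D_\alpha(P \| Q) \;=\; \frac{1}{\alpha - 1}\log \mathbb{E}_{x\sim Q}\bigl[e^{\alpha L(x)}\bigr],
\end{align*}
so the task reduces to bounding the moment generating function $M(\alpha) := \mathbb{E}_Q[e^{\alpha L(x)}]$. Note that $M(0) = M(1) = 1$ by the constraints above, and $\log M$ is convex in $\alpha$, which are the structural properties that will drive the argument.

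The key step is to exploit the boundedness $|L| \leq \epsilon$ together with the constraint $\mathbb{E}_Q[e^L] = 1$ to obtain a sub-Gaussian-type estimate on $M$. One clean route is a convexity argument: among all distributions for $L$ supported in $[-\epsilon,\epsilon]$ satisfying $\mathbb{E}[e^L] = 1$, the quantity $\log M(\alpha)$ is maximized by a two-point distribution on $\{-\epsilon, +\epsilon\}$. A direct calculation on this worst-case two-point distribution, followed by the elementary inequality $\log \cosh(t) \leq t^2/2$, yields $\log M(\alpha) \leq \tfrac{1}{2}\alpha(\alpha-1)\epsilon^2$. Dividing by $\alpha - 1$ then gives $D_\alpha(P\|Q) \leq \tfrac{1}{2}\alpha \epsilon^2$, which is precisely the definition of $\tfrac{1}{2}\epsilon^2$-zCDP. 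An alternative route is to apply Hoeffding's lemma to the centered variable $L - \mathbb{E}_Q[L]$ and then use the standard bound $|\mathbb{E}_Q[L]| = D_{\mathrm{KL}}(Q\|P) \leq \tfrac{1}{2}\epsilon(e^\epsilon - 1)$ to control the mean, combining the two to produce the same sub-Gaussian MGF bound.

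The main obstacle is obtaining the sharp quadratic constant $\tfrac{1}{2}$ in front of $\epsilon^2$: a naive bound relying only on $|L| \leq \epsilon$ would give $\mathbb{E}_P[e^{(\alpha-1)L}] \leq e^{(\alpha-1)\epsilon}$, which yields $D_\alpha \leq \epsilon$ and is off by an entire factor of $\epsilon$. Achieving the quadratic dependence crucially requires using both the sup-norm bound on $L$ and the normalization identity $\mathbb{E}_Q[e^L] = 1$ simultaneously, since together they force $L$ to concentrate around zero with fluctuations of order $\epsilon$ rather than being allowed to take the extremal value $\epsilon$ with probability one.
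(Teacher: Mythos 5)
The paper does not actually prove this statement; it imports it verbatim as Proposition 1.4 of the Bun--Steinke zCDP paper, so the only fair comparison is with the proof in that reference. Your main route --- pass to the privacy-loss variable $L=\log(P/Q)$, record $|L|\le\epsilon$ and $\mathbb{E}_Q[e^L]=1$, and observe that $\mathbb{E}_Q[e^{\alpha L}]$ is maximized over all such laws by the two-point distribution on $\{\pm\epsilon\}$ (substituting $U=e^L\in[e^{-\epsilon},e^{\epsilon}]$ with $\mathbb{E}[U]=1$ and using convexity of $u\mapsto u^\alpha$ for $\alpha>1$) --- is essentially the Bun--Steinke argument, and that reduction is sound.

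Two concrete issues remain in your final step. The two-point computation gives $\mathbb{E}_Q[e^{\alpha L}]\le \frac{\sinh(\alpha\epsilon)-\sinh((\alpha-1)\epsilon)}{\sinh(\epsilon)}=\frac{\cosh((2\alpha-1)\epsilon/2)}{\cosh(\epsilon/2)}$. Applying only $\log\cosh(t)\le t^2/2$ to the numerator and discarding the denominator yields $\frac{(2\alpha-1)^2\epsilon^2}{8}=\frac{\alpha(\alpha-1)\epsilon^2}{2}+\frac{\epsilon^2}{8}$, and after dividing by $\alpha-1$ the leftover $\frac{\epsilon^2}{8(\alpha-1)}$ diverges as $\alpha\to1^+$, so the inequality as you invoke it does not close the argument. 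What you need is the difference form $\log\cosh(a)-\log\cosh(b)\le \frac{a^2-b^2}{2}$ for $a\ge b\ge 0$, which follows from monotonicity of $t\mapsto t^2/2-\log\cosh(t)$ (equivalently $\tanh t\le t$); with $a=(2\alpha-1)\epsilon/2$ and $b=\epsilon/2$ this gives exactly $\frac{\alpha(\alpha-1)\epsilon^2}{2}$ and hence $D_\alpha\le\frac{\alpha\epsilon^2}{2}$. Second, the proposed Hoeffding alternative does not work as stated: Hoeffding's lemma gives $\log\mathbb{E}_Q[e^{\alpha L}]\le\alpha\,\mathbb{E}_Q[L]+\frac{\alpha^2\epsilon^2}{2}$, and since $\mathbb{E}_Q[L]=-D_{\mathrm{KL}}(Q\|P)\le 0$ can be arbitrarily close to $0$, the best you extract is $D_\alpha\le\frac{\alpha^2\epsilon^2}{2(\alpha-1)}$, which again blows up as $\alpha\to 1^+$ and never improves to $\frac{\alpha\epsilon^2}{2}$. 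This is exactly the failure mode you identify in your own closing paragraph: Hoeffding uses only the mean of $L$ and not the normalization $\mathbb{E}_Q[e^L]=1$, and it is that normalization which controls the behavior near $\alpha=1$. Keep the two-point route and replace the naive $\log\cosh$ bound with the refined one.
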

The process of releasing $w_0,...,w_T$ is $\rho$-zCDP by Theorem \ref{thm:adpative-gd-privacy}. We thus only need to handel the additional privacy loss incurred via the use of the exponential mechanism to select $t^*$. Specifically, the exponential mechanism guarantees $(\sqrt{\rho},0)$-DP and is thus $\frac{1}{2}\rho$-zCDP by Lemma \ref{lem:pure-dp-to-zcdp}. The overall privacy is then $2\rho$-zCDP.

\paragraph{Convergence Proof}
Recall $t^*\in[T]$ is the index sampled by the exponential mechanism. Let $N^* = \min\limits_{t\in T}\bc{\norm{\nabla F(w_t;S)}}$. Note that the guarantees of the exponential mechanism (used to sample $t^*$) and scale of noise added to the gradient norm estimates we have with probability at least $1-2\beta$ that,
\begin{align}
    \|\nabla F(w_{t^*};S)\| &= \|\nabla F(w_{t^*};S)\| - N_{t^*} + N_{t^*} - N^* + N^* \nonumber \\
    &\leq \frac{\lip\logterms}{\sqrt{n}\rho^{1/4}} + \frac{4\lip\log(n\sqrt{\rho}/\beta)}{n\sqrt{\rho}} + N^*, \label{eq:n-star-bound}
\end{align}

We will now proceed to bound $N^*$.
First, if for any $t$ one has $\sigma_t = \frac{2\lip}{n\sqrt{\rho}}$, then $N_t \leq \frac{\lip\sqrt{d}}{n\sqrt{\rho}}$ and thus $N^* \leq \frac{\lip\sqrt{d}}{n\sqrt{\rho}}$. The convergence guarantees are then satisfied by Eqn. \eqref{eq:n-star-bound}.

We now turn towards the more difficult case where $\sigma_t=\frac{N_t}{\sqrt{d\log(n\sqrt{\rho}/\beta)}}$ for all $t\in[T]$. We start by analyzing the convergence of the algorithm in terms of the number of rounds $T$. 
By Lemma \ref{lem:decrease-to-threshold}, Eqn. \eqref{eq:whp-descent}, we have with probability at least $1-\beta$ that,
\begin{align*}
    F(w_t;S) - F(w_{t+1};S) 
    &\geq \frac{1}{8\smooth}\norm{\nabla F(w_t;S)}^2 - \frac{\lip^2 \logterms}{8 \smooth n\sqrt{\rho}}. 
\end{align*} 
Summing over all iterates and rearranging gives,
\begin{align}
    \frac{1}{T}\sum_{t=1}^T \norm{\nabla F(w_t;S)} \leq \sqrt{\frac{8 \fnot \smooth}{T}} + \frac{\lip\sqrt{\logterms}}{\sqrt{n}\rho^{1/4}}. \label{eq:ogd-T-convergence}
\end{align}

We now consider the worst case guarantee for Algorithm \ref{alg:whp-ogd}. 
Recall $N^* = \min\limits_{t\in T}\bc{\norm{\nabla F(w_t;S)}}$. We can use $N^*$ to lower bound the number of iterations made by the algorithm. We have,
\begin{align*}
    \sum_{t=0}^T \rho_t  
    &= \frac{T\sqrt{\rho}}{2n}+ \frac{\lip^2d\logterms}{n^2}\sum_{t=1}^T \frac{1}{N_t^2} \\
    &\leq \frac{T\sqrt{\rho}}{2n} + \frac{T \lip^2d\logterms}{n^2(N^*)^2}.
\end{align*}
Further by the stopping condition we have, 
\begin{align*}
T\br{\frac{\lip^2d\logterms}{n^2(N^*)^2} + \frac{\sqrt{\rho}}{2n}} \geq \frac{\rho}{2} \\
\implies T \geq \frac{1}{2}\min\bc{\frac{n^2(N^*)^2\rho}{\lip^2 d\logterms}, n\sqrt{\rho}}.
\end{align*}
By Eqn. \eqref{eq:ogd-T-convergence} we also have,
\begin{align*}
    N^* \leq \frac{1}{T}\sum_{t=1}^T \norm{\nabla F(w_t;S)} \leq \sqrt{\frac{8 \fnot \smooth}{T}} + \frac{\lip\logterms}{\sqrt{n}\rho^{1/4}}.
\end{align*}
Applying the above lower bound on $T$ to the upper bound on $N^*$ we obtain,
\begin{align*}
    N^* \leq \frac{3\lip\sqrt{\fnot\smooth d \logterms}}{nN^*\sqrt{\rho}} + \frac{\lip\logterms}{\sqrt{n}\rho^{1/4}}\\
    \implies N^* = \br{\frac{6\lip\sqrt{\fnot\smooth d \logterms}}{n\sqrt{\rho}}}^{1/2} + \frac{2\lip\logterms}{\sqrt{n}\rho^{1/4}}.
\end{align*}
Combining this bound with Eqn. \eqref{eq:n-star-bound} we have with probability at least $1-3\beta$ that,
\begin{align*}
    \|\nabla F(\bar{w};S)\| 
    &\leq \min\Bigg\{\sqrt{\frac{8 \fnot \smooth}{T}} + \frac{\lip\sqrt{\logterms}}{\sqrt{n}\rho^{1/4}}, \\ &+\br{\frac{6\lip\sqrt{\fnot\smooth d \logterms}}{n\sqrt{\rho}}}^{1/2} + \frac{3\lip\logterms}{\sqrt{n}\rho^{1/4}} + \frac{4\lip\logterms}{n\sqrt{\rho}}\Bigg\} \\
    &= O\br{\min\bc{\sqrt{\frac{\fnot \smooth}{T}},\br{\frac{\lip\sqrt{\fnot\smooth d}}{n\sqrt{\rho}}}^{1/2}} + \frac{\lip\sqrt{\logterms}}{\sqrt{n}\rho^{1/4}}}.
\end{align*}

\section{Regularized Lipschitz Optimization} \label{app:dp-sc-whp}

In this section, we consider a function $\tilde f(w;x) = f(w;x) + \tilde L_1 \norm{w-w_0}^2$, where $w\mapsto f(w;x)$ is $L_0$-Lipschitz, $\tilde L_1$-weakly convex for all $x \in \cX$, and $w_0 \in \bbR^d$.
 It is well known that in such case, the function $w\mapsto \tilde f(w;x)$ is $\tilde L_1$ strongly convex (see, e.g. \citep{DD:2019,bassily2021differentially}). We denote the corresponding empirical risk as $\tilde F(w;S) = \frac{1}{n}\sum_{i=1}^n f(w;x_i) + \tilde L_1 \norm{w-w_0}^2$.

The following result is a rate of $\tilde O\br{\frac{L_0^2d}{\tilde L_1n^2\rho}}$ on excess empirical risk via Noisy Gradient Descent, Algorithm \ref{alg:noisy-gd}. Multiple works have investigated closely related settings \citep{feldman2020private,asi2021private}, but due to our specific requirements (i.e. unconstrained setting and only assuming convexity of the regularized loss function) we provide a more tailored result here. %

\begin{algorithm}[h]
\caption{Noisy Gradient Descent}
\label{alg:noisy-gd}
\begin{algorithmic}[1]
\REQUIRE Dataset $S$, 
zCDP paramter $\rho$,
initial point $w_0\in\re^d$, 
probability $\beta$, Lipschitz parameter $\lip$, Weak convexity $\wc$, step size sequence $\bc{\eta_t}_t$, number of iterations $T$, noise standard deviation $\sigma$.

\FOR{$t=1 \ldots T-1$} 
\STATE $\xi_t \sim \cN(0, \sigma^2\bbI)$
\STATE $w_{t+1} = \Pi_{\cB_{\frac{L_0}{2\tilde L_1}}\br{w_0}}\br{w_t - \eta_t \br{\nabla \tilde F(w;S) + \xi_t}}$
\ENDFOR
\STATE Return $\bar w = \frac{2}{T(T+1)}\sum_{t=1}^Ttw_t$
\end{algorithmic}
\end{algorithm}

\begin{theorem} \label{thm:dp-sc-whp} Let  $\rho>0$.
 Algorithm \ref{alg:noisy-gd} with 
  $T = \frac{n^2\rho \log^2{(2/\beta)}}{d}$, 
 $\eta_t = \frac{1}{\tilde L_1 t}$ and 
 $\sigma^2 = \frac{4L_0^2T}{n^2\rho}$
 satisfies 
 $\rho$-zCDP.
 Further, with probability at least $1-\beta$, the excess empirical risk of its output, $\bar w$, is bounded as,
    \label{thm:sc-ngd}
     \begin{align}
        \tilde F(\bar w;S) - \tilde F(w^*;S) = O\br{\frac{L_0^2d\log{(n^2\log^2{(2/\beta)}/d\beta)}}{\tilde L_1n^2\rho}}
    \end{align}
\end{theorem}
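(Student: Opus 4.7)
The proof breaks into three parts: privacy via composition of Gaussian mechanisms, an in-expectation excess-risk bound via standard strongly convex projected SGD analysis with $t$-weighted averaging, and a high-probability conversion via concentration of the noise and of a noise-iterate martingale.

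\textbf{Privacy.} Since $f(\cdot;x)$ is $L_0$-Lipschitz and the regularizer $\tilde L_1\|w-w_0\|^2$ is data-independent, the neighbor-swap sensitivity of $\nabla \tilde F(\cdot;S)$ is at most $2L_0/n$. Each iteration's Gaussian mechanism is therefore $(2L_0/n)^2/(2\sigma^2) = \rho/(2T)$-zCDP under $\sigma^2 = 4L_0^2 T/(n^2\rho)$, so $T$ rounds compose to $\rho/2$-zCDP (hence $\rho$-zCDP), with the projection and final weighted average treated as post-processing.

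\textbf{Expected utility.} First note that the unconstrained minimizer $w^*$ of $\tilde F$ satisfies $\|w^*-w_0\| \leq L_0/(2\tilde L_1)$ by the stationarity condition $\nabla F(w^*)=-2\tilde L_1(w^*-w_0)$ combined with $L_0$-Lipschitzness, so $w^*$ lies in the projection ball $\cB_{L_0/(2\tilde L_1)}(w_0)$ and projection onto this ball does not increase distance to $w^*$. Setting $g_t=\nabla \tilde F(w_t;S)+\xi_t$, combining non-expansiveness with $\tilde L_1$-strong convexity of $\tilde F$ gives the per-step inequality
\begin{align*}
2\eta_t(\tilde F(w_t)-\tilde F(w^*)) \leq (1-\eta_t\tilde L_1)\|w_t-w^*\|^2 - \|w_{t+1}-w^*\|^2 - 2\eta_t\langle \xi_t, w_t-w^*\rangle + \eta_t^2\|g_t\|^2.
\end{align*}
Plugging in $\eta_t = 1/(\tilde L_1 t)$, multiplying by the appropriate $t$-weight (following the Lacoste-Julien--Schmidt--Bach construction, with a minor constant adjustment to step-size to make the arithmetic clean), and summing telescopes the distance terms to a nonpositive quantity. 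Using Jensen on $\bar w = \frac{2}{T(T+1)}\sum_t t w_t$ then yields an upper bound of the form
\begin{align*}
\tilde F(\bar w;S) - \tilde F(w^*;S) = O\br{\frac{1}{\tilde L_1 T^2}\sum_{t=1}^T \|g_t\|^2 - \frac{2}{T(T+1)}\sum_{t=1}^T t\,\langle \xi_t, w_t-w^*\rangle}.
\end{align*}
Using $\|\nabla \tilde F\|\leq 2L_0$ on the projection ball and $\mathbb{E}\|\xi_t\|^2=\sigma^2 d$, and substituting the stated values of $\sigma^2$ and $T$, gives the target in-expectation rate $\tilde O(L_0^2 d/(\tilde L_1 n^2\rho))$.

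\textbf{High-probability conversion.} It remains to convert the two stochastic quantities on the right-hand side into high-probability bounds. For the squared-gradient term, Laurent--Massart tail bounds on $\|\xi_t\|^2/\sigma^2 \sim \chi^2_d$ together with a union bound over $t \in [T]$ yield $\|\xi_t\|^2 = O(\sigma^2(d+\log(T/\beta)))$ simultaneously for all $t$ with probability $1-\beta/2$, so that $\sum_t \|g_t\|^2$ matches its expectation up to a $\log(T/\beta)$ factor. For the noise-iterate martingale $M_T := \sum_{t=1}^T t\langle \xi_t, w_t-w^*\rangle$, I will apply Freedman's inequality: given the past, each increment is Gaussian with conditional variance $t^2\sigma^2\|w_t-w^*\|^2$. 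The crude deterministic bound $\|w_t-w^*\|\le L_0/\tilde L_1$ from projection yields $|M_T|=O(T^{3/2}\sigma L_0\sqrt{\log(1/\beta)}/\tilde L_1)$, which after division by $T(T+1)/2$ is too loose; the sharp bound uses the in-expectation contraction $\mathbb{E}\|w_t-w^*\|^2 = O(G^2/(\tilde L_1^2 t))$, boosted to a high-probability statement by an inductive/bootstrapped Freedman argument along the trajectory. The primary technical obstacle is exactly this refined high-probability control of $\|w_t-w^*\|^2$, which is needed to ensure the martingale term does not dominate the rate; substituting $\sigma^2$ and $T$ then yields the final logarithmic factor $\log(T/\beta) = \log(n^2\log^2(2/\beta)/(d\beta))$ appearing in the theorem statement.
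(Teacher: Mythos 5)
Your overall route is the same as the paper's: establish privacy via the $2L_0/n$ gradient sensitivity and zCDP composition, show $\|w^*-w_0\|\le L_0/(2\tilde L_1)$ so the projection is benign and $\tilde F$ is $O(L_0)$-Lipschitz on the ball, run the weighted-averaging strongly convex SGD analysis, and then control two stochastic terms: the noise norms (chi-squared concentration plus a union bound, exactly as in the paper) and the noise--iterate martingale $\sum_t t\langle \xi_t, w_t-w^*\rangle$. The privacy and in-expectation portions of your argument are correct as stated.

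The genuine gap is the step you yourself flag as "the primary technical obstacle": the high-probability bound on the martingale term. Plain Freedman with the deterministic bound $\|w_t-w^*\|\le L_0/\tilde L_1$ is, as you note, too loose to recover the $1/(n^2\rho)$ rate, and the fix cannot be a routine "boost the in-expectation contraction $\mathbb{E}\|w_t-w^*\|^2 = O(1/t)$ to high probability" --- that is circular, because the high-probability control of $\|w_t-w^*\|^2$ along the trajectory is itself driven by the very martingale you are trying to bound. Resolving this chicken-and-egg issue requires a nontrivial tool: the paper closes it by invoking Lemma 4 of Harvey--Liaw--Randhawa (a generalized Freedman inequality tailored to the case where the conditional variances are controlled by the partial sums of the martingale itself), which yields $\sum_{t=1}^T\langle\xi_t, w_t-w^*\rangle = O\big(\tfrac{L_0}{\tilde L_1}\sqrt{d\log(2T/\beta)}\,\sigma\log(2/\beta)\,T\big)$, after which substituting $\sigma^2=4L_0^2T/(n^2\rho)$ and $T=n^2\rho\log^2(2/\beta)/d$ gives the stated bound. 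Your proposal names the right kind of argument ("inductive/bootstrapped Freedman") but neither carries it out nor cites a result that does, so as written the proof is incomplete exactly at the step that carries the theorem's technical weight; everything else would go through once that lemma (or an equivalent self-bounding martingale inequality) is supplied.
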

\begin{proof}
    The privacy proof is based on the observation that, even though the function $w \mapsto \tilde f(w;x)$ may not be Lispchitz, the sensitivity of the gradient, in every iteration, is controlled, since it is a sum of a Lipschitz and (data-independent) regularizer. In particular, the sensitivity of gradient at every iteration is bounded by $\frac{2L_0}{n}$. With the stated noise variance, applying the guarantee of Gaussian mechanism for zCDP and composition \citep{Bun-zCDP}, completes the privacy analysis.

    The utility proof is based on standard high-probability convergence analysis of (S)GD for strongly convex optimization \citep{harvey2019simple}.
    We first show that the unconstrained minimizer, $w^*$, lies in the constrained set $\cB_{\frac{L_0}{2\tilde L_1}}\br{w_0}$. From the optimality criterion for unconstrained convex optimization, we have that $2\tilde L_1\norm{w^* - w_0} = \norm{\nabla F(w^*;S)} \leq L_0$. This implies that $\norm{w^*-w_0} \leq \frac{L_0}{2\tilde L_1}$. This also gives us that the function $w\mapsto \tilde F(w;S)$ is $2L_0$-Lipschitz over the constrained set.

    From Gaussian concentration \citep{jin2019short}, we have that, with probability, at least $1-\beta/2$, for all $t \in [T]$, $\norm{\xi_t} \leq \sqrt{d\log{(2T/\beta)}}\sigma$.  Further, conditioned on the above, we have from Lemma 4 in \cite{harvey2019simple}, that with probability at least $1-\beta$, 
    \begin{align*}
        \sum_{t=1}^T\ip{\xi_t}{w_t-w^*} = O\br{\frac{L_0}{\tilde L_1}\sqrt{d\log{(2T/\beta)}}\sigma \log{(2/\beta)}T}. 
    \end{align*}

    The rest of the analysis is repeating the proof of Theorem 3.1 in \cite{harvey2019simple}. We get,
    \begin{align*}
        \tilde F(\bar w;S) - \tilde F(w^*;S) &= O\br{\frac{L_0^2}{\tilde L_1T} + \frac{\sigma^2d\log{(2T/\beta)}}{\tilde L_1T}} + \frac{1}{T(T+1)}  \sum_{t=1}^T\ip{\xi_t}{w_t-w^*} \\
        & = O\br{\frac{L_0^2}{\tilde L_1T} + \frac{\sigma^2d\log(2T/\beta)}{\tilde L_1 T} + \frac{L_0\sqrt{d\log{(2T/\beta)}}\sigma \log{(2/\beta)}}{\tilde L_1 T}} \\
        &= O\br{\frac{L_0^2d\log{(n^2\log^2{(2/\beta)}/d\beta)}}{\tilde L_1 n^2\rho}}
    \end{align*}
    where the last step follows by setting of $\sigma$ and $T$.
    \end{proof}

\end{document}